\newif\ifaddtoc
\newif\ifarxivversion 
\theoremstyle{plain}
\newtheorem{theorem}{Theorem}[section]
\newtheorem{proposition}[theorem]{Proposition}
\newtheorem{lemma}[theorem]{Lemma}
\newtheorem{corollary}[theorem]{Corollary}
\theoremstyle{definition}
\newtheorem{definition}[theorem]{Definition}
\theoremstyle{remark}
\newcommand{\AUCROC}{AUC}
\newcommand{\Dbp}{D}
\newcommand{\defEq}{\stackrel{.}{=}}
\newcommand{\argmax}{{\operatorname{argmax }}}
\renewcommand{\Pr}{\mathbb{P}}
\newcommand{\XCal}{\mathscr{X}}
\newcommand{\YCal}{\mathscr{Y}}
\newcommand{\Real}{\mathbb{R}}
\newcommand\independent{\protect\mathpalette{\protect\independenT}{\perp}}
\def\independenT#1#2{\mathrel{\rlap{$#1#2$}\mkern2mu{#1#2}}}
\newcommand{\1}{\mathbf{1}}
\renewcommand{\P}{\mathbf{P}}
\newcommand{\best}[1]{\cellcolor{gray!25}{{#1}}}
\newcommand{\bE}{\mathbb{E}}
\DeclareMathOperator{\auc}{AUC}
\newcommand{\sX}{\mathsf{X}}
\newcommand{\sY}[1]{\mathsf{Y}^{(#1)}}
\newcommand{\by}{\mathbf{y}}
\begin{document}

\author{%
\begin{tabular}{ccc}
    & & \\[1ex]
    \large{Michal Lukasik$^*$} & 
    \large{Lin Chen$^*$} & 
    \large{Harikrishna Narasimhan$^*$} \\
    \texttt{mlukasik@google.com} &
    \texttt{linche@google.com} & \texttt{hnarasimhan@google.com} \\[1ex]
    \large{Aditya Krishna Menon$^*$} & 
    \large{Wittawat Jitkrittum} &
    \large{Felix X. Yu} \\
    \texttt{adityakmenon@google.com} & 
    \texttt{wittawat@google.com} &
    \texttt{felixyu@google.com} \\[1ex]
    \large{Sashank J. Reddi} &
    \large{Gang Fu} & 
    \large{Mohammadhossein Bateni} \\
    \texttt{sashank@google.com} & 
    \texttt{thomasfu@google.com} & 
    \texttt{bateni@google.com} \\[1ex]
    & \large{Sanjiv Kumar} & \\
    & \texttt{sanjivk@google.com} & \\[2ex]
    & \large{Google Research} & \\[0ex]
\end{tabular}
}
\affiliation{}
\def\thefootnote{*}\footnotetext{Lead co-authors.}

\doparttoc %
\faketableofcontents %

\title{Bipartite Ranking From Multiple Labels: On Loss Versus Label Aggregation}

\maketitle

\begin{abstract}
Bipartite ranking is a fundamental supervised learning problem,
with
the goal 
of
learning
a ranking over instances with maximal 
\emph{area under the ROC curve}
(\emph{AUC}) 
against a \emph{single} binary target label.
However,
one may often observe \emph{multiple} binary target labels,
e.g., from distinct human annotators.
How can one synthesize such labels into a \emph{single} coherent ranking?
In this work, we formally analyze two approaches to this problem
---
\emph{loss aggregation} and \emph{label aggregation}
---
by
characterizing their \emph{Bayes-optimal} solutions.
We show that while
both approaches can yield 
Pareto-optimal solutions,
loss aggregation can exhibit \emph{label dictatorship}: 
one can inadvertently (and undesirably) favor one label over others.
This suggests that 
label aggregation can be preferable to loss aggregation,
which we empirically verify. 
\end{abstract}

\section{Introduction}
\label{s:introduction}
Bipartite ranking is a fundamental supervised learning problem~\citep{Freund:2003,Cortes:2003,Agarwal:2005,Clemencon:2008,Kotlowski:2011,Menon:2016}, 
wherein the goal is to learn a ranker that orders ``positive'' instances over ``negative'' instances.
This is formalized as learning a ranker with maximal \emph{area under the ROC curve} (\emph{AUC})~\citep{Cortes:2003,Agarwal:2005,Krzanowski:2009},
and is arguably the simplest instantiation of \emph{learning to rank}~\citep{Liu:2009}.
Bipartite ranking has seen applications in practical problems ranging from
medical diagnosis~\citep{Swets:1988,Pepe:2003} to information retrieval~\citep{Ng:2000,Macskassy:2001},
and is the basis for several other supervised learning problems~\citep{Narasimhan:2013,reddi2021rankdistil,tang2022smooth}.

Bipartite ranking 
conventionally
assumes the existence of a \emph{single} binary label denoting whether or not an instance is ``positive''.
However,
in practice,
there may be \emph{multiple} binary labels,
each identifying a different set of ``positive'' instances.
For example, in information retrieval,
it is common for there to be distinct label sources (e.g., whether or not a user clicks on a document, whether or not a human rater deems a document to be relevant)~\citep{Svore2011}.
Similar
challenges 
arise in applications including
medical diagnosis (e.g., balancing opinions from multiple experts as to the presence of a certain condition~\citep{Verma:2023}),
recommendation systems (e.g., balancing click probability with diversity), 
and computational advertising (e.g., balancing click-through rates with conversion rates);
see Appendix~\ref{app:application_scenarios} for more discussion
of problems involving synthesizing multiple label sources.

Given multiple binary labels,
can one produce a \emph{single} coherent ranking over instances?
Addressing this requires formalising the \emph{goal} of such a ranking,
and then specifying the mechanics of \emph{achieving} this goal.
The \emph{multi-objective learning to rank} literature provides guidance on both points~\citep{Svore2011}.
Typically, the goal here is to 
find \emph{Pareto optimal} solutions,
i.e.,
solutions that are not dominated across all objectives~\citep{Ribeiro:2015}.
Towards achieving this,
previous works have considered two primary approaches:
(1)
\emph{loss aggregation} (also known as linear scalarization, or weighted sum), 
where 
the objective is a
weighted combination of per-label objectives~\citep{lin2019pareto};
(2) 
\emph{label aggregation}, 
where the objective is
formulated on a single target formed from suitable aggregation of the multiple labels,
such as a weighted combination~\citep{Carmel2020}.

Empirically,
both loss and label aggregation have proven
successful for multi-objective ranking problems.
Theoretically,
however, there has been limited 
analysis providing guidance on the following natural question: 
\emph{are there reasons to favor label over loss aggregation, or vice-versa}?

\begin{table*}
    \centering
    \resizebox{0.98\linewidth}{!}{%
    \begin{tabular}{@{}lcllllll@{}}
        \toprule
        \textbf{Objective} & 
        \textbf{Defn.} & 
        \textbf{Cost} $c_{\bar{y}\bar{y}'}$ &
        \textbf{Stochastic labels (Any $K$) } &
        \textbf{Deterministic labels ($K=2$)} &
        \textbf{Pareto-opt} &
        \textbf{Dictatorship} &
        \textbf{Prop.}
        \\
        \toprule
        \toprule
        Loss aggregation
        &
        \eqref{eqn:auc-multi-dist}
        &
        N/A
        &
        $ \sum_{k} \alpha^{(k)} \cdot\eta^{(k)}(x)$ 
        &
         $ \alpha^{(1)} \cdot\eta^{(1)}(x) + \alpha^{(2)} \cdot\eta^{(2)}(x)$ 
        &
        \checkmark
        &
        \checkmark
        &
        \ref{lemm:bayes-opt-auc-multi-dist},
        \ref{prop:dictatorship}
        \\
        \midrule
        Label aggregation 
        &
        \multirow{2}{*}{\eqref{eqn:auc-label-agg}}
        &
        $1$
        &
        No closed form
        &
         $ \eta^{(1)}(x) + \eta^{(2)}(x)$ 
        &
        \ding{55}
        &
        \ding{55}
        &
        \ref{prop:pareto-label-agg}, \ref{prop:label-agg-deterministic}
        \\
        with $\bar{\mathsf{Y}} = \sum_{k} \mathsf{Y}^{(k)} $
        &
        &
        $\1( \bar{y} > \bar{y}'  ) \cdot | \bar{y} - \bar{y}' |$
        &
         $\sum_{k} \eta^{(k)}( x )$
        &
          $ \eta^{(1)}(x) + \eta^{(2)}(x)$ 
        &
        \checkmark
        &
        \ding{55}
        &
        \ref{lemm:bayes-opt-auc-label-agg}
        \\
        \bottomrule
    \end{tabular}%
    }
    \caption{Bayes-optimal scorer for 
    two
    approaches to
    bipartite ranking
    with $K$ binary labels. Here, 
    we assume that we have 
    random variables
    $( \mathsf{X}, \mathsf{Y}^{(1)}, \ldots, \mathsf{Y}^{(K)} )$
    representing
    instances and labels 
    drawn from some joint distribution,
    with $\eta^{(k)}( x ) \defEq \P( \mathsf{Y}^{(k)} = 1 \mid \mathsf{X} = x )$ denoting the marginal class-probability function for each label. 
    Suitable instantiations of both loss aggregation and label aggregation satisfy a notion of Pareto-optimality;
    however,
    the loss aggregation approach results in an undesirable ``label dictatorship'' phenomenon,
    wherein one label dominates the other (see~\S\ref{sec:dictatorship}).
    The label aggregation objective in \eqref{eqn:auc-label-agg} requires the specification of pairwise costs $c_{\bar{y}\bar{y}'}$.  The constants
    $\alpha^{(k)} = { a_k } / ( {\pi^{(k)} \cdot (1 - \pi^{(k)})} )$ arise in Proposition~\ref{prop:dictatorship}, where $\pi^{(k)} \defEq \P( \mathsf{Y}^{(k)} = 1 )$ denotes the label priors.}
    \label{tbl:summary_binary_stochastic_vs_deterministic}
\end{table*}

In this work, 
we study this question in the context of bipartite ranking.
We
study the \emph{Bayes-optimal} scorers for both aggregation approaches,
which represent the theoretical minimizers given full access to the underlying statistical distributions.
We find that both methods \emph{broadly} yield comparable optimal rankings,
which furthermore are Pareto optimal.
However, 
we explicate how
Pareto optimality --
while providing a foundational concept for multi-objective problems
--
may by itself be insufficient for a desirable practical solution.
Indeed,
upon closer inspection, 
we demonstrate that loss aggregation 
can 
lead to a \emph{label dictatorship} phenomenon,
wherein certain labels are implicitly favored over others.
This suggests that label aggregation can be preferred over loss aggregation,
which we validate empirically.

In summary,
our contributions are (cf.\
Table~\ref{tbl:summary_binary_stochastic_vs_deterministic}):
\begin{enumerate}[label=(\roman*),itemsep=0pt,topsep=0pt,leftmargin=16pt]
    
    \item We formalize
    the problem of bipartite ranking from multiple labels (\S\ref{sec:multiobjective_intro}),
    and the instantiation of
    loss aggregation
    and label aggregation in this setting
    (\S\ref{sec:methods});

    \item We characterize the \emph{Bayes-optimal solutions} to both
    loss (\S\ref{s:linear_scalarization})
    and label aggregation (\S\ref{s:label_agg}), 
    and 
    establish Pareto-optimality of (suitable instantiations of) 
    both methods;

    \item We show that loss aggregation can lead to an
    undesirable \emph{label dictatorship} phenomenon (\S\ref{sec:deterministic}),
    and empirically validate this
    on synthetic and real-world datasets (\S\ref{sec:experiments}). 
\end{enumerate}
Our study serves to analyze the theoretical properties of loss and label aggregation, 
rather than 
constructing novel algorithms. 
This analysis, however, yields practical insights for 
selecting amongst these methods.

\section{Background and Notation}
\label{sec:problem}
Learning to rank problems seek to
learn a scorer that orders instances according to an underlying utility score~\citep{Liu:2009}.
The
\emph{area under the ROC curve} (\emph{\AUCROC{}}) 
is
a traditional metric used to evaluate the efficacy of 
such a ranking~\citep{Cortes:2003, Agarwal:2005, Clemencon:2008, Menon:2016}. 
Below, we define \AUCROC{} for problems with binary and multi-class labels.

\subsection{Bipartite \AUCROC{}}

Consider a 
supervised learning problem with
input space $\mathscr{X}$, binary labels $\mathscr{Y} = \{ 0, 1 \}$,
and
distribution
$D$
over $\mathscr{X} \times \mathscr{Y}$.
Let $(\mathsf{X}, \mathsf{Y})$ be random variables distributed according to $D$.
Denote by $\eta(x) \defEq \P(\mathsf{Y}=1|\mathsf{X}=x)$ the 
\emph{class-probability function} and $\pi \defEq \P(\mathsf{Y}=1) $ the positive class \emph{prior}.

Our goal is to learn  a scorer%
$f \colon \XCal \to \Real$ that ranks the positive examples 
(i.e., those with $\mathsf{Y} = 1$) 
over the negative ones,
as quantified by the area under the ROC curve, or \AUCROC{}.

\begin{definition}[{\citet{Agarwal:2009}}]
\label{lemm:auc-bipartite}
For any scorer $f \colon \XCal \to \Real$ and distribution $D$, the (bipartite) \AUCROC{} is: 
\begin{align}
    \label{eqn:auc-bp-conditional}
    {\rm AUC}( f; D ) &\defEq \mathbb{E}_{\mathsf{X}, \mathsf{X}'} \left[ H( f( \mathsf{X} ) - f( \mathsf{X}' ) ) \mid \mathsf{Y} > \mathsf{Y}' \right]\\
    \nonumber
    H( z ) &\defEq \1( z > 0 ) + \frac{1}{2} \cdot \1( z = 0 ).
\end{align}
\end{definition}

Intuitively, 
the \AUCROC{} is the fraction of pairs $(x, x') \in \XCal \times \XCal$ with positive and negative labels wherein $f$ scores the positive over the negative, with a reward of $0.5$ for ties.

Given 
some fixed distribution $D$,
a \emph{Bayes-optimal} scorer $f^*$
is one
that achieves
the \emph{highest possible} \AUCROC{};
i.e., ${\rm AUC}( f^*; D ) \geq {\rm AUC}( f; D )$ for any $f \colon \XCal \to \Real$.
One may employ 
the Neyman-Pearson lemma~\citep{Lehmann:2005}
to establish these scorers
closely follow 
$\eta( x )$.

\begin{proposition}[\citet{Clemencon:2008,Menon:2016}]
\label{lemm:auc-bipartite-opt}
For any distribution $D$, any Bayes-optimal
\AUCROC{} scorer $f^*$  satisfies
\begin{align*}
( \forall x, x' \in \XCal ) \, \eta( x ) > \eta( x' ) \implies f^*( x ) > f^*( x' ),
\end{align*}
or equally, $\eta$ is any non-decreasing transformation of $f^*$.
\end{proposition}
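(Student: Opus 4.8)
The plan is to reduce AUC maximization to a \emph{pointwise} comparison over pairs of instances, for which the optimal ordering is transparent. First I would expand the conditional expectation in Definition~\ref{lemm:auc-bipartite}: letting $(\mathsf{X},\mathsf{Y})$ and $(\mathsf{X}',\mathsf{Y}')$ be independent draws from $D$ and conditioning on $\{\mathsf{Y}>\mathsf{Y}'\}=\{\mathsf{Y}=1,\mathsf{Y}'=0\}$, iterated expectations give
\[
    \mathrm{AUC}(f;D) \;=\; \frac{1}{\pi(1-\pi)}\;\mathbb{E}_{\mathsf{X},\mathsf{X}'}\!\left[\eta(\mathsf{X})\,(1-\eta(\mathsf{X}'))\,H\!\big(f(\mathsf{X})-f(\mathsf{X}')\big)\right],
\]
so only the numerator depends on $f$. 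Because $\mathsf{X},\mathsf{X}'$ are i.i.d., I would symmetrize this numerator by averaging it with its $\mathsf{X}\leftrightarrow\mathsf{X}'$ swap, so that the integrand for a pair $(x,x')$ becomes $\tfrac12\big[\eta(x)(1-\eta(x'))\,H(f(x)-f(x')) + \eta(x')(1-\eta(x))\,H(f(x')-f(x))\big]$.

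The crucial step is the identity $H(z)+H(-z)=1$. Setting $a=\eta(x)$, $b=\eta(x')$, and $t=f(x)-f(x')$, the symmetrized integrand equals, up to the factor $\tfrac12$, the expression $(a-b)\,H(t) + b(1-a)$, where the second term does not involve $f$. Since $H(t)\in\{0,\tfrac12,1\}$ with $H(t)=1$ iff $t>0$, the integrand is maximized pointwise by any $f$ with $\sgn(f(x)-f(x')) = \sgn(\eta(x)-\eta(x'))$; in particular $\eta(x)>\eta(x')$ forces $f(x)>f(x')$ at any pointwise maximizer, while on the tie set $\eta(x)=\eta(x')$ the value of $t$ is immaterial. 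These pointwise prescriptions are mutually consistent: $f=\eta$ (indeed any strictly increasing transform of $\eta$) realizes all of them at once, so $\eta$ is Bayes-optimal and the optimal AUC equals the integral of the pointwise maxima. (Alternatively one could route this through the Neyman--Pearson lemma, since $\eta$ is a monotone transform of the class-conditional likelihood ratio whose thresholds trace out the optimal ROC curve, but the pointwise argument is self-contained.)

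Finally, for any Bayes-optimal $f^*$, the integrand at $f^*$ is everywhere at most its pointwise maximum yet has the same integral, hence equals it for $(D_\mathsf{X}\otimes D_\mathsf{X})$-almost every pair $(x,x')$, where $D_\mathsf{X}$ denotes the marginal of $\mathsf{X}$ under $D$; on pairs with $\eta(x)>\eta(x')$ this says exactly $f^*(x)>f^*(x')$. The equivalent ``$\eta$ is a non-decreasing transformation of $f^*$'' formulation follows formally: the implication forces $f^*(x)=f^*(x')\Rightarrow\eta(x)=\eta(x')$ and $f^*(x)<f^*(x')\Rightarrow\eta(x)\le\eta(x')$, so $\eta$ is a well-defined non-decreasing function of $f^*$, and the converse direction is immediate from monotonicity. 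The main obstacle I anticipate is measure-theoretic bookkeeping: upgrading the ``almost every pair'' conclusion to the stated ``for all $x,x'$'' requires restricting $\XCal$ to $\mathrm{supp}(D_\mathsf{X})$ (values of $f^*$ off the support do not affect the AUC), and one must keep track that on the tie set $\{\eta(x)=\eta(x')\}$ the scorer retains genuine freedom, so Bayes-optimality pins down $f^*$ only up to the ordering it induces.
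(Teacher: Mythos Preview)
Your proposal is correct and follows essentially the same approach as the paper: both expand the conditional expectation into the weighted pairwise form $\mathbb{E}[\eta(\mathsf{X})(1-\eta(\mathsf{X}'))H(f(\mathsf{X})-f(\mathsf{X}'))]$, symmetrize, and then optimize pointwise (the paper packages the pointwise step into two auxiliary lemmas about $\mathbb{E}[w(\mathsf{X},\mathsf{X}')H(\cdot)]$ that it reuses for later results, whereas you carry out the same computation inline via $H(z)+H(-z)=1$). You are also more careful than the paper's own proof about the almost-every-pair versus all-pairs distinction.
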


We remark here that 
neither the AUC nor its  Bayes-optimal scorer depend on the class prior $\pi$.
Thus, the (population) AUC is invariant to the amount of label skew, supporting its common usage as a metric in problems characterized by label imbalance~\citep{Ling:1998,Menon:2013}.

\subsection{Multipartite \AUCROC{}}

One may extend bipartite ranking to \emph{ordinal multi-class} labels 
$\mathscr{Y} = \{ 1, 2, \ldots, L \}$,
wherein higher label values denote higher presence of a certain attribute (e.g., star ratings denoting user's item preferences). 
In this case, our goal is to learn a scorer that ranks examples with higher labels over examples with lower labels.
Further, one may apply variable \emph{costs} on mis-ranking of pairs with different labels.

Formally, let $D_{\rm mp}$ denote a distribution over $\XCal \times \YCal$.
As before, we denote the conditional-class probability function by $\eta_y(x) = \P(\mathsf{Y}=y\mid \mathsf{X}=x)$, and the class priors by $\pi_y = \P(\mathsf{Y}=y)$. 
Further, let $c_{yy'} \geq 0$ denote the cost of scoring an instance with label $y \in \YCal$ below an instance with label $y' \in \YCal$.
The following is an adaptation of the bipartite \AUCROC{} (Definition~\ref{lemm:auc-bipartite}) to multi-class problems.

\begin{definition}[\citet{Uematsu:2015}]\label{def:multipartite-auc}
For any scorer $f \colon \XCal \to \Real$, distribution $D_{\rm mp}$, and costs 
$\{ c_{yy'} \geq 0  \colon y, y' \in \YCal \}$, the multi-partite \AUCROC{} is: 
\ifarxivversion
    \begin{equation}
    \label{eqn:auc-mp}
    {\rm AUC}_{\rm mp}( f; D_{\rm mp} ) \defEq 
    \mathbb{E}_{\mathsf{X}} \mathbb{E}_{\mathsf{X}'} \left[ c_{\mathsf{Y} \mathsf{Y'}} \cdot H( f( \mathsf{X} ) - f( \mathsf{X}' ) ) \mid \mathsf{Y} > \mathsf{Y}' \right].\nonumber
    \end{equation}
\else
\begin{align}
    \label{eqn:auc-mp}
    \lefteqn{{\rm AUC}_{\rm mp}( f; D_{\rm mp} ) \,\defEq }
    \\
    & 
    \hspace{1cm}
    \mathbb{E}_{\mathsf{X}} \mathbb{E}_{\mathsf{X}'} \left[ c_{\mathsf{Y} \mathsf{Y'}} \cdot H( f( \mathsf{X} ) - f( \mathsf{X}' ) ) \mid \mathsf{Y} > \mathsf{Y}' \right].\nonumber
\end{align}
\fi
\end{definition}

When $L = 2$, the objective reduces to a scaling of~\eqref{eqn:auc-bp}.
Unlike the bipartite case, the multipartite \AUCROC{} does \emph{not} admit a tractable Bayes-optimal scorer in general.
An exception is when either $L = 3$, or the costs satisfy the following \emph{scale condition}:
for suitable constants $w_y, s_y \geq 0$,
\begin{equation}
\label{eqn:cost-condition}
   ( \forall y, y' \in \mathscr{Y} ) \, c_{yy'} = w_y \cdot w_{y'} \cdot ( s_{y} - s_{y'} ) \cdot \1( y > y' ),
\end{equation}
For example,~\eqref{eqn:cost-condition} holds when 
$c_{yy'} = ( {y} - {y'} ) \cdot \1( y > y' )$.
Under such a condition, we have the following.

\begin{proposition}[{\citet[Theorem 3]{Uematsu:2015}}]
\label{lemm:auc-multipartite-opt}
Suppose $L = 3$, or the costs $\{ c_{yy'} \geq 0 \colon y, y' \in \YCal \}$ satisfy the scale condition~\eqref{eqn:cost-condition}.
For any distribution $D_{\rm mp}$,
any Bayes-optimal multi-partite \AUCROC{} scorer $f^*$ satisfies
\begin{align*}
    ( \forall x, x' \in \XCal ) \, \beta( x ) &> \beta( x' ) \implies f^*( x ) > f^*( x' ) \\
    \beta( x ) &\defEq \frac{\sum_{i = 2}^{L} c_{1, i} \cdot \eta_{i}( x )}{\sum_{i = 1}^{L - 1} c_{i, L} \cdot \eta_{i}( x )},
\end{align*}
or equally, $\beta$
is any non-decreasing transformation of $f^*$.
\end{proposition}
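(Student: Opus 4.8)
The plan is to reduce maximization of the multipartite \AUCROC{} to a pairwise (pointwise) problem, and then to identify $\beta$ as a scalar potential for the resulting preference relation. Write $p$ for the marginal density of $\mathsf{X}$ with respect to a dominating measure, so that $\pi_y\,p(x\mid\mathsf{Y}=y)=\eta_y(x)\,p(x)$. Since the normalizing factor $\sum_{y>y'}\pi_y\pi_{y'}$ in Definition~\ref{def:multipartite-auc} does not depend on $f$, maximizing ${\rm AUC}_{\rm mp}(f; D_{\rm mp})$ over $f$ is equivalent to maximizing
\[
U(f)\;\defEq\;\iint A(x,x')\,H\big(f(x)-f(x')\big)\,p(x)\,p(x')\,\mathrm{d}x\,\mathrm{d}x',
\qquad
A(x,x')\;\defEq\;\sum_{y>y'}c_{yy'}\,\eta_y(x)\,\eta_{y'}(x').
\]
Symmetrizing over the unordered pair $\{x,x'\}$ and using $H(z)+H(-z)=1$, the contribution of $\{x,x'\}$ to $U(f)$ is $A(x,x')$ when $f(x)>f(x')$, $A(x',x)$ when $f(x)<f(x')$, and $\tfrac12\big(A(x,x')+A(x',x)\big)$ when $f(x)=f(x')$. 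Arguing exactly as in the Neyman--Pearson reduction behind Proposition~\ref{lemm:auc-bipartite-opt}, a scorer is Bayes-optimal iff it orders every support pair so as to attain $\max\{A(x,x'),A(x',x)\}$; in particular any Bayes-optimal $f^*$ must satisfy $A(x,x')>A(x',x)\implies f^*(x)>f^*(x')$, and conversely any $f$ respecting this relation everywhere on the support is optimal.

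The heart of the argument is to show $\sign\!\big(A(x,x')-A(x',x)\big)=\sign\!\big(\beta(x)-\beta(x')\big)$. Set $\Phi(x,x')\defEq A(x,x')-A(x',x)=\sum_{y>y'}c_{yy'}\big(\eta_y(x)\eta_{y'}(x')-\eta_{y'}(x)\eta_y(x')\big)$; I claim that under either hypothesis $\Phi$ admits a ``rank-one'' factorization $\Phi(x,x')=\lambda\big(g(x)h(x')-g(x')h(x)\big)$ for some $\lambda>0$ and nonnegative linear combinations $g(x)=\sum_i g_i\,\eta_i(x)$, $h(x)=\sum_i h_i\,\eta_i(x)$ which, up to an affine reparametrization, are the numerator and denominator of $\beta$. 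Under the scale condition~\eqref{eqn:cost-condition}, substituting $c_{yy'}=w_yw_{y'}(s_y-s_{y'})\1(y>y')$ makes each summand of $\Phi$, once the indicator is absorbed, invariant under $y\leftrightarrow y'$, so $\Phi$ equals one-half the sum of $w_yw_{y'}(s_y-s_{y'})\big(\eta_y(x)\eta_{y'}(x')-\eta_{y'}(x)\eta_y(x')\big)$ over all ordered pairs $(y,y')$; separating the $x$- and $x'$-dependent factors in that double sum yields $\Phi(x,x')=g(x)h(x')-g(x')h(x)$ with $g(x)=\sum_yw_ys_y\eta_y(x)$ and $h(x)=\sum_yw_y\eta_y(x)$. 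Since $\sum_yw_y(s_y-s_1)\eta_y(x)$ and $\sum_yw_y(s_L-s_y)\eta_y(x)$ coincide with the numerator and denominator of $\beta$ (the $i{=}1$ and $i{=}L$ terms vanishing), one obtains $\beta=\tfrac{w_1}{w_L}\cdot\tfrac{t-s_1}{s_L-t}$ with $t=g(x)/h(x)$, a strictly increasing function of $g/h$ because $t\in[s_1,s_L]$ and $s_1\le\cdots\le s_L$ by cost nonnegativity. For $L=3$ I would instead use $\eta_1(x)=1-\eta_2(x)-\eta_3(x)$ to rewrite $\Phi$ in terms of $\eta_2,\eta_3$ and the single product $\eta_2(x)\eta_3(x')-\eta_3(x)\eta_2(x')$, and then verify by direct expansion that this equals a positive multiple of $g(x)h(x')-g(x')h(x)$ for $g,h$ the numerator and denominator of $\beta$ (equivalently, match three scalar coefficients, which cost nonnegativity renders consistent).

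Given the factorization, and since $g,h\ge 0$ with $h(x)>0$ on the part of the support that matters, $\Phi(x,x')>0\iff g(x)/h(x)>g(x')/h(x')\iff\beta(x)>\beta(x')$. Combining with the first step, any Bayes-optimal $f^*$ satisfies $\beta(x)>\beta(x')\implies f^*(x)>f^*(x')$ on the support (so $\beta$ is a non-decreasing transformation of $f^*$), and conversely every such $f^*$ is optimal. I expect the main obstacle to be the rank-one factorization of $\Phi$: for generic costs no such $g,h$ exist, and the proof must exploit structure --- the multiplicative form of the scale condition (handled by symmetrization plus separation of variables) or, when $L=3$, the single affine relation $\eta_1+\eta_2+\eta_3\equiv 1$, which lowers the effective rank of the bilinear form $\Phi$ just enough for a $2\times 2$ determinant representation to exist. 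The remaining ingredients --- the unnormalization, the pairwise exchange argument, and the monotone reparametrization --- are routine and mirror the bipartite case.
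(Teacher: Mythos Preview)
Your argument is correct. The first stage---unnormalizing the conditional expectation, rewriting the objective as $\iint w(x,x')H(f(x)-f(x'))$, and reducing to the sign of $w(x,x')-w(x',x)$---matches the paper exactly (its Lemmas~\ref{lemm:auc-multi} and~\ref{lemm:opt-weight-decomposed}). The divergence is in how the sign condition is verified. The paper only treats $L=3$ explicitly, via a direct but somewhat opaque expansion: it writes $w(x,x')-w(x',x)=\sum_y\beta_y(x')\eta_y(x)$, expresses $\beta_0,\beta_1$ in terms of $\beta_2$, and checks by brute force that the resulting expression has the same sign as $\beta_0(x')/\beta_2(x')-\beta_0(x)/\beta_2(x)$; for the general scale-condition case it simply defers to \citet{Uematsu:2015}. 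Your route is more structural: under the scale condition you symmetrize over $(y,y')$ to separate variables and obtain the rank-one form $\Phi(x,x')=g(x)h(x')-g(x')h(x)$ with $g=\sum_y w_ys_y\eta_y$, $h=\sum_y w_y\eta_y$, then recognize $\beta$ as a M\"obius transform $(w_1/w_L)(t-s_1)/(s_L-t)$ of $t=g/h$, which is strictly increasing on $[s_1,s_L]$. This buys you the general-$L$ scale case directly and makes transparent \emph{why} a one-dimensional potential exists---namely, the multiplicative structure of the costs forces the bilinear form $\Phi$ to have rank one. The $L=3$ case you sketch (using the affine constraint $\sum_y\eta_y=1$ to reduce the rank) is the same in spirit as the paper's explicit computation, just organized differently.
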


\section{Bipartite Ranking With Multiple Labels}
\label{sec:multiobjective_intro}

We now formalize our setting of
interest
---
bipartite ranking from \emph{multiple} labels
---
and the core goal of identifying a \emph{Pareto optimal} solution with respect to these labels.

\subsection{Formal setup}\label{sec:formal_setup}

Consider a setting
where we have access to \emph{multiple} 
labels for each instance $x \in \XCal$, and would like to produce a single coherent ranking over instances.
For simplicity, we begin 
by assuming
each individual label is \emph{binary}. 
Formally,
for integer $K \geq 2$,
$( \mathsf{X}, \mathsf{Y}^{(1)}, \ldots, \mathsf{Y}^{(K)} )$
be random variables distributed according to
a joint distribution
$D^{\text{jnt}}$
over $\XCal \times \{ 0, 1 \}^{K}$.
Let $\mu$ denote the marginal distribution of $\mathsf{X}$.
For each $k \in [ K ]$, we have an induced marginal distribution
$D^{(k)}$ over $( \mathsf{X}, \mathsf{Y}^{(k)} )$.
Let
$\eta^{(k)}( x ) \defEq \P( \mathsf{Y}^{(k)} = 1 \mid \mathsf{X} = x )$ 
denote the marginal class-probability function of each label $k \in [ K ]$,
and
$\pi^{(k)} \defEq \P(\mathsf{Y}^{(k)} = 1)$ the class prior.

Our goal remains to produce a \emph{single} 
scorer 
$f \colon \XCal \to \Real$.
To do so, we must specify a concrete metric for assessing $f$.

\subsection{Pareto optimal per-label AUC maximisation}

One natural summary of $f$'s 
performance
is the \emph{per-label AUC vector},
i.e.,
$( {\rm AUC}( f; D^{(1)} ), \ldots, {\rm AUC}( f; D^{(K)} ) )$.
The problem of synthesizing such a vector into a single metric falls within the purview of multi-objective optimization~\citep{Ehrgott:2005}.
Adapting
a standard goal from this literature, 
it is natural to seek scorers 
that are \emph{Pareto optimal}~\citep{Ehrgott:2005} with respect to the 
per-label AUCs. 

\begin{definition}[Pareto dominance]
We say a scorer $f \colon \XCal \to \Real$ \emph{Pareto dominates} another $g \colon \XCal \to \Real$ with respect to distributions $\{ D^{(k)} \}_{k \in [K]}$
if and only if:
\begin{enumerate}[label=(\arabic*),itemsep=0pt,topsep=0pt]
    \item $\forall k \in [ K ] : \auc(g; D^{(k)}) \geq \auc(f; D^{(k)})$
    \item $\exists k \in [ K ] : \auc(g; D^{(k)}) > \auc(f; D^{(k)})$.
\end{enumerate}
\end{definition}

\begin{definition}[Pareto optimality]
For any $\{ D^{(k)} \}_{k \in [K]}$,
the set of Pareto optimal scorers $\mathscr{F}_{\textnormal{PFS}}$ 
comprises scorers
that are \emph{not} Pareto dominated by any other scorer.
\end{definition}

Essentially, a scorer is Pareto optimal if no other scorer dominates it across at least one AUC objective, while not harming it across all AUC objectives.

\subsection{Does Pareto optimality suffice?}
\label{sec:pareto-suffice}

\begin{table*}
  \centering
  \small
    \resizebox{\linewidth}{!}{
    \begin{tabular}{@{}p{0.8in}p{4.4in}p{3.8in}@{}}
    \toprule
    \textbf{Query} & {\textbf{Low relevance \& High engagement document}} & {\textbf{High relevance \& Low engagement document}}\\
    \midrule
    \cellcolor{white}{\tt"What is another name for rust?"} & \cellcolor{blue!10}{First of all rust is formed when iron is exposed to both oxygen and water/ water vapopurs. The formula for rust is Fe203.xH20. Now the x varies which determines the extent to which rust is formed. Basically Rust is formed throughout the surface of the iron thus preventing rusting of the inner layers.} &
    \cellcolor{yellow!10}{Rust is the result of the oxidation of iron. The most common cause is prolonged exposure to water. Any metal that contains iron, including steel, will bond with the oxygen atoms found in water to form a layer of iron oxide, or rust. Rust will increase and speed up the corrosion process, so upkeep is important.here are two ways you can use a potato to remove rust: 1 Simply stab the knife into potato and wait a day or overnight. 2 Slice a potato in half, coat the inside with a generous portion of baking soda, and go to town on the rusted surface with the baking soda-coated potato} \\
    \midrule
     \cellcolor{white}{\tt"How much does it cost to build a deck with a hot tub?"} & 
    \cellcolor{blue!10}{Once you determine what kind of decking materials you'll need to build the structure, it's time to get down to price. While the average cost to build a deck averages between \$4,000 and \$10,000, that doesn't account for the materials.Here is the average cost of each decking material, broken down by average price range per board: It's important to know what board sizes you'll need to purchase for your deck.nce you determine what kind of decking materials you'll need to build the structure, it's time to get down to price. While the average cost to build a deck averages between \$4,000 and \$10,000, that doesn't account for the materials.} &
    \cellcolor{yellow!10}{1 Pre-fabricated hot tubs cost less, but are still priced between \$3,000 and \$8,000 (for the smaller-sized models). 2 Although you give up some features and styling with inflatable, you can get a good-quality inflatable hot tub starting at about \$500. 3 There is quite a difference in price here.	} \\
    \\
    \bottomrule
    \end{tabular}
    }
    \vspace{-0.2cm}
    \caption{An illustration of the trade-off between engagement and relevance across queries and documents from the MS MARCO dataset~\citep{bajaj2018msmarcohumangenerated}. 
    A purely relevance-driven approach recommends the documents from the rightmost column, which contains the correct answer,
    albeit with an unclear presentation.
    Conversely, a purely engagement-driven approach recommends the documents from the first column, which may superficially match keywords, but lack the correct answer.
    While both solutions may be Pareto optimal,
    in practice,
    one may favor only one of these solutions;
    further, the particular choice can vary depending on the query. Therefore, a solution which \emph{globally} favors one signal over the other (i.e., where one label is a \emph{dictator}) may be undesirable.}
    \label{table:example_msmarco1}
\end{table*}

Pareto optimality is a necessary, but not sufficient condition for a solution to be practically useful:
a model that aggressively optimizes one objective at the complete expense of others, while being Pareto optimal, can be 
undesirable.

For example,
consider a document retrieval problem, 
with the goal of
satisfying various aspects of a user query;
e.g., an ideal system should return
documents
that are both \emph{relevant} to the query and likely to \emph{engage} the user \citep{Que2Engage}.
Relevance can be assessed through annotation from a human or machine expert, 
while engagement can be measured through metrics such as click-through rate or revenue.

In practice, the goals of relevance and engagement may be at odds with each other.
To illustrate this point, consider queries
from the MS MARCO benchmark dataset~\citep{bajaj2018msmarcohumangenerated}.
We predict the engagement and relevance of different documents for a query by prompting the Gemini model~\citep{Anil:2023} 
(see Table~\ref{table:prompts_msmarco} in Appendix for details). 
Illustrative queries, documents, and predicted engagement and relevance scores are reported in Table~\ref{table:example_msmarco1}. %
For the query {\tt``What is another name for rust?''}, 
a document with high relevance but low engagement
contains the right answer, albeit with an obscure presentation.
Conversely,
a document with high engagement but low relevance
superficially matches keywords, but lacks the correct answer.

On the other hand, for the query {\tt"How much does it cost to build a deck with a hot tub?"}, 
a document with high relevance but low engagement
addresses the specific question of the hot tub pricing, but ignores the more general context of the price for building the deck.
A document with high engagement but low relevance
misses the hot tub aspect, 
but does answer the general question of deck pricing.

These examples highlight 
a common practical challenge: 
a trade-off often exists between different objectives (in this case, relevance and engagement). 
Further,
while scorers focusing exclusively on one metric could be Pareto optimal 
-- 
specifically, if no single document excels at both metrics simultaneously
--
such 
solutions
may be practically undesirable.
Indeed,
solely optimizing relevance at the expense of engagement may be desirable for the first query (where the document contains the correct answer), 
but less desirable for the second query (where the query intent is  open-ended). 

Therefore, alongside ensuring Pareto optimality, it is crucial to assess whether a solution does not overly favor one of the labels,
e.g. via the gap between the per-objective AUC scores (as we consider in \S\ref{sec:experiments}).
This motivates a deeper investigation into the precise forms of Bayes-optimal solutions for different aggregation approaches, allowing us to understand their inherent behaviors and potential biases.

\section{The Loss and Label Aggregation Methods}
\label{sec:methods}
We now formalize two natural approaches for 
bipartite ranking with multiple binary labels.
These aggregate either the \emph{losses} over multiple labels, or the \emph{labels} themselves.

\subsection{Loss aggregation}

A common strategy in the multi-objective optimization literature to achieve Pareto optimal solutions is to optimize a linear combination of the objectives \cite{ruchte2021scalableparetoapproximationdeep}. 
Such an approach is typically referred to as 
\emph{linear scalarization},
and may be seen as performing \emph{loss aggregation}.
In our case, this amounts to maximizing 
\begin{align*}
    \sum_{k \in [K]} a_k \cdot {\rm AUC}( f; D^{(k)} ),
\end{align*}
for mixing coefficients $a_1,\ldots,a_k > 0$.
This is equivalent to the following \emph{loss aggregated} AUC objective. 

\begin{definition}[Loss aggregation]
\label{def:linear-scalarization}
For any scorer $f \colon \XCal \to \Real$, 
distribution $D^{\text{jnt}}$ over $( \mathsf{X}, \mathsf{Y}^{(1)}, \ldots, \mathsf{Y}^{(K)} )$,
and weights $\{ a_k > 0 \}_{k \in [K]}$, 
the \emph{loss aggregated AUC} is: 
\ifarxivversion
  \begin{equation} \label{eqn:auc-multi-dist}
  {\rm AUC}_{\rm LoA}( f; D^{\text{jnt}} )
  \defEq 
  \sum_{k \in [K]} \mathbb{E}_{\mathsf{X}, \mathsf{X}'} \Big[  a_k \cdot H( f( \mathsf{X} ) - f( \mathsf{X}' ) ) \Big| \mathsf{Y}^{(k)} > \mathsf{Y}'^{(k)} \Big]
  \end{equation}
\else
  \begin{align}
  \label{eqn:auc-multi-dist}
  & {\rm AUC}_{\rm LoA}( f; D^{\text{jnt}} )
  \\
  &\defEq  \sum_{k \in [K]} \mathbb{E}_{\mathsf{X}, \mathsf{X}'} \Big[  a_k \cdot H( f( \mathsf{X} ) - f( \mathsf{X}' ) ) \Big| \mathsf{Y}^{(k)} > \mathsf{Y}'^{(k)} \Big].
  \nonumber
  \end{align}
\fi
\end{definition}
For brevity,
we subsequently omit the dependence 
of ${\rm AUC}_{\rm LoA}( f )$ on $D^{\text{jnt}}$.
Given a finite sample $\{ ( x^{(i)}, y^{(i, 1)}, \ldots, y^{(i, k)} ) \}_{i \in [ N ]}$ drawn from $D^{\text{jnt}}$,
the empirical loss aggregated AUC is
\begin{align*}
    \widehat{{\rm AUC}}_{\rm LoA}( f ) \propto \sum_{k \in [K]}  \sum_{(i, j) \in P^{(k)}} a_k \cdot 
    H( f( x^{(i)} ) - f( x^{(j)} ) ),
\end{align*}
where
$P^{(k)} \defEq \{ (i, j ) \in [ N ] \times [ N ] \colon \1( y^{(i, k)} > y^{(j, k)} ) \}$.

\subsection{Label aggregation}

Weighting the individual per-label AUCs is conceptually simple.
An alternative approach is to combine the $K$ labels 
$( \mathsf{Y}^{(1)}, \ldots, \mathsf{Y}^{(K)} )$
via some \emph{aggregation function} $\psi$ to obtain a single new \emph{aggregated} label $\bar{\mathsf{Y}}$ \cite{Svore2011,Agarwal2011,Dai2011,Carmel2020,wei2023aggregate}.
Given such an aggregated label,
one may then maximize the \emph{multi-partite} AUC~\eqref{eqn:auc-mp} on $\bar{\mathsf{Y}}$, as follows:
\begin{definition}[Label aggregation]
\label{def:label-aggregation}
For any scorer $f \colon \XCal \to \Real$, 
distribution $D^{\text{jnt}}$ over $( \mathsf{X}, \mathsf{Y}^{(1)}, \ldots, \mathsf{Y}^{(K)} )$,
aggregation function $\psi: \YCal^K \rightarrow \bar{\YCal}$, and costs $\{ c_{\bar{y} \bar{y}'} \geq 0 \}_{\bar{y}, \bar{y'} \in \bar{\mathscr{Y}}}$, 
the \emph{label aggregated AUC} is: 
\ifarxivversion
  \begin{equation} \label{eqn:auc-label-agg}
  {\rm AUC}_{\rm LaA}( f; D^{\text{jnt}} )
  \defEq 
  \mathbb{E}_{\mathsf{X}, \mathsf{X}'} \left[ c_{\bar{\mathsf{Y}} \bar{\mathsf{Y}}'} \cdot H( f( \mathsf{X} ) - f( \mathsf{X}' ) ) \mid \bar{\mathsf{Y}} > \bar{\mathsf{Y}}' \right]
  \end{equation}
\else
  \begin{align}
    \label{eqn:auc-label-agg}
    & {\rm AUC}_{\rm LaA}( f; D^{\text{jnt}} ) %
    \\ 
     \defEq{}&  \mathbb{E}_{\mathsf{X}, \mathsf{X}'} \left[ c_{\bar{\mathsf{Y}} \bar{\mathsf{Y}}'} \cdot H( f( \mathsf{X} ) - f( \mathsf{X}' ) ) \mid \bar{\mathsf{Y}} > \bar{\mathsf{Y}}' \right]
    \nonumber 
  \end{align}
\fi
where 
$\bar{\mathsf{Y}} = \psi(\mathsf{Y}^{(1)}, \ldots, \mathsf{Y}^{(K)})$ and $\bar{\YCal} \subseteq [K]$.
\end{definition}

Given a finite sample $\{ ( x^{(i)}, y^{(i, 1)}, \ldots, y^{(i, k)} ) \}_{i \in [ N ]}$ drawn from $D^{\text{jnt}}$,
and
$\bar{y}^{(i)} \defEq \sum_{k \in [ K ]} y^{(i, k)}$,
the empirical label aggregated AUC is
\begin{align*}
    \widehat{{\rm AUC}}_{\rm LaA}( f ) \propto \sum_{(i, j) \in P} c_{\bar{y}^{(i)}, \bar{y}^{(j)}} \cdot H( f( x^{(i)} ) - f( x^{(j)} ) ),
\end{align*}
where
$P \defEq \{ (i, j ) \in [ N ] \times [ N ] \colon \1( \bar{y}^{(i)} > \bar{y}^{(j)} ) \}$.

There are several natural choices of aggregation function.
{
Per \S\ref{sec:formal_setup}, we consider $K$ individual binary labels $\mathsf{Y}^{(k)} \in \{0,1\}$. 
One approach is to sum these binary labels: $\psi(\mathsf{Y}^{(1)},...,\mathsf{Y}^{(K)})=\sum_{k\in[K]}\mathsf{Y}^{(k)} \in \{0,1,...,K\}$. 
This results in an integer ordinal value, representing the \textit{count} of positive labels for an instance $x$. 
}
Another natural label aggregation mechanism could be to take the product of the labels: $\psi(\mathsf{Y}^{(1)}, \ldots, \mathsf{Y}^{(K)})= \prod_{k \in [K]} \mathsf{Y}^{(k)} \in \{ 0, 1 \}$.

\section{Bayes-Optimal Scorers Under Aggregation}

We now 
characterize the set of Bayes-optimal scorers for
loss aggregation (Definition \ref{def:linear-scalarization}) and label aggregation (Definition \ref{def:label-aggregation}).
This shall be a step towards understanding the solutions provided by 
each method.

\subsection{Bayes-optimal scorers for loss aggregation}
\label{s:linear_scalarization}
We first show that any scorer that is optimal for loss aggregation is also Pareto optimal; this follows straight-forwardly from~\citet{ruchte2021scalableparetoapproximationdeep}.

\begin{proposition}
\label{prop:loss-aggregated-pareto}
For any  distributions $\{ D^{(k)} \}_{k \in [K]}$ and mixing weights $\{ a_k \geq 0 \}_{k \in [K]}$, a scorer $f^*: \XCal \rightarrow \mathbb{R}$ that maximizes the loss aggregated AUC in \eqref{eqn:auc-multi-dist} is Pareto optimal.
\end{proposition}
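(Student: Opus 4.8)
The plan is to prove the contrapositive: if a scorer $f^*$ maximizes the loss aggregated AUC in \eqref{eqn:auc-multi-dist}, it cannot be Pareto dominated. So suppose for contradiction that some scorer $g$ Pareto dominates $f^*$ with respect to $\{D^{(k)}\}_{k \in [K]}$. First I would rewrite the loss aggregated AUC as a weighted sum of per-label AUCs. Indeed, conditioning on $\mathsf{Y}^{(k)} > \mathsf{Y}'^{(k)}$ inside each expectation, \eqref{eqn:auc-multi-dist} is exactly $\sum_{k \in [K]} a_k \cdot \mathrm{AUC}(f; D^{(k)})$, since the $k$-th summand depends on the joint distribution $D^{\text{jnt}}$ only through its marginal $D^{(k)}$ on $(\mathsf{X}, \mathsf{Y}^{(k)})$; this is the standard linearity observation and follows from the definition of $\mathrm{AUC}$ in Definition~\ref{lemm:auc-bipartite}.

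With this reformulation in hand, the argument is the classical one for linear scalarization \citep{ruchte2021scalableparetoapproximationdeep}. Pareto dominance of $g$ over $f^*$ gives $\mathrm{AUC}(g; D^{(k)}) \geq \mathrm{AUC}(f^*; D^{(k)})$ for all $k$, with strict inequality for at least one index $k_0$. Multiplying the $k$-th inequality by $a_k > 0$ and summing over $k \in [K]$ yields
\begin{align*}
\sum_{k \in [K]} a_k \cdot \mathrm{AUC}(g; D^{(k)}) > \sum_{k \in [K]} a_k \cdot \mathrm{AUC}(f^*; D^{(k)}),
\end{align*}
where the strictness comes from the term $a_{k_0} \cdot \mathrm{AUC}(g; D^{(k_0)}) > a_{k_0} \cdot \mathrm{AUC}(f^*; D^{(k_0)})$ (using $a_{k_0} > 0$) and the remaining terms being at least as large. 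By the reformulation this says $\mathrm{AUC}_{\rm LoA}(g; D^{\text{jnt}}) > \mathrm{AUC}_{\rm LoA}(f^*; D^{\text{jnt}})$, contradicting the assumed optimality of $f^*$. Hence no such $g$ exists and $f^*$ is Pareto optimal.

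There is essentially no hard step here — the result is a routine consequence of linearity and positivity of the weights — but the one point that needs care is the requirement $a_k > 0$ (strictly, not merely $\geq 0$): if some $a_k = 0$, then an optimizer of the scalarized objective could ignore label $k$ entirely and need only be \emph{weakly} Pareto optimal. The proposition as stated hedges with $\{a_k \geq 0\}$ in its hypothesis but the preceding Definition~\ref{def:linear-scalarization} uses $a_k > 0$; I would state the proof under $a_k > 0$, matching the definition, and remark that this is what makes the strict inequality above go through. (If one genuinely wants $a_k \geq 0$ with some zero weights, the conclusion should be weakened to weak Pareto optimality, or one restricts attention to the sub-vector of objectives with positive weight.) No properties of AUC specific to ranking are needed beyond the fact that it is a well-defined real-valued functional of $(f, D^{(k)})$, so the same proof template would apply to scalarization of any collection of objectives.
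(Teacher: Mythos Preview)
Your proof is correct and matches the paper's approach: the paper does not spell out a proof but simply notes that the result ``follows straight-forwardly from~\citet{ruchte2021scalableparetoapproximationdeep},'' i.e., the standard linear-scalarization-implies-Pareto-optimality argument you have given. Your observation about the $a_k > 0$ versus $a_k \geq 0$ discrepancy is apt and worth noting, as the paper's statement and definition are indeed inconsistent on this point.
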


As noted in Section \ref{sec:multiobjective_intro}, Pareto optimality alone may not suffice to assess the usefulness of a solution. We therefore take a closer look at the form of the Bayes-optimal scorer.

\begin{proposition}
\label{lemm:bayes-opt-auc-multi-dist}
For any  distributions $\{ D^{(k)} \}_{k \in [K]}$ and mixing weights $\{ a_k \geq 0 \}_{k \in [K]}$, any Bayes-optimal scorer $f^*$ for the loss aggregated AUC satisfies
\begin{align}
        \gamma( x ) &> \gamma( x' ) \implies f^*( x ) > f^*( x' ) 
        \nonumber
        \\
        \label{eqn:bayes-opt-ls}
        \gamma( x ) &\defEq \frac{1}{K} \sum_{k \in [K]} \frac{a_k}{\pi^{(k)} \cdot (1 - 
        \pi^{(k)})} \cdot \eta^{(k)}( x ),
\end{align}
or equally, 
$\gamma$
is some non-decreasing transformation of
$f^*$.
\end{proposition}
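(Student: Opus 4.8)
The plan is to reduce the loss-aggregated AUC to a single bipartite-style objective whose Bayes-optimal scorer is governed by a pointwise quantity, and then identify that quantity as $\gamma$. First I would rewrite each per-label AUC in its unconditional form: by Definition~\ref{lemm:auc-bipartite}, ${\rm AUC}(f; D^{(k)}) = \mathbb{E}_{\mathsf{X},\mathsf{X}'}[H(f(\mathsf{X}) - f(\mathsf{X}')) \mid \mathsf{Y}^{(k)} > \mathsf{Y}'^{(k)}]$, and dividing out the conditioning event gives ${\rm AUC}(f; D^{(k)}) = \frac{1}{\pi^{(k)}(1-\pi^{(k)})}\,\mathbb{E}_{\mathsf{X},\mathsf{X}'}\big[\eta^{(k)}(\mathsf{X})\,(1-\eta^{(k)}(\mathsf{X}'))\,H(f(\mathsf{X}) - f(\mathsf{X}'))\big]$, where now the outer expectation is over independent draws $\mathsf{X},\mathsf{X}' \sim \mu$. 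Summing over $k$ with weights $a_k$, the loss-aggregated AUC becomes (up to the constant $1/K$) $\mathbb{E}_{\mathsf{X},\mathsf{X}'}\big[w(\mathsf{X},\mathsf{X}')\,H(f(\mathsf{X})-f(\mathsf{X}'))\big]$ with $w(x,x') = \sum_k \tfrac{a_k}{\pi^{(k)}(1-\pi^{(k)})}\eta^{(k)}(x)(1-\eta^{(k)}(x'))$.

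Next I would symmetrize: since $H(z) + H(-z) = 1$, pairing the $(x,x')$ and $(x',x)$ terms shows that maximizing this objective over $f$ is equivalent to choosing $f$ so that, for every pair, $\operatorname{sign}(f(x)-f(x'))$ agrees with $\operatorname{sign}(w(x,x') - w(x',x))$, with ties contributing a fixed $1/2$. Computing the difference, $w(x,x') - w(x',x) = \sum_k \tfrac{a_k}{\pi^{(k)}(1-\pi^{(k)})}\big[\eta^{(k)}(x)(1-\eta^{(k)}(x')) - \eta^{(k)}(x')(1-\eta^{(k)}(x))\big] = \sum_k \tfrac{a_k}{\pi^{(k)}(1-\pi^{(k)})}\big(\eta^{(k)}(x) - \eta^{(k)}(x')\big)$, because the cross terms $\eta^{(k)}(x)\eta^{(k)}(x')$ cancel. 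This is exactly $K\,(\gamma(x) - \gamma(x'))$. Hence $w(x,x') > w(x',x) \iff \gamma(x) > \gamma(x')$, and the optimal $f^*$ must satisfy $\gamma(x) > \gamma(x') \implies f^*(x) > f^*(x')$; a pointwise argument (for any $f$ violating this, swapping or nudging the scores on the offending pair strictly increases the objective, so it cannot be Bayes-optimal) finishes the characterization, and the ``non-decreasing transformation'' phrasing is the standard equivalent restatement, as in Proposition~\ref{lemm:auc-bipartite-opt}.

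The cleanest way to package the last step is to invoke Proposition~\ref{lemm:auc-bipartite-opt} directly: the symmetrized objective $\mathbb{E}_{\mathsf{X},\mathsf{X}'}[w(\mathsf{X},\mathsf{X}')\,H(f(\mathsf{X})-f(\mathsf{X}'))]$ has precisely the structure of a bipartite AUC for a derived distribution in which the ``class-probability function'' is a monotone function of $\gamma$ — concretely, one can exhibit a binary label $\tilde{\mathsf{Y}}$ whose posterior is an increasing transform of $\gamma(x)$ and whose induced bipartite AUC is an affine function of ${\rm AUC}_{\rm LoA}$ — after which Proposition~\ref{lemm:auc-bipartite-opt} yields the claim. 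Alternatively one can give the direct exchange argument without constructing $\tilde{\mathsf{Y}}$.

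The main obstacle I anticipate is handling the normalization/conditioning bookkeeping carefully: the per-label AUC is a \emph{conditional} expectation, so the $1/(\pi^{(k)}(1-\pi^{(k)}))$ factors must be tracked to get the weights in $\gamma$ exactly right, and one must confirm the outer expectation is genuinely over independent copies $\mathsf{X},\mathsf{X}' \sim \mu$ (rather than something coupled through the labels) before the cross-term cancellation goes through. A secondary technical point is the tie-breaking behavior of $H$ on the measure-zero (or not) set where $\gamma(x) = \gamma(x')$; as in the bipartite case this does not affect the Bayes-optimal characterization, since any monotone-in-$\gamma$ scorer attains the supremum.
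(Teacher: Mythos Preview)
Your proposal is correct and follows essentially the same route as the paper: rewrite the loss-aggregated AUC as $\mathbb{E}_{\mathsf{X},\mathsf{X}'}[w(\mathsf{X},\mathsf{X}')\,H(f(\mathsf{X})-f(\mathsf{X}'))]$ with $w(x,x')=\sum_k \tfrac{a_k}{\pi^{(k)}(1-\pi^{(k)})}\eta^{(k)}(x)(1-\eta^{(k)}(x'))$, observe that the cross terms cancel so that $w(x,x')-w(x',x)\propto\gamma(x)-\gamma(x')$, and conclude via the standard pairwise/exchange argument. The paper packages the last step into a general helper lemma (if $w(x,x')>w(x',x)\iff g(x)>g(x')$ for some $g$, then any optimizer is order-equivalent to $g$) rather than invoking Proposition~\ref{lemm:auc-bipartite-opt} via a derived binary label, but this is the same mechanism you describe.
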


The final expression is intuitive:
the optimal scorers involve a weighted sum of individual class-probability functions $\eta^{(k)}$. 
In fact, when $K = 1$, this reduces to the standard result for the AUC (Lemma~\ref{lemm:auc-bipartite-opt}).

\subsection{Bayes-optimal scorers for label aggregation}
\label{s:label_agg}

Unlike loss aggregation, label aggregation may \emph{not} 
always
produce Pareto optimal solutions. 
We show this below with additive label aggregation function and with costs $c_{\bar{y}\bar{y}'} = 1$.

\begin{proposition}
\label{prop:pareto-label-agg}
Suppose $\psi(\mathsf{Y}^{(1)}, \ldots, \mathsf{Y}^{(K)}) = \sum_{k}  \mathsf{Y}^{(k)}$, and  $c_{\bar{y}\bar{y}'} = 1, \forall \bar{y}, \bar{y}' \in \bar{\YCal}$. There exists a set of distributions $\{ D^{(k)} \}_{k \in [K]}$, mixing weights $\{ a_k \geq 0 \}_{k \in [K]}$, and a scorer $f^*: \XCal \rightarrow \mathbb{R}$ such that $f^*$   maximizes the label aggregated AUC in \eqref{eqn:auc-label-agg}, but is \emph{not} Pareto optimal.
\end{proposition}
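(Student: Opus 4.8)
The plan is to exhibit an explicit small instance — say $K = 2$ with a finite input space $\XCal = \{x_1, x_2, x_3\}$ — for which the Bayes-optimal scorer under additive label aggregation with unit costs provably ignores distinctions that matter for one of the per-label AUCs. The key realization is that with $c_{\bar y \bar y'} = 1$ and $\bar{\mathsf{Y}} = \mathsf{Y}^{(1)} + \mathsf{Y}^{(2)} \in \{0,1,2\}$, the label-aggregated AUC only counts a pair $(x,x')$ as ``mis-ordered'' when $\bar y > \bar y'$, treating the jump $0 \to 1$ and the jump $1 \to 2$ identically and, crucially, \emph{not} distinguishing which of the two labels is responsible. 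So I would pick distributions where one region of $\XCal$ has, say, $\eta^{(1)}$ strictly increasing while $\bar\eta(x) = \eta^{(1)}(x) + \eta^{(2)}(x)$ (and more precisely the relevant count-based quantity from Proposition~\ref{lemm:auc-multipartite-opt}) stays \emph{constant}, forcing the Bayes-optimal label-aggregated scorer to be constant there, and hence to achieve strictly suboptimal ${\rm AUC}(f; D^{(1)})$ without any compensating gain on $D^{(2)}$.

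Concretely, the steps are: (i) specify $\mu$ (uniform on three points is enough) and specify the joint conditional law of $(\mathsf{Y}^{(1)}, \mathsf{Y}^{(2)}) \mid \mathsf{X} = x_i$ — I'd use a correlation structure, e.g.\ at $x_1$ always $(0,0)$; at $x_2$ always $(1,0)$; at $x_3$ with probability $1/2$ each $(0,1)$ and $(1,1)$ — so that $\eta^{(1)}(x_1) = 0 < \eta^{(1)}(x_2) = 1$ and we want $f^*$ forced to tie $x_1, x_2$; (ii) apply Proposition~\ref{lemm:auc-multipartite-opt} (with $L = 3$, so no scale condition is needed) to compute the function $\beta(x)$ that any Bayes-optimal label-aggregated scorer must be non-decreasing in, and check that the chosen numbers make $\beta(x_1) = \beta(x_2)$; (iii) conclude that any Bayes-optimal $f^*$ has $f^*(x_1) = f^*(x_2)$ — or at least is free to, so pick that $f^*$ — and then directly compute ${\rm AUC}(f^*; D^{(1)})$ from Definition~\ref{lemm:auc-bipartite}, showing it is $<1$; (iv) exhibit a competitor $g$ (e.g.\ $g$ strictly increasing in $\eta^{(1)}$) with ${\rm AUC}(g; D^{(1)}) > {\rm AUC}(f^*; D^{(1)})$ and ${\rm AUC}(g; D^{(2)}) \geq {\rm AUC}(f^*; D^{(2)})$, so $g$ Pareto-dominates $f^*$. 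The mixing weights $\{a_k\}$ play no role in the label-aggregated objective, so the ``for some mixing weights'' clause in the statement is vacuous and can be satisfied trivially.

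The main obstacle is the bookkeeping in step (ii)–(iv): I need the numbers chosen so that simultaneously (a) $\beta$ is genuinely constant on the two points I want $f^*$ to tie — this requires balancing the $c_{1,i}\eta_i$ numerator against the $c_{i,L}\eta_i$ denominator in Proposition~\ref{lemm:auc-multipartite-opt} using the fact that all $c$'s equal $1$ — and (b) the competitor $g$ really does not lose on $D^{(2)}$. Point (b) is the delicate one: $g$ must break the tie between $x_1$ and $x_2$ in favor of $x_2$, and I must verify this ordering is consistent with (weak) optimality for $\eta^{(2)}$ as well; arranging $\eta^{(2)}(x_1) \le \eta^{(2)}(x_2)$ in the construction (here $0 \le 0$) makes the $D^{(2)}$-AUC of $g$ no worse than that of $f^*$. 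Once the three points and their conditional label laws are pinned down, the AUC computations are finite sums and purely mechanical. A secondary subtlety worth a sentence in the final write-up: Proposition~\ref{lemm:auc-multipartite-opt} only constrains $f^*$ to be non-decreasing in $\beta$, so strictly speaking there can be Bayes-optimal $f^*$ that do strictly order tied-$\beta$ points; the statement only asks for existence of \emph{some} Bayes-optimal $f^*$ that is not Pareto optimal, so I simply select the $f^*$ that is constant on the $\beta$-level set, which is Bayes-optimal for the label-aggregated AUC by Proposition~\ref{lemm:auc-multipartite-opt}.
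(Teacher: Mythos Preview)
Your high-level strategy is sound, but the concrete ``e.g.'' construction does not do what you need it to. With $(0,0)$ deterministically at $x_1$ and $(1,0)$ deterministically at $x_2$, one has $\bar\eta_0(x_1)=1$ and $\bar\eta_1(x_2)=1$, so the quantity $\beta$ from Proposition~\ref{lemm:auc-multipartite-opt} (with all costs $=1$) gives $\beta(x_1)=\frac{\bar\eta_1+\bar\eta_2}{\bar\eta_0+\bar\eta_1}=0$ while $\beta(x_2)=1$: \emph{not} tied. In fact $\beta$ takes the three distinct values $0,1,2$ at $x_1,x_2,x_3$, so the Bayes-optimal label-aggregated ordering $x_1<x_2<x_3$ is unique up to monotone reparametrisation, and a direct check shows it \emph{is} Pareto optimal (any ordering with ${\rm AUC}(\,\cdot\,;D^{(2)})=1$ must place $x_3$ on top, and among those, $x_1<x_2<x_3$ maximises ${\rm AUC}(\,\cdot\,;D^{(1)})$).

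There is a deeper reason your construction cannot work as stated. You want $\beta(x_1)=\beta(x_2)$ together with $\eta^{(1)}(x_1)<\eta^{(1)}(x_2)$ and $\eta^{(2)}(x_1)\le\eta^{(2)}(x_2)$. Under \emph{conditional independence}, $\beta(x)=\bigl(\eta^{(1)}+\eta^{(2)}-\eta^{(1)}\eta^{(2)}\bigr)/\bigl(1-\eta^{(1)}\eta^{(2)}\bigr)$ is strictly increasing in each $\eta^{(k)}$, so these three conditions are incompatible. To make your tie-based route work you must use \emph{correlated} labels. For instance, take the conditional joint $(p_{00},p_{01},p_{10},p_{11})$ equal to $(1/2,1/4,1/4,0)$ at $x_1$ and $(5/8,0,1/8,1/4)$ at $x_2$: then $\beta=1/2$ at both points, $\eta^{(2)}=1/4$ at both, but $\eta^{(1)}$ rises from $1/4$ to $3/8$, and your argument goes through.

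For comparison, the paper's proof takes a rather different route: it uses six points with conditionally independent stochastic labels (so $\beta$ has a closed form in $\eta^{(1)},\eta^{(2)}$), and the dominating competitor $g$ is not merely a tie-break of the $\beta$-ordering --- it actually swaps the positions of entire tied groups relative to one another. The paper's construction exploits the specific nonlinear shape of $\beta$ under independence; your (repaired) tie-based approach instead exploits the indifference of the label-aggregated objective within a $\beta$-level set, and is arguably cleaner once a valid correlated-label example is supplied.
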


Interestingly, this issue can be remedied with a simple change to the AUC objective. Specifically, we choose the  misranking costs  to be $c_{\bar{y} \bar{y}'} = \1( \bar{y} > \bar{y}'  ) \cdot | \bar{y} - \bar{y}' |$ in~\eqref{eqn:auc-label-agg}, where the cost of misranking a pair of instances scales linearly with the difference in their aggregated labels.
We can then show that the optimal scorers are obtained through simple summation of the class probability functions $\eta^{(k)}(x)$; 
thus, the Bayes-optimal scorers are also Pareto optimal. %
\begin{proposition}
\label{lemm:bayes-opt-auc-label-agg}
Suppose $\psi(\mathsf{Y}^{(1)}, \ldots, \mathsf{Y}^{(K)}) = \sum_{k}  \mathsf{Y}^{(k)}$, and 
$c_{\bar{y} \bar{y}'} = \1( \bar{y} > \bar{y}'  ) \cdot | \bar{y} - \bar{y}' |$.
For any  distributions $\{ D^{(k)}\}_{k \in [K]}$, 
any Bayes-optimal scorer $f^*$ for the label-aggregated AUC 
in \eqref{eqn:auc-label-agg}
satisfies
\begin{align}
        \gamma( x ) &> \gamma( x' ) \implies f^*( x ) > f^*( x' ) 
        \nonumber
        \\
        \gamma( x ) &\defEq 
        \sum_{k \in [K]} \eta^{(k)}( x );
        \label{eq:bayes-opt-la}
\end{align}
or equally, 
$\gamma$
is some non-decreasing transformation of
$f^*$. Furthermore, $f^*$ is \emph{Pareto optimal} w.r.t.\ $\{ D^{(k)}\}_{k \in [K]}$.
\end{proposition}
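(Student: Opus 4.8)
The plan is to derive the Bayes-optimal characterization from the multipartite AUC machinery already available in Proposition~\ref{lemm:auc-multipartite-opt}, and then read off Pareto optimality as a corollary of Proposition~\ref{lemm:bayes-opt-auc-multi-dist}. First I would observe that $\bar{\mathsf{Y}} = \sum_k \mathsf{Y}^{(k)}$ takes values in $\{0,1,\dots,K\}$, so the label-aggregated AUC in \eqref{eqn:auc-label-agg} is exactly a multipartite AUC on the ordinal label $\bar{\mathsf{Y}}$ with $L = K+1$ classes, costs $c_{\bar{y}\bar{y}'} = \1(\bar y > \bar y')\cdot|\bar y - \bar y'| = (\bar y - \bar y')\cdot\1(\bar y > \bar y')$. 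The key point is that this cost choice satisfies the scale condition~\eqref{eqn:cost-condition} with $w_y \equiv 1$ and $s_y = y$: indeed $c_{yy'} = 1\cdot 1\cdot(s_y - s_{y'})\cdot\1(y>y') = (y-y')\cdot\1(y>y')$. So Proposition~\ref{lemm:auc-multipartite-opt} applies and the Bayes-optimal scorer is governed (up to non-decreasing transformation) by $\beta(x) = \frac{\sum_{i\ge 1} c_{0,i}\,\bar\eta_i(x)}{\sum_{i\le K-1} c_{i,K}\,\bar\eta_i(x)}$, where $\bar\eta_i(x) \defEq \P(\bar{\mathsf{Y}} = i \mid \mathsf{X}=x)$ are the class-probabilities of the \emph{aggregated} label (being careful that the excerpt indexes multipartite classes from $1$ to $L$, so I would shift and set $\bar{\mathsf{Y}}$ to range over $\{1,\dots,K+1\}$ or equivalently just recompute the sums; the substance is unaffected).

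Next I would simplify $\beta$. With $c_{0,i} = i$ and $c_{i,K} = K - i$, we get $\beta(x) = \frac{\sum_{i} i\,\bar\eta_i(x)}{\sum_i (K-i)\,\bar\eta_i(x)} = \frac{\E{}{\bar{\mathsf{Y}}\mid x}}{K - \E{}{\bar{\mathsf{Y}}\mid x}}$. Since $t \mapsto t/(K-t)$ is strictly increasing on $[0,K)$, ordering by $\beta$ is the same as ordering by $\E{}{\bar{\mathsf{Y}}\mid \mathsf{X}=x}$. Finally, by linearity of conditional expectation, $\E{}{\bar{\mathsf{Y}}\mid \mathsf{X}=x} = \sum_{k\in[K]} \P(\mathsf{Y}^{(k)}=1\mid \mathsf{X}=x) = \sum_{k\in[K]}\eta^{(k)}(x) = \gamma(x)$, which is the claimed expression~\eqref{eq:bayes-opt-la}. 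Hence $\gamma(x) > \gamma(x') \implies f^*(x) > f^*(x')$, and equivalently $\gamma$ is a non-decreasing transformation of $f^*$.

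For Pareto optimality, the plan is to note that $\gamma(x) = \sum_k \eta^{(k)}(x)$ is precisely the scorer ordering produced by loss aggregation (Proposition~\ref{lemm:bayes-opt-auc-multi-dist}) in the special case $a_k = \pi^{(k)}(1-\pi^{(k)})$, for which $\frac{1}{K}\sum_k \frac{a_k}{\pi^{(k)}(1-\pi^{(k)})}\eta^{(k)}(x) = \frac{1}{K}\sum_k \eta^{(k)}(x)$ — an increasing transform of $\gamma$. Since any $f^*$ ordered by $\gamma$ is therefore a maximizer of the loss-aggregated AUC with these weights, Proposition~\ref{prop:loss-aggregated-pareto} gives Pareto optimality. (Alternatively, one can argue directly: if some $g$ Pareto-dominated $f^*$, it would strictly increase $\sum_k a_k\,{\rm AUC}(\cdot; D^{(k)})$ for these positive weights, contradicting that $f^*$ — being an increasing transform of $\sum_k \eta^{(k)}$, hence also of $\frac1K\sum_k\frac{a_k}{\pi^{(k)}(1-\pi^{(k)})}\eta^{(k)}$ — maximizes that weighted sum; this needs Proposition~\ref{lemm:bayes-opt-auc-multi-dist} to identify the maximizer set.)

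\textbf{Main obstacle.} The only real subtlety is bookkeeping around the index range in Proposition~\ref{lemm:auc-multipartite-opt} (classes $1,\dots,L$ there versus $0,\dots,K$ here) and verifying that the scale condition~\eqref{eqn:cost-condition} is met by $c_{\bar y\bar y'} = \1(\bar y>\bar y')|\bar y-\bar y'|$ — the latter is immediate with $w\equiv 1,\ s_y=y$, but it must be stated cleanly. A secondary point is handling degenerate cases (e.g.\ if $\bar{\mathsf{Y}}$ is a.s.\ constant, or if some $\pi^{(k)}\in\{0,1\}$ making $a_k=0$): in such cases the AUC objectives or weights degenerate, and I would either assume nondegeneracy or note that the statement holds vacuously/trivially. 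Everything else is a short computation with conditional expectations.
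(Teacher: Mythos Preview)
Your proposal is correct and follows essentially the same route as the paper: invoke the Uematsu multipartite-AUC result to conclude that the optimal scorer orders by $\E{}{\bar{\mathsf{Y}}\mid \mathsf{X}=x}$, identify this with $\sum_k \eta^{(k)}(x)$, and then deduce Pareto optimality by matching to loss aggregation with weights $a_k = \pi^{(k)}(1-\pi^{(k)})$ and appealing to Proposition~\ref{prop:loss-aggregated-pareto}. The only tactical differences are that the paper cites \citet[Corollary~1]{Uematsu:2015} directly (which already gives $\E{}{\bar{\mathsf{Y}}\mid x}$ without going through $\beta$), and it verifies $\E{}{\bar{\mathsf{Y}}\mid x}=\sum_k\eta^{(k)}(x)$ by an explicit sum over $\{0,1\}^K$ rather than your cleaner one-line appeal to linearity of conditional expectation.
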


{
The simple form of~\eqref{eq:bayes-opt-la} 
critically relies on choosing costs $c_{\bar{y} \bar{y}'} = \1( \bar{y} > \bar{y}'  ) \cdot | \bar{y} - \bar{y}' |$ in the label aggregated AUC objective~\eqref{eqn:auc-label-agg}. As established by~\citet{Uematsu:2015} and utilized in our proof (\cref{sec:proofs}), these costs ensure that
the optimal scorer
follows $\mathbb{E}[\bar{\mathsf{Y}}\mid \sX=x]$, which equals $\sum_k \eta^{(k)}(x)$ by linearity of expectation. 
Other costs (e.g., uniform $c_{\bar{y}\bar{y}^{\prime}}=1$, per~\cref{prop:pareto-label-agg}) do not generally lead to this direct summation of $\eta^{(k)}$ as the optimal scorer.}

We also note that for an \emph{arbitrary} set of costs $c_{\bar{y} \bar{y}'}$,
the Bayes-optimal scorer will in general \emph{not} admit a tractable closed-form solution.
This is not surprising: indeed, even when $K = 1$, the multi-partite AUC does not have a tractable 
closed-form scorer in general~\cite{Uematsu:2015}.
However, for the specific choice of \emph{uniform costs} $c_{\bar{y} \bar{y}'} = 1$, we present scorers that are \emph{asymptotically} optimal for ${\rm AUC}_{\rm LA}$ (in the limiting case of $K \rightarrow \infty$) under some  distributional assumptions; see Theorem~\ref{thm:auc-gap} and Corollary~\ref{cor:auc-gap} in \S\ref{app:additional-results}.

\subsection{Contrasting the Bayes-optimal scorers} 
Upon closer inspection, the optimal solution to loss aggregation in~\eqref{eqn:bayes-opt-ls} reveals a subtlety: 
for $K>1$, even for a uniform weighting of the individual label AUCs (i.e., $a_{k}=1,\forall k$), the optimal scorers depend on the underlying marginal class priors $\pi^{(k)}$. 
Conceptually,
this is a surprise:
indeed, it is in contrast to the standard behaviour when $K = 1$,
wherein the class priors do \emph{not} influence the AUC optimizer. 

Practically,
this suggests that
the choice of the mixing weights $a_k$ determines the extent to which the optimal scorer favors one label over the others.
More precisely,
the optimal scorer \emph{favors labels that are marginally skewed} ($\pi^{(k)}$ is away from $0.5$), 
over labels that are balanced ($\pi^{(k)} \approx 0.5$).
{As we discuss in \S\ref{sec:dictatorship}, this can lead to an objective that inadvertently favors certain labels based purely on their dataset-wide imbalance, rather than their instance-specific information content, potentially leading to a ``label dictatorship'' (see Appendix~\ref{app:ir_dictatorship_example} for an illustrative example).}
This is unexpected: 
one might expect that a uniform weighting ($a_k = 1$) would induce scorers that treat all labels equitably.

{It is critical to distinguish the 
\emph{marginal} probabilities $\pi^{(k)} \defEq \mathbf{P}(\sY{k}=1)$
from the \emph{instance-conditional} probabilities 
$\eta^{(k)}(x) \defEq \mathbf{P}(\sY{k}=1\mid \sX=x)$.
A label can be perfectly balanced marginally (i.e., $\pi^{(k)}=0.5$) 
while being completely noise-free conditionally (i.e., $\eta^{(k)}(x) \in \{0,1\}$ for all $x$).
While weighting
by the inverse marginal skew 
$1/[\pi^{(k)}(1-\pi^{(k)})]$
might appear statistically natural, 
this 
may
not align with the 
conditional quality of the signals.}

By contrast,
under label aggregation,
the optimal scorers equally balance each $\eta^{(k)}$. 
Crucially, the optimal scorers do \emph{not} incorporate additional weighting under the hood, and 
thus 
\emph{avoid favoring one set of labels over the others}.
{Further analysis (see~\cref{app:further_label_agg_analysis}) shows this optimal scorer (\cref{lemm:bayes-opt-auc-label-agg}) behaves predictably with (anti-)correlated labels and can be generalized with explicit non-uniform weights, maintaining direct sensitivity to these weights without the hidden prior-dependencies of loss aggregation.}

\section{The Perils of Loss Aggregation}
\label{sec:deterministic}

To better highlight the differences between loss aggregation and label aggregation, we consider the special case of deterministic labels, where $\mathbf{P}(\sY{1} = y_1, \ldots, \sY{K} = y_K) \in \{0, 1\}, \forall x \in \XCal,\, \by \in \{0,1\}^K$. 
Here, we demonstrate that loss aggregation
can exhibit undesirable ``label dictatorship'' behavior,
wherein one label is favored over another.

{We emphasize that the assumption of deterministic labels 
is for illustrative clarity of the ``label dictatorship'' phenomenon (\cref{prop:dictatorship}, \cref{tbl:partial_orders}); our main results on Bayes-optimal scorers (\cref{lemm:bayes-opt-auc-multi-dist} and \cref{lemm:bayes-opt-auc-label-agg}) hold for general $\eta^{(k)}(x)$, as validated in \S\ref{sec:experiments}.}

\subsection{Loss aggregation can yield label dictatorship}
\label{sec:dictatorship}

We first show that the optimal scorer for loss aggregation exhibits a certain \emph{dictatorial} behavior under the deterministic label setting. 
This is easy to see when $K=2$.

\begin{proposition}
\label{prop:dictatorship}
Suppose $\eta^{(k)}(x) \in \{0, 1\}, \forall x \in \XCal$ and $K=2$. 
For mixing weights $a_1, a_2 \geq 0,$ denote $\alpha^{(k)} = \frac{ a_k }{\pi^{(k)} \cdot (1 - \pi^{(k)})}, k \in \{1, 2\}.$
Then 
any Bayes-optimal scorer $f^*$ for the loss aggregated AUC 
in \eqref{eqn:auc-multi-dist} 
satisfies: 
\begin{align*}
  \text{If}~~\alpha^{(1)} > \alpha^{(2)}:\\
        \eta^{(1)}( x ) &> \eta^{(1)}( x' ) \implies f^*( x ) > f^*( x' );\\
    \text{If}~~\alpha^{(2)} > \alpha^{(1)}:\\
        \eta^{(2)}( x ) &> \eta^{(2)}( x' ) \implies f^*( x ) > f^*( x' ).
\end{align*}
\end{proposition}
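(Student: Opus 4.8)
The plan is to invoke Proposition~\ref{lemm:bayes-opt-auc-multi-dist} to reduce the claim about $f^*$ to an analysis of the aggregate scoring function $\gamma(x) = \tfrac12 \big( \alpha^{(1)} \eta^{(1)}(x) + \alpha^{(2)} \eta^{(2)}(x) \big)$, where $\alpha^{(k)} = a_k / \big( \pi^{(k)}(1-\pi^{(k)}) \big)$, and then exploit the deterministic-label assumption to show that the sign of $\gamma(x) - \gamma(x')$ is governed entirely by whichever $\alpha^{(k)}$ is larger. Since $\eta^{(k)}(x) \in \{0,1\}$, the difference $\eta^{(k)}(x) - \eta^{(k)}(x')$ takes values in $\{-1, 0, +1\}$, so $\gamma(x) - \gamma(x') = \tfrac12\big(\alpha^{(1)} \delta_1 + \alpha^{(2)} \delta_2\big)$ with each $\delta_k \in \{-1,0,1\}$.

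First I would fix the case $\alpha^{(1)} > \alpha^{(2)} \ge 0$ and assume $\eta^{(1)}(x) > \eta^{(1)}(x')$, i.e. $\eta^{(1)}(x) = 1$ and $\eta^{(1)}(x') = 0$, so $\delta_1 = 1$. The worst case for the sign of $\gamma(x) - \gamma(x')$ is $\delta_2 = -1$ (i.e. $\eta^{(2)}(x) = 0$, $\eta^{(2)}(x') = 1$), which gives $\gamma(x) - \gamma(x') = \tfrac12(\alpha^{(1)} - \alpha^{(2)}) > 0$; in every other sub-case ($\delta_2 \in \{0,1\}$) the difference is only larger. Hence $\gamma(x) > \gamma(x')$, and Proposition~\ref{lemm:bayes-opt-auc-multi-dist} yields $f^*(x) > f^*(x')$. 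The symmetric argument with the roles of labels $1$ and $2$ swapped handles the case $\alpha^{(2)} > \alpha^{(1)}$.

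The only genuinely delicate point is handling the ties: one must be careful that $\alpha^{(1)} > \alpha^{(2)}$ (strict) rather than $\ge$ is what makes the argument go through, since equality $\alpha^{(1)} = \alpha^{(2)}$ would allow $\gamma(x) = \gamma(x')$ in the conflicting sub-case and the implication would fail. I would also remark that the constant factor $\tfrac1K = \tfrac12$ and the nonnegativity of the $a_k$ are irrelevant to the sign analysis, so the statement is exactly as claimed. No obstacle of substance remains; the proof is essentially a two-line case check once Proposition~\ref{lemm:bayes-opt-auc-multi-dist} is in hand.
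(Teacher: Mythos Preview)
Your proposal is correct and follows essentially the same approach as the paper: both invoke Proposition~\ref{lemm:bayes-opt-auc-multi-dist} to reduce to the ordering of $\alpha^{(1)}\eta^{(1)}(x) + \alpha^{(2)}\eta^{(2)}(x)$ and then do a short case analysis using $\eta^{(k)}(x)\in\{0,1\}$. The paper writes the key inequality as the chain $\alpha^{(1)} + \alpha^{(2)}\eta^{(2)}(x) \ge \alpha^{(1)} > \alpha^{(2)} \ge \alpha^{(2)}\eta^{(2)}(x')$, which is exactly your worst-case $\delta_2 = -1$ bound phrased slightly differently.
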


When $\alpha^{(1)} > \alpha^{(2)}$, the ranking is predominantly determined by $\eta^{(1)}$, and when $\alpha^{(1)} < \alpha^{(2)}$, the ranking is predominantly determined by $\eta^{(2)}$;
i.e.,
\emph{one of the labels acts as a ``dictator''}
in determining the optimal solution.
{This ``label dictatorship'' behavior
can be undesirable in practice. 
Even if both labels provide perfectly clean conditional signals (i.e., $\eta^{(k)}(x) \in \{0,1\}$), a label might dominate the ranking due to its marginal skewness, rather than its contextual importance. 
An example illustrating such an undesirable scenario in information retrieval is provided in Appendix~\ref{app:ir_dictatorship_example}.}
In contrast, label aggregation results in scorers that equally balance both labels even under uniform costs $c_{\bar{y}\bar{y}'} = 1$.

\begin{proposition}
\label{prop:label-agg-deterministic}
Suppose $\eta^{(k)}(x) \in \{0, 1\}, \forall x \in \XCal$ and $K=2$. Then for both costs 
$c_{\bar{y} \bar{y}'}=1$ and 
$c_{\bar{y} \bar{y}'} = 1( \bar{y} > \bar{y}'  ) \cdot | \bar{y} - \bar{y}' |$,
any Bayes-optimal scorer $f^*$ for the label-aggregated AUC 
in \eqref{eqn:auc-label-agg}
satisfies
\begin{align}
        \gamma( x ) &> \gamma( x' ) \implies f^*( x ) > f^*( x' ) 
        \nonumber
        \\
        \gamma( x ) &\defEq 
        \eta^{(1)}( x ) + \eta^{(2)}( x );
\end{align}
or equally, 
$\gamma$
is some non-decreasing transformation of
$f^*$. 
\end{proposition}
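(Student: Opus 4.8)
The plan is to reduce Proposition~\ref{prop:label-agg-deterministic} to the two already-established characterizations of the label-aggregated Bayes-optimal scorer, and verify they collapse to the same form under the deterministic-label assumption. For the cost $c_{\bar{y}\bar{y}'} = \1(\bar{y} > \bar{y}') \cdot |\bar{y} - \bar{y}'|$, nothing new is needed: Proposition~\ref{lemm:bayes-opt-auc-label-agg} already states the Bayes-optimal scorer follows $\gamma(x) = \sum_{k} \eta^{(k)}(x)$ for \emph{general} $\{D^{(k)}\}$, so in particular for deterministic labels and $K=2$ it follows $\eta^{(1)}(x) + \eta^{(2)}(x)$. The substantive content is the claim for the \emph{uniform} cost $c_{\bar{y}\bar{y}'} = 1$, where Proposition~\ref{prop:pareto-label-agg} warns that in general the optimal scorer need not even be Pareto optimal.

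First I would invoke Proposition~\ref{lemm:auc-multipartite-opt} with $L = 3$ (since $\bar{\Y} = \sum_{k}\sfY^{(k)} \in \{0,1,2\}$, the aggregated label takes three values). That proposition applies for $L=3$ with \emph{arbitrary} nonnegative costs, so it covers $c_{\bar{y}\bar{y}'} = 1$. It gives that any Bayes-optimal scorer follows
\begin{align*}
\beta(x) &\defEq \frac{\sum_{i=2}^{3} c_{1,i}\cdot \bar\eta_i(x)}{\sum_{i=1}^{2} c_{i,3}\cdot \bar\eta_i(x)} = \frac{c_{1,2}\cdot \bar\eta_2(x) + c_{1,3}\cdot \bar\eta_3(x)}{c_{1,3}\cdot \bar\eta_1(x) + c_{2,3}\cdot \bar\eta_2(x)},
\end{align*}
where $\bar\eta_i(x) \defEq \Pr(\bar{\Y} = i \mid \X = x)$ and I have indexed the three ordinal values as $\{1,2,3\}$ corresponding to the counts $\{0,1,2\}$. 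With uniform costs $c_{1,2} = c_{1,3} = c_{2,3} = 1$, this becomes $\beta(x) = \bigl(\bar\eta_2(x) + \bar\eta_3(x)\bigr) / \bigl(\bar\eta_1(x) + \bar\eta_2(x)\bigr)$.

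Next I would use the deterministic-label assumption to express the $\bar\eta_i$ in terms of $\eta^{(1)}, \eta^{(2)}$. Under determinism, for each $x$ the joint distribution $\Pr(\sfY^{(1)} = y_1, \sfY^{(2)} = y_2 \mid \X = x)$ is a point mass at some $(y_1, y_2) \in \{0,1\}^2$; hence the aggregated count $\bar{y}(x) = y_1 + y_2$ is itself deterministic, $\eta^{(k)}(x) \in \{0,1\}$, and $\bar\eta_{i}(x) = \1(\bar{y}(x) = i-1)$. So each $x$ falls into exactly one of three ``buckets'' according to whether $\bar{y}(x) \in \{0,1,2\}$, i.e. according to whether $\eta^{(1)}(x) + \eta^{(2)}(x) \in \{0,1,2\}$. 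On bucket $\bar{y}(x) = 0$: $\beta(x) = 0/1 = 0$. On bucket $\bar{y}(x) = 1$: $\beta(x) = 1/1 = 1$. On bucket $\bar{y}(x) = 2$: $\beta(x) = 1/0 = +\infty$ (and one must handle this limiting value — I would either adopt the convention that $\beta = +\infty$ strictly exceeds all finite values, or, cleaner, directly re-derive from the multipartite AUC objective that instances with $\bar y = L$ must be top-ranked, since raising their score only ever increases a nonnegative-cost objective). In all three cases $\beta(x)$ is a strictly increasing function of $\gamma(x) \defEq \eta^{(1)}(x) + \eta^{(2)}(x)$ (the map $\{0,1,2\} \to \{0, 1, +\infty\}$ is order-preserving). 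Therefore $\beta(x) > \beta(x') \iff \gamma(x) > \gamma(x')$, and combining with the conclusion of Proposition~\ref{lemm:auc-multipartite-opt} gives $\gamma(x) > \gamma(x') \implies f^*(x) > f^*(x')$ as required; the case $c_{\bar{y}\bar{y}'} = \1(\bar{y}>\bar{y}')|\bar{y}-\bar{y}'|$ follows identically (or directly from Proposition~\ref{lemm:bayes-opt-auc-label-agg}), completing both cases.

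The main obstacle is the $\bar\eta_1(x) = 0$ degeneracy in the denominator of $\beta$ — i.e., making precise that instances achieving the maximum aggregated label are ranked strictly above all others. The ratio $\beta$ is genuinely $+\infty$ there, so I cannot naively plug into ``$\beta(x) > \beta(x')$''. I would resolve this by going back one step in the proof of Proposition~\ref{lemm:auc-multipartite-opt}: the multipartite-AUC optimality condition can be phrased as ``$f^*(x) > f^*(x')$ whenever $x$ should be preferred to $x'$,'' and for the maximum label bucket the preference is unambiguous from the objective (their contribution to $\mathrm{AUC}_{\rm mp}$ as the ``$\Y$'' argument is maximized, and as the ``$\Y'$'' argument they never appear for $\Y > \Y'$ against a higher bucket). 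A short lemma to this effect, or a direct pairwise-exchange argument, closes the gap cleanly. Everything else is a routine case check over the three buckets.
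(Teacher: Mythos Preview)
Your proposal is correct and follows essentially the same route as the paper: invoke Proposition~\ref{lemm:bayes-opt-auc-label-agg} for the ordinal cost, and for the uniform cost apply Proposition~\ref{lemm:auc-multipartite-opt} with $L=3$ to obtain $\beta(x) = (\bar\eta_1 + \bar\eta_2)/(\bar\eta_0 + \bar\eta_1)$, then check the deterministic cases to see $\beta$ is a strictly increasing function of $\eta^{(1)}(x)+\eta^{(2)}(x)$. Your treatment of the $+\infty$ degeneracy is in fact more careful than the paper's, which simply writes ``$f^*(x)=\infty$'' in the $(1,1)$ case without further comment.
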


\subsection{Illustration of label dictatorship}\label{sec:illustration-label-dictatorship}

We illustrate the above in Figure~\ref{tbl:partial_orders} by showing the partial orders induced by the two objectives for two deterministic binary labels $(\sY{1}, \sY{2})$. 
Each circle denotes an instance with a combination of 
$(\sY{1}, \sY{2})$, 
and the arrows indicate that a particular combination is ranked below another. 
Based on the value of $\alpha^{(1)}$ and $\alpha^{(2)}$, 
loss aggregation either \emph{always} ranks the label combination $(1, 0)$ above $(0, 1)$, 
or always ranks $(1, 0)$ below $(0, 1)$. 
By contrast, label aggregation does not induce any ordering between $(1, 0)$ and $(0, 1)$.

To make this point concrete, recall our MS MARCO example (Section~\ref{sec:pareto-suffice}), where $\sY{1}$ represents \texttt{relevance} and $\sY{2}$ represents \texttt{engagement}. 
Then, node $(1,0)$ conceptually corresponds to the document with high relevance but low engagement (the rightmost column in Table~\ref{table:example_msmarco1}), while $(0,1)$ corresponds to the document with low relevance but high engagement (the middle column in Table~\ref{table:example_msmarco1}).

The ``label dictatorship'' behavior of loss aggregation
---
highlighted by the red arrows in \cref{fig:dictator1,fig:dictator2}
---
translates to imposing a strict, global preference between high-relevance \& low-engagement and low-relevance \&  high-engagement documents.
{
This preference is determined by the relative values of 
$\alpha^{(1)} = a_1 / (\pi^{(1)}(1-\pi^{(1)}))$ 
and $\alpha^{(2)} = a_2 / (\pi^{(2)}(1-\pi^{(2)}))$
(cf.~\cref{prop:dictatorship}). 
Specifically, 
the ranking will favor relevance if $\alpha^{(1)} > \alpha^{(2)}$, and engagement if the inequality is reversed. 
These $\alpha$ values depend on the mixing weights $a_k$ and the marginal label skews $\pi^{(k)}$, regardless of the specific query or document.}
As argued in \S\ref{sec:pareto-suffice}, 
such a fixed global trade-off might be 
undesirable. 
By contrast, label aggregation 
avoids imposing a strict order between high-relevance \& low-engagement and low-relevance \& high-engagement documents.

\begin{figure}[!t]
\centering
    \begin{subfigure}[\small{LoA}\scriptsize{, $\alpha^{(1)} > \alpha^{(2)}$.}]%
        { \label{fig:dictator1}\includegraphics[width=0.16\textwidth]{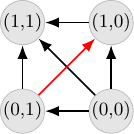}}
    \end{subfigure}
    ~
    \begin{subfigure}[\small{LoA}\scriptsize{, $\alpha^{(1)} < \alpha^{(2)}$.}]%
        {\label{fig:dictator2}\includegraphics[width=0.16\textwidth]{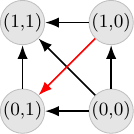}}
    \end{subfigure}
    ~
    \begin{subfigure}[\small{LaA, sum.}]%
        {\label{fig:dictator3}\includegraphics[width=0.16\textwidth]{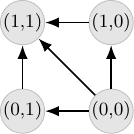}}
    \end{subfigure}
    ~
    \begin{subfigure}[\small{LaA, product.}]%
        {\includegraphics[width=0.16\textwidth]{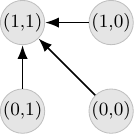}}
    \end{subfigure}
    
     \caption{Illustration of partial orders among examples 
    for two deterministic binary labels 
    induced by:
     (a) Loss aggregation with $\alpha^{(1)} > \alpha^{(2)}$; 
     (b) Loss aggregation with $\alpha^{(1)} < \alpha^{(2)}$; 
     (c) Label Aggregation with $\bar{\mathsf{Y}} = \sY{1} + \sY{2}$; 
     (d) Label Aggregation with $\bar{\mathsf{Y}} = \sY{1} \cdot \sY{2}$.
     As before, $\alpha^{(k)} = \displaystyle\frac{ a_k }{\pi^{(k)} \cdot (1 - \pi^{(k)})}$. %
     Each circle denotes an example with values for the two binary labels. An arrow from a circle marked $(\sY{1}, \sY{2})$ to a circle marked $(\widetilde{\mathsf{Y}}^{(1)}, \widetilde{\mathsf{Y}}^{(2)})$ indicates that the label combination $(\sY{1}, \sY{2})$ is ranked below $(\widetilde{\mathsf{Y}}^{(1)}, \widetilde{\mathsf{Y}}^{(2)})$. The red arrow for the loss aggregation methods depicts the \emph{dictatorial} behavior described in Section \ref{sec:dictatorship}. %
     }
    \label{tbl:partial_orders}
\end{figure}

Before closing, 
in Figure \ref{tbl:partial_orders} 
we contrast label aggregation via 
product $\bar{\mathsf{Y}} = \sY{1} \cdot \sY{2}$ versus summation. 
The product aggregation mechanism is more conservative, in that it  induces only a subset of the partial orders among label combinations induced by sum aggregation. 
Formally:
\begin{lemma}
\label{lem:prod-aggregation}
Suppose $\eta^{(k)}(x) \in \{0, 1\}, \forall x \in \XCal$. 
Let $f^*_{\rm sum}$ and $f^*_{\rm prod}$ be Bayes-optimal scorers for the label aggregated AUC in \eqref{eqn:auc-label-agg} with 
$\bar{\mathsf{Y}}_{\rm sum} = \sum_{k\in[K]} \sY{k}$ and 
$\bar{\mathsf{Y}}_{\rm prod} = \prod_{k\in[K]} \sY{k}$ respectively. Then for any $x, x' \in \XCal$:
\begin{enumerate}[label=(\alph*),itemsep=0pt,topsep=0pt,leftmargin=16pt]
\item 
$\displaystyle
\gamma_{\rm sum}(x) >  \gamma_{\rm sum}(x')  ~\implies~
f^*_{\rm sum}(x) > f^*_{\rm sum}(x');
$
\item 
$\displaystyle
\gamma_{\rm prod}(x) >  \gamma_{\rm prod}(x')  ~\implies~
f^*_{\rm prod}(x) > f^*_{\rm prod}(x');
$
\item 
$\displaystyle
  \gamma_{\rm prod}(x) >  \gamma_{\rm prod}(x')
  ~\implies~ \gamma_{\rm sum}(x) >  \gamma_{\rm sum}(x'),
$
\end{enumerate}
where 
$\gamma_{\rm sum}(x) \defEq \sum_{k \in [K]} \eta^{(k)}(x)$,
and
$\gamma_{\rm prod}(x) \defEq \P(\sY{1}=1,\ldots, \sY{K}=1 \,|\, \mathsf{X}=x)$.
\end{lemma}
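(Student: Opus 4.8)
The plan is to reduce all three claims to the characterisation of Bayes-optimal multipartite AUC scorers already established in Proposition~\ref{lemm:auc-multipartite-opt}, specialised to deterministic labels. For part (a), I would instantiate Proposition~\ref{lemm:bayes-opt-auc-label-agg} (or directly Proposition~\ref{lemm:auc-multipartite-opt}) with $\bar{\mathsf{Y}}_{\rm sum} = \sum_k \sY{k}$: the relevant scoring direction is the conditional mean $\mathbb{E}[\bar{\mathsf{Y}}_{\rm sum}\mid\sX=x] = \sum_k \eta^{(k)}(x) = \gamma_{\rm sum}(x)$, which gives exactly the stated implication. Note that under deterministic labels the choice of costs ($c_{\bar{y}\bar{y}'}=1$ versus $\1(\bar{y}>\bar{y}')\cdot|\bar{y}-\bar{y}'|$) is immaterial — this matches Proposition~\ref{prop:label-agg-deterministic} — so I only need to track the summation direction.

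For part (b), the key observation is that $\bar{\mathsf{Y}}_{\rm prod} = \prod_k \sY{k} \in \{0,1\}$ is itself a \emph{binary} label, so label aggregation by product reduces to ordinary bipartite AUC maximisation against that binary target. By Proposition~\ref{lemm:auc-bipartite-opt}, the Bayes-optimal scorer follows the class-probability function of $\bar{\mathsf{Y}}_{\rm prod}$, namely $\Pr(\bar{\mathsf{Y}}_{\rm prod}=1\mid\sX=x) = \Pr(\sY{1}=1,\ldots,\sY{K}=1\mid\sX=x) = \gamma_{\rm prod}(x)$. That is precisely the stated implication. (One subtlety: when $\bar{\mathsf{Y}}_{\rm prod}$ is degenerate — e.g., always $0$ — the AUC is vacuous and every scorer is trivially optimal, so the implication holds vacuously as well; I would mention this edge case but it requires no real work.)

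For part (c), I would use the assumption that labels are deterministic: for each $x$, $(\eta^{(1)}(x),\ldots,\eta^{(K)}(x)) \in \{0,1\}^K$, so $\gamma_{\rm prod}(x) = \prod_k \eta^{(k)}(x) = \1(\eta^{(k)}(x)=1 \ \forall k) \in \{0,1\}$ and $\gamma_{\rm sum}(x) = \sum_k \eta^{(k)}(x) = \#\{k : \eta^{(k)}(x)=1\}$. Then $\gamma_{\rm prod}(x) > \gamma_{\rm prod}(x')$ forces $\gamma_{\rm prod}(x)=1$ and $\gamma_{\rm prod}(x')=0$; the former means all $K$ labels are $1$ at $x$, so $\gamma_{\rm sum}(x)=K$, while the latter means at least one label is $0$ at $x'$, so $\gamma_{\rm sum}(x')\le K-1 < K = \gamma_{\rm sum}(x)$. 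Hence $\gamma_{\rm sum}(x) > \gamma_{\rm sum}(x')$, as required. This is a short counting argument and needs no further machinery.

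The main obstacle, such as it is, is bookkeeping rather than mathematics: one must be careful to invoke the \emph{correct} Bayes-optimality characterisation for each aggregation (multipartite with the linear costs for the sum, plain bipartite for the product), and to handle the degenerate-label edge cases where the AUC objective places no constraint on the scorer so that the claimed implications hold only vacuously. Beyond that, parts (a) and (b) are immediate corollaries of earlier results and part (c) is the elementary observation that, for $\{0,1\}$-valued conditional probabilities, the product being strictly larger pins down a full-agreement point that necessarily maximises the sum.
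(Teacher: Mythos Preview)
Your proposal is correct and follows essentially the same route as the paper: part (a) via the Bayes-optimal characterisation for sum label aggregation (the paper cites Proposition~\ref{lemm:bayes-opt-auc-multi-dist}, which appears to be a typo for Proposition~\ref{lemm:bayes-opt-auc-label-agg}; your citation is the right one), part (b) by reducing the product-aggregated label to a binary target and invoking Proposition~\ref{lemm:auc-bipartite-opt}, and part (c) by the same $\{0,1\}$-counting argument. Your treatment is in fact slightly more careful than the paper's, since you flag the cost-choice irrelevance under determinism and the degenerate edge case for the product label.
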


\section{Experimental Results}
\label{sec:experiments}

We present a suite of synthetic and real-world experiments to empirically validate our theoretical findings. 
Specifically, we aim to verify: 
(i) the ``label dictatorship'' phenomenon predicted for loss aggregation (\cref{prop:dictatorship} and \S \ref{sec:dictatorship}), particularly its emergence due to the aforementioned sensitivity to $\pi^{(k)}$; and 
(ii) the consequent tendency for label aggregation (\cref{prop:label-agg-deterministic} and \S\ref{sec:illustration-label-dictatorship}) to offer more balanced handling of multiple labels compared to 
loss aggregation.

To measure the degree of balanced handling of labels, we report the difference between the per-label AUCs:
high differences indicate that one of the two labels is disproportionately favored.
Since this metric can be trivially maximized by a scorer achieving $0.5$ AUC for both labels, we additionally report the minimum over the per-label AUCs.
Note that worst-class performance is a common metric in prior works on fairness~\cite{williamson2019fairnessriskmeasures,sagawa2019distributionally}.
While our focus is on label and loss aggregation, in future work it will be of interest to study how well directly maximizing the minimum per-label AUC would perform.

We present experiments on a synthetic and $3$ real world datasets with training neural models on loss and label aggregation objectives.
When optimizing the AUC-based objectives, following prior works we employ a logistic or hinge surrogate function. 
See Appendix~\ref{s:auc_optimization} for details.

Additionally, in Appendix~\ref{s:synth_details_brute_force} we empirically verify the derived forms of the Bayes-optimal scorers for loss 
and label
aggregation (\cref{lemm:bayes-opt-auc-multi-dist}, \cref{lemm:bayes-opt-auc-label-agg}).

\textbf{Bayes-optimal solutions on synthetic data} %
We begin by analyzing the ``label dictatorship'' phenomenon 
directly on the Bayes-optimal scorers (i.e., \emph{without} any model training) on synthetic data.
We consider a 2D prediction task with two labels, and compare loss and label aggregation as we vary the label skewness $\pi^{(1)}, \pi^{(2)}$.
{This variation in skewness 
directly probes two contrasting theoretical predictions:
recall that
the Bayes-optimal scorer for loss aggregation
(\cref{lemm:bayes-opt-auc-multi-dist})
involves
weighting of individual class-probabilities $\eta^{(k)}(x)$ 
inversely proportional to $\pi^{(k)}\cdot(1-\pi^{(k)})$,
thus
leading 
to ``label dictatorship'' (\cref{prop:dictatorship}). 
By contrast, 
the optimal scorer under
label aggregation 
(with costs $c_{\bar{y}\bar{y}^{\prime}}=|\bar{y}-\bar{y}^{\prime}|$, \cref{lemm:bayes-opt-auc-label-agg}) 
does not involve $\pi^{(k)}$.
}

We generate instances $x \in \mathbb{R}^2$ uniformly from $[-1,1]^2$. We model the two class probability distributions using a logistic model, 
with $\eta^{(1)}(x) = \sigma(\tau \cdot w_1^\top x)$ and $\eta^{(2)}(x) = \sigma(\tau \cdot (w_2^\top x - \rho))$, 
for sigmoid function $\sigma(z) \defEq \frac{1}{1+\exp(-z)}$, 
$w_1 = \big[\frac{1}{\sqrt{2}}, \frac{1}{\sqrt{2}}\big]^\top,$ $w_2 = \left[0, 1\right]^\top$, 
scale parameter $\tau {>0}$, 
and shift parameter $\rho {\geq 0}$. 
As $\tau \to +\infty$, the label distributions become more deterministic. 
As $\rho \to +\infty$, the more skewed is the label distribution $\eta^{(2)}(x)$. 
Specifically, our choice of $w_1$ ensures that $\pi^{(1)} = \mathbb{E}[\eta^{(1)}(\mathsf{X})] = 0.5$; 
we vary the skewness of  
$\pi^{(2)} = \mathbb{E}[\eta^{(2)}(\mathsf{X})]$ alone by varying $\rho$.  

For each instantiation of the above distributions $D^{(1)}$ and $D^{(2)}$, we compute the Bayes-optimal scorers for both loss aggregation \eqref{eqn:bayes-opt-ls} with $a_1=a_2=1$, and label aggregation \eqref{eq:bayes-opt-la}, and evaluate the difference $\Delta_{\rm AUC} = |\text{AUC}(f; D^{(1)}) - \text{AUC}(f; D^{(2)})|$ on $10^5$ samples drawn from $(D^{(1)}, D^{(2)})$. In Figure \ref{fig:pi2_sweep}, we plot the differences in the per-label AUC as a function of the second label's skewness $\pi^{(2)}$.  Observe that for most skewness values, the optimal scorer for loss aggregation leads to a larger gap in AUCs between the two labels. This is due to the optimal scorer for loss aggregation \eqref{eqn:bayes-opt-ls} closely depending on the label priors $\pi^{(1)}$ and $\pi^{(2)}$. In fact, the closer the distributions are to being deterministic, the larger is $\Delta_{\rm AUC}$ for loss aggregation. This observation closely aligns with the dictatorial behavior described in Section \ref{sec:dictatorship} for loss aggregation under deterministic labels.

\begin{figure}[!t]
    \centering
    
    {\includegraphics[width=1.0\columnwidth]{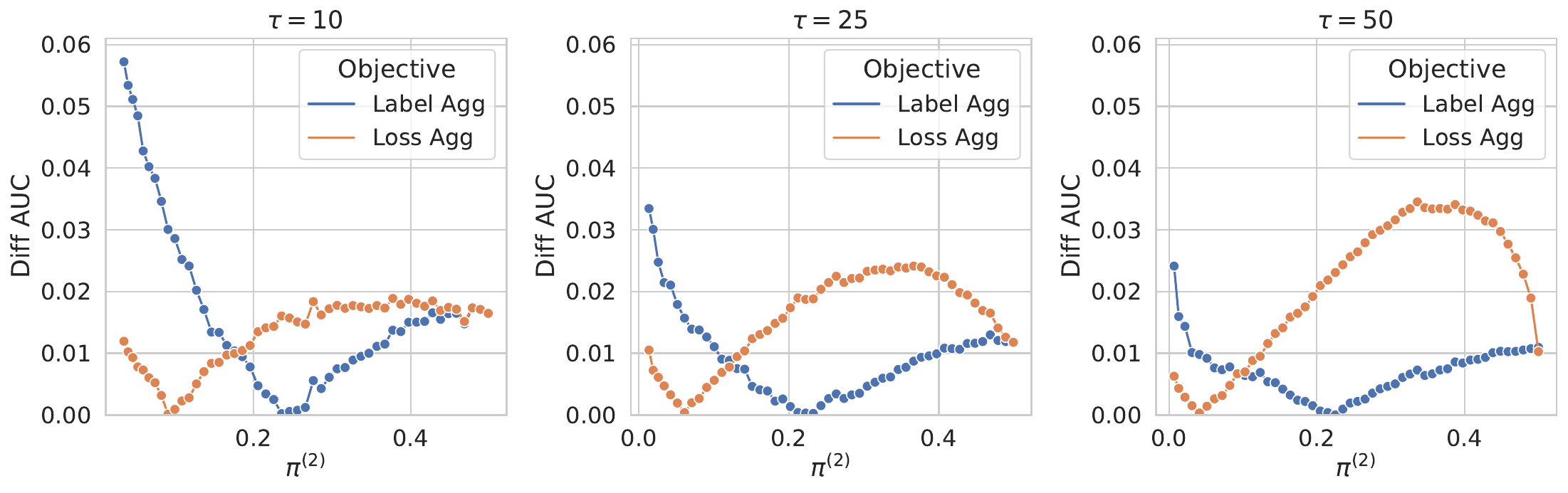}}
    \vspace{-0.3cm}
    \caption{Plot of $|\text{AUC}(\cdot; D^{(1)}) - \text{AUC}(\cdot; D^{(2)})|$ for optimal scorers  as a function of skewness in label $\sY{2}$. 
    We compare label aggregation and loss aggregation on the synthetic dataset as described in the main text. 
    We fix $\pi^{(1)} = \mathbf{P}(\mathsf{Y}^{(1)}=1) = 0.5$ and vary $\pi^{(2)} = \mathbf{P}(\mathsf{Y}^{(2)}=1)$.
    Larger values of sigmoid scaling parameter $\tau$ make the label distribution closer to deterministic.
    A lower difference indicates a more balanced label treatment by the optimal solution; 
    loss aggregation leads to a higher difference.
    Figure~\ref{fig:pi2_sweep_appendix} reports individual per-label AUC metrics.\label{fig:pi2_sweep}}
\end{figure}

\textbf{\texttt{Banking}.}\ 
We consider the UCI \texttt{Banking} dataset composed of information about Bank customers, 
advertising campaign details, 
and the success thereof~\citep{MORO201422}.
We take the 
\emph{mortgage} and \emph{loan} variables as two targets.

To study the behavior of label and loss aggregation under label skews, 
{we re-sample this dataset with varying proportions of the positive mortgage label, and average results over 25 trials.   %
We train a linear model on numerical features using Adam for $100$ epochs.
In Table~\ref{tbl:bank_res} and Figure \ref{fig:banking-sweep-pi1}, we compare the scorers learned by
optimizing the 
loss aggregation objective \eqref{eqn:auc-multi-dist} with $a_1=a_2=1$ and the 
label aggregation objective \eqref{eqn:auc-label-agg} with costs $c_{\bar{y} \bar{y}'} = \1( \bar{y} > \bar{y}'  ) \cdot | \bar{y} - \bar{y}' |$. 
We employ the logistic surrogate loss in both cases.
Notice that
label aggregation is able to better balance between the two labels, yielding the highest Min AUC metric.}

\begin{table}[!h]
    \centering
    \ifarxivversion

        \begin{tabular}{@{}lcccc@{}}
            \toprule
            \toprule
            \textbf{Objective} & \textbf{AUC mortgage} $\uparrow$ & \textbf{AUC loan} $\uparrow$
            & \textbf{Diff AUC} $\downarrow$ & \textbf{Min AUC} $\uparrow$
            \\ %
            \toprule
                AUC(mortgage) & \best{0.637 $\pm$ 0.004}  & 0.523 $\pm$ 0.002  & 0.113 $\pm$ 0.004  & 0.523 $\pm$ 0.002\\
                \midrule
                
                 AUC(loan) & 0.550 $\pm$ 0.005  & \best{0.573 $\pm$ 0.002}  & \best{0.023 $\pm$ 0.005}  & 0.550 $\pm$ 0.005 \\
                \midrule
                
                AUC$_{\rm LaA}$  & 0.616 $\pm$ 0.003  & 0.562 $\pm$ 0.002  & 0.054 $\pm$ 0.005  & \best{0.562 $\pm$ 0.002}  \\
                
                \midrule
                AUC$_{\rm LoA}^{(1, 1)}$  & 0.626 $\pm$ 0.003  & 0.555 $\pm$ 0.002  & 0.071 $\pm$ 0.005  & 0.555 $\pm$ 0.002\\
                \bottomrule
        \end{tabular}%
    \else
        \renewcommand{\arraystretch}{1}
        
        \resizebox{1.0\linewidth}{!}{%

        \begin{tabular}{@{}lcccc@{}}
            \toprule
            \toprule
            \textbf{Objective} & \textbf{AUC mortgage} $\uparrow$ & \textbf{AUC loan} $\uparrow$
            & \textbf{Diff AUC} $\downarrow$ & \textbf{Min AUC} $\uparrow$
            \\ %
            \toprule
                AUC(mortgage) & \best{0.637 $\pm$ 0.004}  & 0.523 $\pm$ 0.002  & 0.113 $\pm$ 0.004  & 0.523 $\pm$ 0.002\\
                \midrule
                
                 AUC(loan) & 0.550 $\pm$ 0.005  & \best{0.573 $\pm$ 0.002}  & \best{0.023 $\pm$ 0.005}  & 0.550 $\pm$ 0.005 \\
                \midrule
                
                AUC$_{\rm LaA}$  & 0.616 $\pm$ 0.003  & 0.562 $\pm$ 0.002  & 0.054 $\pm$ 0.005  & \best{0.562 $\pm$ 0.002}  \\
                
                \midrule
                AUC$_{\rm LoA}^{(1, 1)}$  & 0.626 $\pm$ 0.003  & 0.555 $\pm$ 0.002  & 0.071 $\pm$ 0.005  & 0.555 $\pm$ 0.002\\
                \bottomrule
        \end{tabular}%
        }
    \fi
    \caption{Results on the \texttt{Banking} dataset with the proportion of positive mortgage label set to 0.9. While optimizing an objective for an individual label maximizes the corresponding AUC, we find label aggregation to strike a balance between the two evaluation metrics, %
     yielding the \textit{highest Min AUC}. Here, AUC$_{\rm LaA}$ = AUC(mortgage + loan) and AUC$_{\rm LoA}^{(1, 1)}$ = AUC(mortgage) + AUC(loan).  
    }
    \label{tbl:bank_res}
\end{table}

\begin{figure}[!t]
    \centering
    
    {\includegraphics[width=1.0\columnwidth]{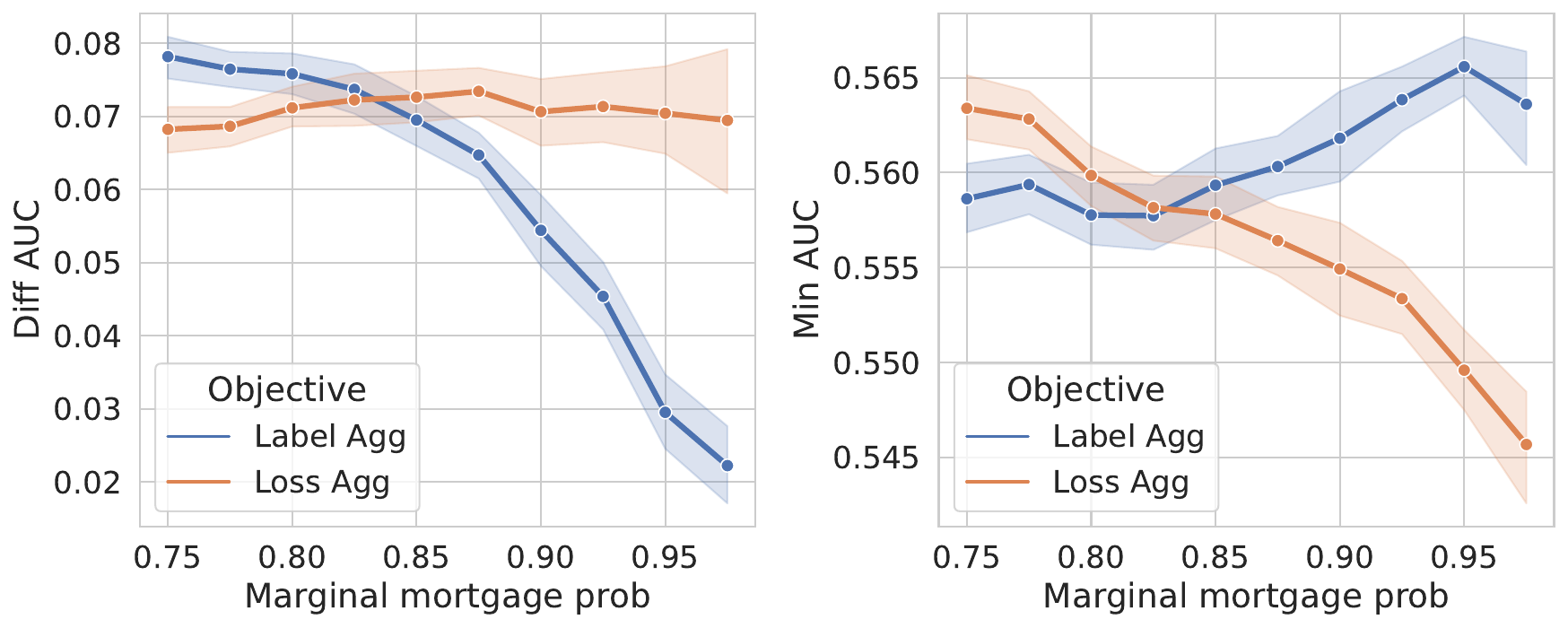}}
    \vspace{-0.3cm}
    \caption{Plots of $|\text{AUC(mortgage)} - \text{AUC(loan)}|$ (\textit{lower} is better) and $\min\{\text{AUC(mortgage)}, \text{AUC(loan)}\}$ (\textit{higher} is better)  on the \texttt{Banking} dataset. We compare label aggregation and loss aggregation as we vary the marginal probability of the mortgage label. We find label aggregation to fare better in the higher skewness regime.}
    \label{fig:banking-sweep-pi1}
\end{figure}

\begin{figure}[!t]
    \centering
    
    {\includegraphics[width=1.0\columnwidth]{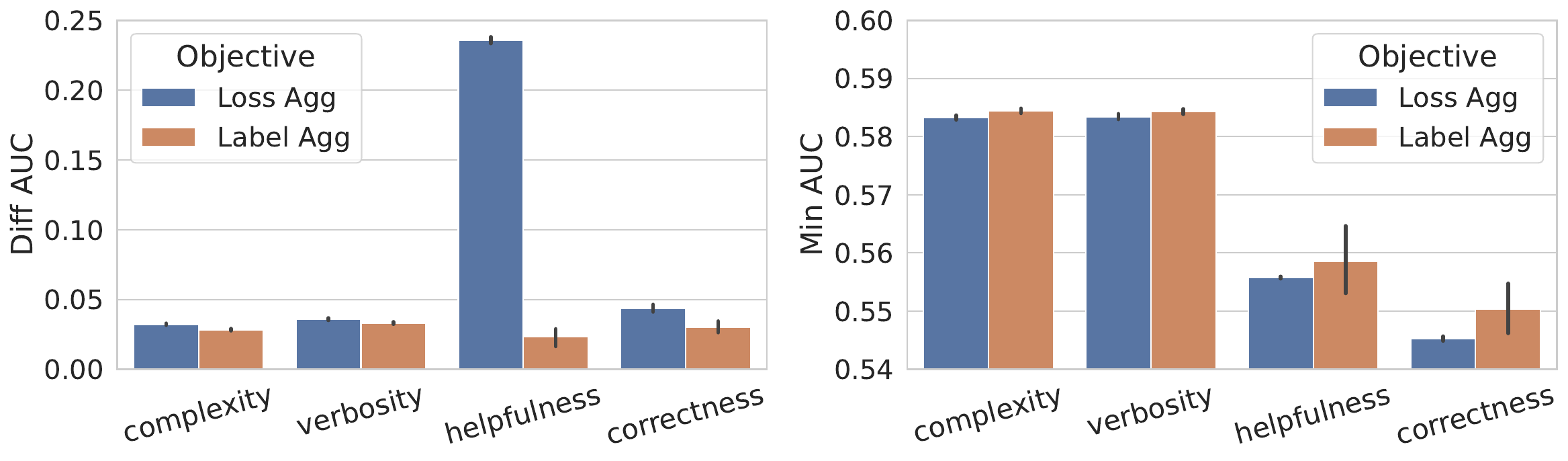}}
    \vspace{-0.3cm}
    \caption{Plot of
    Diff AUC $|\text{AUC}(\cdot; D^{(1)}) - \text{AUC}(\cdot; D^{(2)})|$ (\textit{lower} is better) and 
    Min AUC $\min\{\text{AUC}(\cdot; D^{(1)}), \text{AUC}(\cdot; D^{(2)})\}$ (\textit{higher} is better) 
    for scorers learned with different labels $\sY{2}$ (sorted according to their skewness, from the most skewed to the least skewed). Experiments on the \texttt{HelpSteer} dataset, where the first label is fixed to coherence (which is the closest to balance, and thus, $\pi^{(1)} = \mathbf{P}(\mathsf{Y}^{(1)}=1) \approx 0.5$), and the second label is chosen over the set of all remaining labels (and so, $\pi^{(2)} = \mathbf{P}(\mathsf{Y}^{(2)}=1)$ is of varying skewness). We find loss aggregation to lead to a higher difference between per-label AUC metrics.\label{fig:pi2_sweep_helpsteer}}
\end{figure}

\textbf{\texttt{HelpSteer}.}\
\texttt{HelpSteer}~\cite{wang2023helpsteer} consists of evaluations of LLM responses across 5 categories: 
helpfulness, factuality, correctness, coherence, complexity and verbosity. 
On each category, responses are rated from $0$ to $4$, where higher means better.
While originally intended for usage in LLM alignment, we repurpose the dataset here to illustrate the tradeoffs when ranking along competing signals.

We consider a task of ranking concatenations of a prompt and candidate response.
We binarize each category based on whether or not it equals $4$, 
and fix the first label to coherence (the most balanced category), while iteratively choosing the second label to be each of the remaining categories.
Figure~\ref{fig:pi2_sweep_helpsteer} shows that label aggregation yields the lowest difference between per-label AUC across all settings.

\textbf{\texttt{MSLR}.}\
We next consider \texttt{MSLR Web30k}, a dataset of
users' query-document interactions~\citep{QinL13}. 
We follow the methodology from
\citet{mahapatra2023multi} in constructing the target scoring function by removing Query-URL Click Count (Click), and using it in conjunction with the Relevance label.
We train an MLP model over the concatenated features and follow hyper-parameters specified in the {TensorFlow ranking} library~\citep{TensorflowRankingKDD2019}.

In Table~\ref{tbl:mslr_res}, we report results across different objectives.
Since the AUC over clicks is always seen to be higher than AUC over relevance, $\min(\text{AUC click}, \text{AUC relevance}) = \text{AUC relevance}$ across all objectives.
We find that, 
compared to the label aggregation solution, loss aggregation solutions tend to over-optimize one metric over the other, depending on the weights.
On the other hand, label aggregation yields the highest $\min(\text{AUC click}, \text{AUC relevance})$ and Pareto dominates the loss aggregation with equal weights.

\begin{table}[!t]
    \centering
    \ifarxivversion 
        \begin{tabular}{@{}lcccc@{}}
            \toprule
            \toprule
            \textbf{Objective} & \textbf{AUC Click} $\uparrow$ & \textbf{AUC Rel} $\uparrow$
            & \textbf{Diff AUC} $\downarrow$ & \textbf{Min AUC} $\uparrow$\\
            \toprule
                AUC(Click) &  \best{0.74} & 0.61 & 0.13 & 0.61\\
                \midrule
                AUC(Rel) & 0.70 & 0.67 & \best{0.03} & 0.67 \\
                \midrule
                AUC$_{\rm LaA}$ &  0.73 & \best{0.68} & 0.05 & \best{0.68}\\
                \midrule
                AUC$_{\rm LoA}$  &  0.73 &	0.67 & 0.06 & 0.67\\
                \bottomrule
        \end{tabular}%
    \else
        \renewcommand{\arraystretch}{1}
        \resizebox{\linewidth}{!}{%
        \begin{tabular}{@{}lcccc@{}}
            \toprule
            \toprule
            \textbf{Objective} & \textbf{AUC Click} $\uparrow$ & \textbf{AUC Rel} $\uparrow$
            & \textbf{Diff AUC} $\downarrow$ & \textbf{Min AUC} $\uparrow$\\
            \toprule
                AUC(Click) &  \best{0.74} & 0.61 & 0.13 & 0.61\\
                \midrule
                AUC(Rel) & 0.70 & 0.67 & \best{0.03} & 0.67 \\
                \midrule
                AUC$_{\rm LaA}$ &  0.73 & \best{0.68} & 0.05 & \best{0.68}\\
                \midrule
                AUC$_{\rm LoA}$  &  0.73 &	0.67 & 0.06 & 0.67\\
                \bottomrule
        \end{tabular}%
        }
    \fi
    \caption{Results on the \texttt{MSLR} dataset. Learning on both label (click and relevance) helps even when evaluated on each label alone. %
    Label aggregation strikes a better balance between the two objectives, and achieves the largest minimum AUC over the two labels. %
    }
    \label{tbl:mslr_res}
\end{table}

\section{Related Work}
\textbf{Bipartite ranking}.
Bipartite ranking is a fundamental supervised learning problem,
with intimate ties to binary classification and class-probability estimation~\citep{Narasimhan:2013}.
A large body of work has studied various theoretical aspects of the problem,
including 
statistical generalization~\citep{Agarwal:2005,Agarwal:2014},
consistency of suitable surrogate loss minimisation~\citep{Clemencon:2008,Uematsu:2015,Menon:2016},
relation to classic supervised learning problems~\citep{Cortes:2003,Kotlowski:2011,Narasimhan:2013},
effective algorithm design~\citep{Freund:2003},
and extensions to \emph{top-ranking} settings~\citep{Rudin:2009,Agarwal2011,Li:2014}.

\textbf{Multi-objective ranking}.
There has been much work on
finding Pareto optimal solutions for
multi-objective ranking~\cite{mahapatra2023multi}.
A primary focus has been on modifying the LambdaMART objective~\cite{burges2010ranknet} and evaluating on the nDCG evaluation metric, which the original LambdaMART objective optimizes the upper bound for~\cite{lambdaloss}.
Two prominent lines of works focus on loss aggregation (or linear scalarization)~\citep{ruchte2021scalableparetoapproximationdeep} and label aggregation \cite{Svore2011,Agarwal2011,Dai2011,Carmel2020,wei2023aggregate}. 
To the best of our knowledge, prior works neither considered multi-objective AUC optimization, nor theoretically analyzed the corresponding optimal solutions.

\textbf{Multi-label ranking}.
Multi-label classification involves learning to predict \emph{multiple} labels associated with a given instance~\citep{Tsoumakas:2010,Read:2009,Dembczynski:2010}.
Multi-label \emph{ranking} generalizes this to learn a \emph{ranker} over the multiple candidate labels~\citep{Brinker:2006,Furnkranz:2008,Dembczynski:2012}.
Canonically, such a ranker may be assessed based on an \emph{instance-specific} analogue of the AUC~\citep{Wu:2021}.
While the problem setting is similar to what we consider,
the goal is fundamentally different:
multi-label ranking seeks to produce a score for \emph{every} candidate label,
while we seek to produce a \emph{single} score that synthesizes the labels.

\section{Conclusion and Future Work}
We have studied the problem of bipartite ranking from multiple labels,
with a characterization of the Bayes-optimal solutions for 
loss and label aggregation techniques.
While these optimal scorers have a similar form,
we established that loss aggregation can lead to an undesirable ``dictatorship'' issue.
In future work, it would be of interest to 
study \emph{surrogate} losses for the loss and label aggregation strategies.
Another interesting direction would be to move beyond AUC to metrics such as the nDCG~\citep{Wang:2013}.

\clearpage

\section*{Impact Statement}

This paper presents work goals of which is to advance the field of machine learning, specifically in the area of bipartite ranking with multiple labels. 
While there are many potential societal consequences of our work, we don't feel they must be specifically highlighted here. 
The presented analysis of loss and label aggregation methods provides a deeper understanding of multi-label bipartite ranking, potentially leading to improved algorithms in information retrieval and medical diagnosis applications.

\bibliographystyle{plainnat}
\bibliography{references}

\clearpage
\appendix

\addcontentsline{toc}{section}{Appendix} %
\part{Appendix} %
\parttoc %

\section{Additional helper results}

\begin{lemma}
\label{opt:weight}
Let $\mathscr{S}$ denote the set of pairwise scorers $s \colon \XCal \times \XCal \to \Real$ satisfying the \emph{symmetry} condition: $s( x, x' ) = s( x', x )$.
For any distribution $\mu$ and weight function $w \colon \XCal \times \XCal \to \Real$, let
\begin{equation}
    \label{eqn:pairwise-opt}
    s^* \in \underset{s \in \mathscr{S}}{\argmax} \, \mathbb{E}_{\mathsf{X} \sim \mu} \mathbb{E}_{\mathsf{X}' \sim \mu} \left[ w( \mathsf{X}, \mathsf{X}' ) \cdot H( s( \mathsf{X}, \mathsf{X}' ) )  \right].
\end{equation}
Then,
$$ s^*( x, x' ) > 0 \iff w( x, x' ) - w( x', x ) > 0. $$
\end{lemma}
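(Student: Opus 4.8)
\emph{Proof plan.} The idea is to collapse the functional optimization in~\eqref{eqn:pairwise-opt} into a family of independent \emph{pointwise} problems --- one per unordered pair $\{x,x'\}$ --- and then just read off the sign of the optimizer. The two ingredients I will use are the (skew-)symmetry relation built into $\mathscr{S}$, namely $s(x',x) = -s(x,x')$ (as holds e.g.\ for the canonical ranking scorer $s(x,x')=f(x)-f(x')$, and which is what makes the \emph{difference} $w(x,x')-w(x',x)$ appear), together with the elementary identity $H(z) + H(-z) = 1$ for all $z \in \Real$, which follows from $H(z)=\1(z>0)+\tfrac12\1(z=0)$ by checking the cases $z>0$, $z<0$, $z=0$.

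First I would symmetrize the double expectation. Since $\mathsf{X},\mathsf{X}'$ are i.i.d.\ from $\mu$, the objective in~\eqref{eqn:pairwise-opt} equals
\[
\tfrac12\,\mathbb{E}_{\mathsf{X}}\mathbb{E}_{\mathsf{X}'}\Big[\, w(\mathsf{X},\mathsf{X}')\,H(s(\mathsf{X},\mathsf{X}')) + w(\mathsf{X}',\mathsf{X})\,H(s(\mathsf{X}',\mathsf{X}))\,\Big].
\]
Substituting $s(\mathsf{X}',\mathsf{X}) = -s(\mathsf{X},\mathsf{X}')$ and $H(-z)=1-H(z)$, the bracketed quantity becomes
\[
w(\mathsf{X}',\mathsf{X}) \;+\; \big(w(\mathsf{X},\mathsf{X}') - w(\mathsf{X}',\mathsf{X})\big)\cdot H(s(\mathsf{X},\mathsf{X}')).
\]
The term $\mathbb{E}_{\mathsf{X}}\mathbb{E}_{\mathsf{X}'}[w(\mathsf{X}',\mathsf{X})]$ does not involve $s$, so maximizing~\eqref{eqn:pairwise-opt} over $\mathscr{S}$ is equivalent to maximizing $\mathbb{E}_{\mathsf{X}}\mathbb{E}_{\mathsf{X}'}\big[\, \delta(\mathsf{X},\mathsf{X}')\,H(s(\mathsf{X},\mathsf{X}'))\,\big]$, where $\delta(x,x') \defEq w(x,x') - w(x',x)$.

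Since this last expression depends on $s$ only through the values $H(s(x,x'))$, I would optimize it pointwise: for $\mu\otimes\mu$-a.e.\ $(x,x')$ we want $H(s(x,x'))$ as large as possible where $\delta(x,x')>0$ and as small as possible where $\delta(x,x')<0$. Because $H$ takes values in $\{0,\tfrac12,1\}$ with $H(t)=1\iff t>0$ and $H(t)=0\iff t<0$, any maximizer $s^*$ must satisfy $s^*(x,x')>0$ wherever $\delta(x,x')>0$ and $s^*(x,x')<0$ wherever $\delta(x,x')<0$ --- this is the claimed equivalence. To confirm such a maximizer exists in $\mathscr{S}$, take $s^*(x,x') = \sgn\!\big(w(x,x') - w(x',x)\big)$: it is measurable (when $w$ is), it is odd under swapping its arguments --- hence lies in $\mathscr{S}$ --- since $\delta$ is odd and $\sgn$ is odd, and it attains the pointwise upper bound $\max\{\delta(x,x'),0\}$ everywhere, consistent also with the value $s^*(x,x)=0$ forced on the diagonal by skew-symmetry.

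The one place that needs care --- and the main (mild) obstacle --- is the tie set $\{(x,x') : w(x,x') = w(x',x)\}$: there $\delta(x,x')=0$, the integrand above is constant in $s$, so the value $s^*(x,x')$ is genuinely unconstrained at an optimum. Accordingly I would state the conclusion as holding off the tie set: every maximizer obeys $\delta(x,x')>0\Rightarrow s^*(x,x')>0$ and $\delta(x,x')<0\Rightarrow s^*(x,x')<0$ for $\mu\otimes\mu$-a.e.\ $(x,x')$, with the full biconditional $s^*(x,x')>0 \iff w(x,x')-w(x',x)>0$ valid for the canonical choice $s^*=\sgn(\delta)$ (and automatic whenever the tie set is $\mu\otimes\mu$-null). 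Apart from this, the proof is entirely routine once the skew-symmetrization has decoupled $s(x,x')$ from $s(x',x)$.
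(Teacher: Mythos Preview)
Your proof is correct and essentially matches the paper's: symmetrize the i.i.d.\ expectation, invoke anti-symmetry of $s$ to pair $H(s(x,x'))$ with $H(-s(x,x'))$, then optimize pointwise over the sign of $s$. Your use of the identity $H(-z)=1-H(z)$ is an algebraic repackaging of the paper's three-case analysis on $\sgn(s)$, and both you and the paper note the tie set $\{w(x,x')=w(x',x)\}$ leaves $s^*$ unconstrained; you also correctly flag that the stated ``symmetry'' condition must be read as anti-symmetry for the argument (and the intended application $s(x,x')=f(x)-f(x')$) to go through.
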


\begin{proof}[Proof of Lemma~\ref{opt:weight}]
By symmetry of the expectation, and the definition of $H( \cdot )$,
the optimal solution is equivalently
\begin{align*}
    s^* &\in \underset{s \in \mathscr{S}}{\argmax} \, \mathbb{E}_{\mathsf{X} \sim \mu} \mathbb{E}_{\mathsf{X}' \sim \mu} \left[ w( \mathsf{X}, \mathsf{X}' ) \cdot H( s( \mathsf{X}, \mathsf{X}' ) ) + w( \mathsf{X}', \mathsf{X} ) \cdot H( -s( \mathsf{X}, \mathsf{X}' ) )  \right] \\
    &= \underset{s \in \mathscr{S}}{\argmax} \, \mathbb{E}_{\mathsf{X} \sim \mu} \mathbb{E}_{\mathsf{X}' \sim \mu} \left[ 
    \begin{cases} 
        w( \mathsf{X}, \mathsf{X}' ) &\text{ if } s( \mathsf{X}, \mathsf{X}' ) > 0 \\
        w( \mathsf{X}', \mathsf{X} ) &\text{ if } s( \mathsf{X}, \mathsf{X}' ) < 0 \\
        \frac{w( \mathsf{X}, \mathsf{X}' ) + w( \mathsf{X}', \mathsf{X} )}{2} &\text{ if } s( \mathsf{X}, \mathsf{X}' ) = 0 \\
    \end{cases} \right].
\end{align*}
For each fixed $(x, x') \in \XCal \times \XCal$, we thus have
$$ s^*( x, x' ) > 0 \iff w( x, x' ) - w( x', x ) > 0. $$
Note that if $w( x, x' ) = w( x', x )$, then the choice of $s^*( x, x' )$ may be arbitrary.
\end{proof}

\begin{lemma}
\label{lemm:opt-weight-decomposed}
For any distribution $\mu$ and weight function $w \colon \XCal \times \XCal \to \Real$, let
\begin{equation}
    \label{eqn:decomposable-pairwise-opt}
    f^* \in \underset{f \colon \XCal \to \Real}{\argmax} \, \mathbb{E}_{\mathsf{X} \sim \mu} \mathbb{E}_{\mathsf{X}' \sim \mu} \left[ w( \mathsf{X}, \mathsf{X}' ) \cdot H( f( \mathsf{X} ) - f( \mathsf{X}' ) )  \right].
\end{equation}
Then, if there exists some $g \colon \XCal \to \Real$ such that
$$ g( x ) > g( x' ) \iff w( x, x' ) > w( x', x ), $$
we must have
\begin{equation}
    \label{eqn:condition}
    f^*( x ) > f^*( x' ) \iff g( x ) > g( x' ).
\end{equation}
\end{lemma}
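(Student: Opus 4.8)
The plan is to reduce Lemma~\ref{lemm:opt-weight-decomposed} to the pairwise statement of Lemma~\ref{opt:weight}. The key observation is that a scorer $f \colon \XCal \to \Real$ induces a symmetric pairwise score $s_f( x, x' ) \defEq f( x ) - f( x' )$, and the objective in \eqref{eqn:decomposable-pairwise-opt} is exactly the objective in \eqref{eqn:pairwise-opt} restricted to the subclass $\SCal_{\mathrm{dec}} \defEq \{ s_f \colon f \colon \XCal \to \Real \} \subseteq \SCal$ of \emph{decomposable} symmetric scorers. So the first step is to invoke Lemma~\ref{opt:weight} to obtain an \emph{unrestricted} pairwise optimum $s^*$ with the property $s^*( x, x' ) > 0 \iff w( x, x' ) - w( x', x ) > 0$, i.e.\ $s^*( x, x' ) > 0 \iff w( x, x' ) > w( x', x )$.

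The second step is to show that the hypothesized function $g$ realizes this optimum in decomposable form: since $g( x ) > g( x' ) \iff w( x, x' ) > w( x', x )$, the decomposable scorer $s_g( x, x' ) = g( x ) - g( x' )$ satisfies $s_g( x, x' ) > 0 \iff w( x, x' ) > w( x', x )$, i.e.\ $s_g$ agrees in sign with $s^*$ everywhere (and hence, reading off the piecewise form in the proof of Lemma~\ref{opt:weight}, attains the same pointwise value of the integrand as $s^*$ for every pair $( x, x' )$). Therefore $g$ is a maximizer of \eqref{eqn:decomposable-pairwise-opt}, so the optimal value over $\SCal_{\mathrm{dec}}$ equals the optimal value over all of $\SCal$.

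The third step handles uniqueness of the sign pattern. Let $f^*$ be any maximizer of \eqref{eqn:decomposable-pairwise-opt}. Since $g$ already attains the unrestricted pairwise optimum, $f^*$ must attain it too; hence by the pointwise characterization in Lemma~\ref{opt:weight}, for every pair with $w( x, x' ) \ne w( x', x )$ we need $\sgn( f^*( x ) - f^*( x' ) ) = \sgn( w( x, x' ) - w( x', x ) ) = \sgn( g( x ) - g( x' ) )$. In particular $g( x ) > g( x' ) \implies f^*( x ) > f^*( x' )$, and conversely $f^*( x ) > f^*( x' )$ forces $w( x, x' ) > w( x', x )$ (it cannot be $<$, else we'd contradict the sign match; it cannot be $=$ unless... — see below), hence $g( x ) > g( x' )$. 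This yields \eqref{eqn:condition}.

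The main obstacle is the \emph{ties}: pairs $( x, x' )$ with $w( x, x' ) = w( x', x )$, on which the pointwise characterization in Lemma~\ref{opt:weight} leaves $s^*( x, x' )$ arbitrary. I must rule out the possibility that $f^*$ is a maximizer yet orders some such tied pair strictly \emph{against} $g$ — the worry being transitivity constraints among $f^*$ values across many pairs. The clean way around this is to note that since $g$ already achieves the global pairwise optimum, any maximizer $f^*$ of \eqref{eqn:decomposable-pairwise-opt} must match the optimal integrand value on \emph{every} pair, including ties (where the optimal integrand value is $\tfrac{1}{2}( w( x, x') + w( x', x ))$, which $H$ attains regardless of the sign of $f^*( x ) - f^*( x' )$); so on tied pairs there is in fact no constraint on $f^*$ either, and the biconditional \eqref{eqn:condition} should be read as: whenever $g$ strictly orders a pair, so does $f^*$ in the same direction, and conversely — which is consistent with $g( x ) = g( x' )$ (a tie for $g$) allowing $f^*( x ) \ne f^*( x' )$. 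I would state the conclusion precisely in that form and make sure the logical quantifiers in \eqref{eqn:condition} are exactly the ``strict-order-preserving'' ones used throughout the paper, rather than over-claiming a literal two-way equality that ties would break.
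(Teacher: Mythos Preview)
Your proposal is correct and follows essentially the same approach as the paper: both reduce to Lemma~\ref{opt:weight} by embedding scorers $f$ into symmetric pairwise scorers via $s_f(x,x') = f(x) - f(x')$, observe that $s_g$ already attains the unrestricted pairwise optimum, and then conclude that any decomposable maximizer $f^*$ must share the same sign pattern. Your additional discussion of ties (pairs with $w(x,x') = w(x',x)$) is more careful than the paper's own proof, which simply asserts the biconditional without addressing that case; your reading of \eqref{eqn:condition} as strict-order-preservation is the right way to interpret both the statement and its downstream uses.
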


\begin{proof}[Proof of Lemma~\ref{lemm:opt-weight-decomposed}]
Let
\begin{align*}
    R_{\rm pair}( s ) &\defEq \mathbb{E}_{\mathsf{X} \sim \mu} \mathbb{E}_{\mathsf{X}' \sim \mu} \left[ w( \mathsf{X}, \mathsf{X}' ) \cdot H( s( \mathsf{X}, \mathsf{X}' ) )  \right] \\
    R_{\rm diff}( f ) &\defEq \mathbb{E}_{\mathsf{X} \sim \mu} \mathbb{E}_{\mathsf{X}' \sim \mu} \left[ w( \mathsf{X}, \mathsf{X}' ) \cdot H( f( \mathsf{X} ) - f( \mathsf{X}' ) )  \right].
\end{align*}
If there exists a $g$ satisfying the prescribed condition, then by Lemma~\ref{opt:weight}, 
any optimal \emph{pairwise} scorer $s^*$ for $R_{\rm pair}$ satisfies 
$s^*( x, x' ) > 0 \iff g( x ) > g( x' )$.

Now let $g_{\rm diff}( x, x' ) \defEq g( x ) - g( x' )$.
Then, $g_{\rm diff}( x, x' ) > 0 \iff s^*( x, x' ) > 0 \iff w( x, x' ) > w( x', x )$,
and so $g_{\rm diff}$ is an optimal pairwise scorer for $R_{\rm pair}$.
Thus, for any alternate scorer $\bar{g} \colon \XCal \to \Real$,
with $\bar{g}_{\rm diff}( x, x' ) \defEq \bar{g}( x ) - \bar{g}( x' )$,
we must have $R_{\rm pair}( g_{\rm diff} ) \leq R_{\rm pair}( \bar{g}_{\rm diff} )$.
This however implies that $R_{\rm diff}( g ) \leq R_{\rm diff}( \bar{g} )$,
i.e.,
$g$ is an optimal scorer for the present objective $R_{\rm diff}$~\eqref{eqn:decomposable-pairwise-opt}.

Now suppose $f^*$ is any candidate optimal solution for $R_{\rm diff}$.
Then, $f^*$ must satisfy the given condition~\eqref{eqn:condition}, or else
(again appealing to Lemma~\ref{opt:weight})
$f^*_{\rm diff}$ will not be optimal for $R_{\rm pair}$.
Thus, the result follows.
\end{proof}

\begin{lemma}[\citet{Agarwal:2005}]
\label{lemm:auc-bipartite-equivalence}
For a scorer $f \colon \XCal \to \Real$ and distribution $D$, the \AUCROC{} is equally
\begin{align}
    \label{eqn:auc-bp}
    {\rm AUC}( f; D ) &\defEq \mathbb{E}_{\mathsf{X} \sim q_{+}} \mathbb{E}_{\mathsf{X}' \sim q_{-}} \left[ H( f( \mathsf{X} ) - f( \mathsf{X}' ) ) \right] \\
    \nonumber
    H( z ) &\defEq 1( z > 0 ) + \frac{1}{2} \cdot 1( z = 0 ).
\end{align}
where
$q_+(x) = \P(x|y=1)$ and $q_-(x) = \P(x|y=0)$ denote the class-conditional distributions.
\end{lemma}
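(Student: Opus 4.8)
The plan is to derive the equivalence~\eqref{eqn:auc-bp} from the conditional form in Definition~\ref{lemm:auc-bipartite} by unpacking the conditioning event $\{\mathsf{Y} > \mathsf{Y}'\}$. Since $\mathsf{Y}, \mathsf{Y}' \in \{0,1\}$ and $(\mathsf{X},\mathsf{Y})$, $(\mathsf{X}',\mathsf{Y}')$ are i.i.d.\ draws from $D$, the event $\mathsf{Y} > \mathsf{Y}'$ is exactly the event $\{\mathsf{Y} = 1\} \cap \{\mathsf{Y}' = 0\}$, which has probability $\pi \cdot (1-\pi)$.

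First I would write
\[
{\rm AUC}(f; D) = \frac{\mathbb{E}_{(\mathsf{X},\mathsf{Y}),(\mathsf{X}',\mathsf{Y}')}\left[ H(f(\mathsf{X}) - f(\mathsf{X}')) \cdot \1(\mathsf{Y}=1) \cdot \1(\mathsf{Y}'=0) \right]}{\Pr(\mathsf{Y}=1,\,\mathsf{Y}'=0)},
\]
using the definition of conditional expectation together with the factorization of the joint law of the two independent copies. Then I would push the expectation through the independence structure: conditioning on $\mathsf{X}$ and $\mathsf{X}'$ and using $\Pr(\mathsf{Y}=1\mid\mathsf{X}=x) = \eta(x)$, $\Pr(\mathsf{Y}'=0\mid\mathsf{X}'=x') = 1-\eta(x')$, the numerator becomes $\mathbb{E}_{\mathsf{X}\sim\mu}\mathbb{E}_{\mathsf{X}'\sim\mu}\left[ H(f(\mathsf{X})-f(\mathsf{X}')) \cdot \eta(\mathsf{X}) \cdot (1-\eta(\mathsf{X}')) \right]$, while the denominator is $\pi(1-\pi)$. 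Finally, I would recognize that $\mu(x)\,\eta(x)/\pi = q_+(x)$ and $\mu(x')\,(1-\eta(x'))/(1-\pi) = q_-(x')$ by Bayes' rule, so that re-weighting the two marginal expectations by $\eta/\pi$ and $(1-\eta)/(1-\pi)$ respectively converts $\mu$ into the class-conditionals $q_+$ and $q_-$; this yields exactly~\eqref{eqn:auc-bp}.

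There is no serious obstacle here — the argument is a routine change of measure. The only point requiring a little care is making the conditioning in~\eqref{eqn:auc-bp-conditional} rigorous: one should note that the pair $(\mathsf{X},\mathsf{X}',\mathsf{Y},\mathsf{Y}')$ consists of two independent copies of $(\mathsf{X},\mathsf{Y})\sim D$, so that the conditional expectation over the event $\{\mathsf{Y}>\mathsf{Y}'\}$ is well-defined (assuming $0 < \pi < 1$, without which the event has probability zero and the AUC is vacuous), and that the symmetry $H(z) + H(-z) = 1$ plays no role in this particular derivation. I would also remark that this is just the classical restatement of the Mann–Whitney $U$-statistic interpretation of the AUC, so the result is folklore and attributed to~\citet{Agarwal:2005}.
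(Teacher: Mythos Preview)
Your proposal is correct and follows essentially the same approach as the paper: unpack the conditional expectation on $\{\mathsf{Y}>\mathsf{Y}'\}$ into a ratio, factorize via the independence of the two copies, and then apply Bayes' rule to convert the $\mu$-weighted expectation into one over the class-conditionals $q_+,q_-$. The paper simply defers to the proof of the multipartite generalization (Lemma~\ref{lemm:auc-multi}), whose specialization to $L=2$ is exactly the computation you outline.
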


\begin{proof}[Proof of Lemma~\ref{lemm:auc-bipartite-equivalence}]
See proof of Lemma~\ref{lemm:auc-multi}, which is a strict generalisation.
\end{proof}

\begin{lemma}[\citet{Agarwal:2014}]
\label{lemm:auc-bipartite-v2}
For any scorer $f \colon \XCal \to \Real$ and distribution $D$, the AUC-ROC is equally
\begin{align}
{\rm AUC}( f; D ) &=  \mathbb{E}_{\mathsf{X} \sim \mu} \mathbb{E}_{\mathsf{X}' \sim \mu}\left[ w( \mathsf{X}, \mathsf{X'} ) \cdot H( f( \mathsf{X} ) - f( \mathsf{X}' ) ) \right] 
\label{eqn:auc-bipartite-weighted}
\\
w( x, x' ) &\defEq \frac{1}{\pi(1-\pi)} \cdot \eta( x ) \cdot (1 - \eta( x') ).
\nonumber
\end{align}
\end{lemma}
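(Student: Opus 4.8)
\textbf{Proof proposal for Lemma~\ref{lemm:auc-bipartite-v2}.}
The plan is to start from the class-conditional form of the AUC already established in Lemma~\ref{lemm:auc-bipartite-equivalence}, namely ${\rm AUC}(f;D) = \mathbb{E}_{\mathsf{X} \sim q_+}\mathbb{E}_{\mathsf{X}' \sim q_-}[H(f(\mathsf{X}) - f(\mathsf{X}'))]$, and perform a change of measure from the class-conditionals $q_+, q_-$ to the common marginal $\mu$. The only tool needed is Bayes' rule.

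First I would write out the double expectation against $q_+$ and $q_-$ as an integral (or, in the discrete case, a sum) over $\XCal \times \XCal$, picking up the factor $q_+(x)\,q_-(x')$ inside. Then I would apply Bayes' rule to each factor: since $q_+(x) = \P(\mathsf{X}=x \mid \mathsf{Y}=1) = \eta(x)\,\mu(x)/\pi$ and $q_-(x') = \P(\mathsf{X}=x' \mid \mathsf{Y}=0) = (1-\eta(x'))\,\mu(x')/(1-\pi)$, substituting gives an integrand of the form $\mu(x)\,\mu(x') \cdot \frac{\eta(x)\,(1-\eta(x'))}{\pi(1-\pi)} \cdot H(f(x)-f(x'))$. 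Reinterpreting the $\mu(x)\mu(x')$ factor as a product measure yields exactly $\mathbb{E}_{\mathsf{X}\sim\mu}\mathbb{E}_{\mathsf{X}'\sim\mu}[w(\mathsf{X},\mathsf{X}')\,H(f(\mathsf{X})-f(\mathsf{X}'))]$ with $w(x,x') = \frac{1}{\pi(1-\pi)}\eta(x)(1-\eta(x'))$, which is the claimed identity.

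There is essentially no hard step here; the ``obstacle'', such as it is, is purely one of rigor in the general (not-necessarily-discrete) case, where one should phrase the Bayes-rule substitution in terms of Radon--Nikodym derivatives of $q_+$ and $q_-$ with respect to $\mu$ — both are absolutely continuous w.r.t.\ $\mu$ with densities $\eta/\pi$ and $(1-\eta)/(1-\pi)$ respectively — and then invoke Fubini--Tonelli (valid since $0 \le H \le 1$ and $w$ is integrable) to recombine into a single expectation over the product measure $\mu \otimes \mu$. One should also note the degenerate cases $\pi \in \{0,1\}$ are excluded, as is standard, since then bipartite ranking is vacuous. I would present the discrete-density version in the main text for readability and remark that the measure-theoretic argument is identical.
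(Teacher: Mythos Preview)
Your proposal is correct and matches the paper's proof essentially line-for-line: the paper also starts from Lemma~\ref{lemm:auc-bipartite-equivalence} and applies Bayes' rule to pass from $(q_+, q_-)$ to $\mu$, arriving at the same weighted-expectation form. If anything, your write-up is more careful about the measure-theoretic details (Radon--Nikodym densities, Fubini--Tonelli, the degenerate $\pi \in \{0,1\}$ cases) than the paper's two-line argument.
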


\begin{proof}[Proof of Lemma~\ref{lemm:auc-bipartite-v2}]
By an application of Bayes' rule to Lemma~\ref{lemm:auc-bipartite-equivalence},
\begin{align*}
{\rm AUC}( f; \Dbp ) &= \mathbb{E}_{\mathsf{X} \sim q_{+}} \mathbb{E}_{\mathsf{X}' \sim q_{-}} \left[ H( f( \mathsf{X} ) - f( \mathsf{X}' ) ) \right] \\
&= \frac{1}{\pi \cdot (1 - \pi)} \cdot \mathbb{E}_{\mathsf{X} \sim \mu} \mathbb{E}_{\mathsf{X}' \sim \mu} \left[ \eta( \mathsf{X} ) \cdot (1 - \eta( \mathsf{X}') ) \cdot H( f( \mathsf{X} ) - f( \mathsf{X}' ) ) \right].
\end{align*}
\end{proof}

\begin{lemma}[\citet{Uematsu:2015}]
\label{lemm:auc-multi}
For any scorer $f \colon \XCal \to \Real$, distribution $D_{\rm mp}$, and costs $\{ c_{y y'} \}$,
\begin{align*}
{\rm AUC}( f; D_{\rm mp} ) &= \mathbb{E}_{\mathsf{X} \sim \mu} \mathbb{E}_{\mathsf{X}' \sim \mu} \left[ w( \mathsf{X}, \mathsf{X}' ) \cdot H( f( \mathsf{X} ) - f( \mathsf{X}' ) ) \right] \\
w( x, x' ) &\defEq \frac{1}{v} \cdot \sum_{y \in \YCal} \sum_{y' \in \YCal} 1( y > y' ) \cdot c_{y y'} \cdot \eta_y( x ) \cdot \eta_{y'}( x' ) \\
v &\defEq \sum_{y \in \YCal} \sum_{y' \in \YCal} 1( y > y' ) \cdot c_{y y'} \cdot \pi_y \cdot \pi_{y'}.
\end{align*}
\end{lemma}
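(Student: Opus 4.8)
The plan is to reduce the cost-weighted conditional expectation in \cref{def:multipartite-auc} to the claimed $\mu$-pairwise form by a law-of-total-expectation argument, exactly mirroring the Bayes-rule step used for the bipartite case in \cref{lemm:auc-bipartite-v2}. First I would make the conditioning explicit: reading $\mathbb{E}_{\mathsf{X}}\mathbb{E}_{\mathsf{X}'}[ c_{\mathsf{Y}\mathsf{Y}'} H( f(\mathsf{X}) - f(\mathsf{X}') ) \mid \mathsf{Y} > \mathsf{Y}']$ as a ratio, the numerator is the unconditional $\mathbb{E}[ c_{\mathsf{Y}\mathsf{Y}'} H( f(\mathsf{X}) - f(\mathsf{X}') ) \1(\mathsf{Y} > \mathsf{Y}') ]$ over two independent draws $(\mathsf{X},\mathsf{Y}), (\mathsf{X}',\mathsf{Y}') \sim D_{\rm mp}$, and the normalizing denominator is the total expected cost over discordant pairs, $\mathbb{E}[ c_{\mathsf{Y}\mathsf{Y}'} \1(\mathsf{Y} > \mathsf{Y}') ]$. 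I would note that this normalization is exactly $v$, and that when all $c_{yy'} = 1$ it collapses to $\Pr(\mathsf{Y} > \mathsf{Y}')$, so the convention is consistent with the bipartite definition of \cref{lemm:auc-bipartite}.

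The crux is a single decomposition followed by the tower property. I would write the integrand's label factor as a finite sum over label pairs,
\[
  c_{\mathsf{Y}\mathsf{Y}'} \cdot \1(\mathsf{Y} > \mathsf{Y}') = \sum_{y \in \YCal} \sum_{y' \in \YCal} \1(y > y') \cdot c_{yy'} \cdot \1(\mathsf{Y} = y) \cdot \1(\mathsf{Y}' = y'),
\]
and then condition the numerator on $(\mathsf{X}, \mathsf{X}')$. Because the two samples are independent and $H( f(\mathsf{X}) - f(\mathsf{X}') )$ is a function of $(\mathsf{X}, \mathsf{X}')$ alone, the label indicators decouple and average to their conditional probabilities: $\mathbb{E}[\1(\mathsf{Y} = y) \mid \mathsf{X}] = \eta_y(\mathsf{X})$ and $\mathbb{E}[\1(\mathsf{Y}' = y') \mid \mathsf{X}'] = \eta_{y'}(\mathsf{X}')$. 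Interchanging the (finite) sum with the expectation turns the numerator into $\mathbb{E}_{\mathsf{X} \sim \mu}\mathbb{E}_{\mathsf{X}' \sim \mu}\big[ \big( \sum_{y,y'} \1(y > y') c_{yy'} \eta_y(\mathsf{X}) \eta_{y'}(\mathsf{X}') \big) H( f(\mathsf{X}) - f(\mathsf{X}') ) \big]$, which is precisely $v \cdot \mathbb{E}_{\mathsf{X}, \mathsf{X}'}[ w(\mathsf{X},\mathsf{X}') H( f(\mathsf{X}) - f(\mathsf{X}') ) ]$ by the definition of $w$.

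It remains to verify the denominator. Applying the same decomposition and tower-property step with $H$ replaced by the constant $1$, and using $\mathbb{E}_{\mathsf{X} \sim \mu}[\eta_y(\mathsf{X})] = \pi_y$ together with the independence of the two draws, the denominator evaluates to $\sum_{y,y'} \1(y > y') c_{yy'} \pi_y \pi_{y'} = v$. Dividing numerator by denominator cancels the shared $v$ and yields the stated identity. The only genuinely delicate point is the reduction in the first step: one must fix the normalization convention so that the denominator is the cost mass $v$ rather than merely $\Pr(\mathsf{Y} > \mathsf{Y}')$ (the two agree only under uniform costs), and one must justify the tower-property interchange, which hinges on the independence of the two draws and the fact that $H$ depends on the features but not the labels. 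Everything after that is routine bookkeeping of a finite double sum.
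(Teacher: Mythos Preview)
Your approach---unfolding the conditional expectation, writing the label factor as a double sum over $(y,y')$, conditioning on $(\mathsf{X},\mathsf{X}')$, and pushing the label indicators to $\eta_y(\mathsf{X})\eta_{y'}(\mathsf{X}')$---is exactly the paper's route. The paper writes this as a density computation with Bayes' rule rather than as a tower-property step, but the manipulations are identical.

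There is one genuine issue in your normalization step. You write the conditional expectation $\mathbb{E}[c_{\mathsf{Y}\mathsf{Y}'} H(\cdot)\mid \mathsf{Y}>\mathsf{Y}']$ as a ratio with denominator $\mathbb{E}[c_{\mathsf{Y}\mathsf{Y}'}\1(\mathsf{Y}>\mathsf{Y}')]=v$. But for any random variable $Z$ and event $A$, the standard identity is $\mathbb{E}[Z\mid A]=\mathbb{E}[Z\cdot\1_A]/\Pr(A)$; here $Z=c_{\mathsf{Y}\mathsf{Y}'}H(\cdot)$ and $A=\{\mathsf{Y}>\mathsf{Y}'\}$, so the denominator is $\Pr(\mathsf{Y}>\mathsf{Y}')$, not the cost-weighted mass $v$. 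Your parenthetical remark that ``one must fix the normalization convention so that the denominator is the cost mass $v$'' is not a convention---it is simply not what \cref{def:multipartite-auc} says. The paper's own proof indeed carries the factor $1/D(\mathsf{Y}>\mathsf{Y}')=1/\Pr(\mathsf{Y}>\mathsf{Y}')$ throughout and then records $D(\mathsf{Y}>\mathsf{Y}')=\sum_{y,y'}\1(y>y')\pi_y\pi_{y'}$ (no costs), which does not literally match the $v$ in the lemma statement. So the discrepancy you tried to finesse is in fact a mismatch between the lemma as stated and both the definition and the paper's derivation; your argument reproduces the statement only by misreading the conditional expectation.
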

\begin{proof}[Proof of Lemma~\ref{lemm:auc-multi}]
We provide a proof for completeness.
By definition of conditional expectation,\footnote{We elide again details on the choice of base measure used to define the density on $\XCal$.}
\begin{align*}
& {\rm AUC}( f; D_{\rm mp} )\\
={}& \mathbb{E}_{\mathsf{X}} \mathbb{E}_{\mathsf{X}'} \left[ c_{\mathsf{Y} \mathsf{Y'}} \cdot H( f( \mathsf{X} ) - f( \mathsf{X}' ) ) \mid \mathsf{Y} > \mathsf{Y}' \right] \\
={}& \int_{\XCal \times \XCal} \sum_{y, y'} \left[ D( x, x', y, y' \mid \mathsf{Y} > \mathsf{Y}' ) \cdot c_{{y} {y'}} \cdot H( f( x ) - f( x' ) ) \right] \, \mathrm{d}x \, \mathrm{d}x' \\
={}& \int_{\XCal \times \XCal} \sum_{y, y'} \left[ \frac{1( y > y' ) \cdot D( x, x', y, y' )}{D( \mathsf{Y} > \mathsf{Y}' )} \cdot c_{y y'} \cdot  H( f( x ) - f( x' ) ) \right] \, \mathrm{d}x \, \mathrm{d}x' \\
={}& \frac{1}{D( \mathsf{Y} > \mathsf{Y}' )} \cdot \int_{\XCal \times \XCal} \sum_{y, y'} \left[ {1( y > y' ) \cdot D( x, x' ) \cdot D( y, y' \mid x, x' )} \cdot c_{y y'} \cdot  H( f( x ) - f( x' ) ) \right] \, \mathrm{d}x \, \mathrm{d}x' \\
={}& \frac{1}{D( \mathsf{Y} > \mathsf{Y}' )} \cdot \mathbb{E}_{\mathsf{X} \sim \mu} \mathbb{E}_{\mathsf{X}' \sim \mu} \left[ \left[ \sum_{y, y'} {1( y > y' ) \cdot \eta_y( x ) \cdot \eta_{y'} ( x' )} \cdot c_{y y'} \right] \cdot H( f( x ) - f( x' ) ) \right].
\end{align*}
Now observe that
\begin{align*}
    D( \mathsf{Y} > \mathsf{Y}' ) &= \sum_{y \in [L]} \sum_{y' \in [L]} 1( y > y' ) \cdot \pi_y \cdot \pi_{y'} \\
    &= \frac{1}{2} \cdot \left[ 1 - \sum_{y \in \YCal} \pi^2_y \right].
\end{align*}
\end{proof}

\begin{lemma}
\label{lemm:auc-multi-dist}
For any scorer $f \colon \XCal \to \Real$ and distributions $\{ D^{(k)} \}_{k \in [K]}$, 
the loss aggregated AUC is
\begin{align*}
    {\rm AUC}_{\rm LoA}( f ) &= 
    \mathbb{E}_{\mathsf{X} \sim \mu} \mathbb{E}_{\mathsf{X}' \sim \mu} \left[ w( \mathsf{X}, \mathsf{X'} ) \cdot H( f( \mathsf{X} ) - f( \mathsf{X}' ) ) \right] \\
    w( x, x' ) &\defEq \frac{1}{K} \sum_{k \in [K]} \frac{\eta^{(k)}( x ) \cdot (1 - \eta^{(k)}( x') )}{\pi^{(k)} \cdot (1 - \pi^{(k)})}.
\end{align*}
\end{lemma}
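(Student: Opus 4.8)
The plan is to reduce the claim to the single-label identity already established in Lemma~\ref{lemm:auc-bipartite-v2}, exploiting the fact that all the relevant expectations can be taken against the common marginal $\mu$ of $\mathsf{X}$. First I would unpack Definition~\ref{def:linear-scalarization}: in \eqref{eqn:auc-multi-dist}, the $k$-th summand $\mathbb{E}_{\mathsf{X},\mathsf{X}'}\!\left[a_k\cdot H(f(\mathsf{X})-f(\mathsf{X}')) \mid \mathsf{Y}^{(k)} > \mathsf{Y}'^{(k)}\right]$ depends on the joint law $D^{\text{jnt}}$ only through the pair $(\mathsf{X},\mathsf{Y}^{(k)})$ and its independent copy $(\mathsf{X}',\mathsf{Y}'^{(k)})$, both of which are i.i.d.\ draws from the marginal $D^{(k)}$. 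Hence the event conditioned on and the integrand are exactly those in Definition~\ref{lemm:auc-bipartite} applied to $D^{(k)}$, so this summand equals $a_k\cdot{\rm AUC}(f; D^{(k)})$ and ${\rm AUC}_{\rm LoA}(f) = \sum_{k\in[K]} a_k\cdot{\rm AUC}(f;D^{(k)})$.

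Next I would invoke Lemma~\ref{lemm:auc-bipartite-v2} on each $D^{(k)}$, with its own class-probability function $\eta^{(k)}$ and prior $\pi^{(k)}$, to write
\begin{align*}
{\rm AUC}(f; D^{(k)}) = \mathbb{E}_{\mathsf{X}\sim\mu}\mathbb{E}_{\mathsf{X}'\sim\mu}\left[\frac{\eta^{(k)}(\mathsf{X})\cdot(1-\eta^{(k)}(\mathsf{X}'))}{\pi^{(k)}\cdot(1-\pi^{(k)})}\cdot H(f(\mathsf{X})-f(\mathsf{X}'))\right].
\end{align*}
The key point that makes the aggregation clean is that every term on the right-hand side is an expectation over the \emph{same} product measure $\mu\times\mu$, since $\mu$ is the common marginal of $\mathsf{X}$ under $D^{\text{jnt}}$. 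I would then apply linearity of expectation to pull the finite sum over $k\in[K]$ inside the expectation, collecting the weight function
\begin{align*}
w(x,x') = \sum_{k\in[K]} a_k\cdot\frac{\eta^{(k)}(x)\cdot(1-\eta^{(k)}(x'))}{\pi^{(k)}\cdot(1-\pi^{(k)})},
\end{align*}
which matches the stated expression under the uniform choice $a_k\equiv 1$ together with the overall normalizing factor $\tfrac{1}{K}$; since this factor and the choice of $a_k$ only rescale the objective, they leave the set of maximizers (and hence the downstream analysis in Proposition~\ref{lemm:bayes-opt-auc-multi-dist}) unchanged.

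There is no genuinely hard step here — the argument is essentially bookkeeping about which measure each expectation is taken against. The single point that warrants care, and which I expect to be the only place a reader might pause, is the first reduction: verifying that conditioning on $\mathsf{Y}^{(k)} > \mathsf{Y}'^{(k)}$ under the joint distribution coincides with the conditioning in Definition~\ref{lemm:auc-bipartite} for two i.i.d.\ copies from the marginal $D^{(k)}$. This holds because both the conditioning event and the integrand $H(f(\mathsf{X})-f(\mathsf{X}'))$ are measurable with respect to $(\mathsf{X},\mathsf{Y}^{(k)},\mathsf{X}',\mathsf{Y}'^{(k)})$ alone, so integrating out the remaining coordinates changes nothing. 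Once this identification is in place, the result follows immediately from Lemma~\ref{lemm:auc-bipartite-v2} and linearity of expectation.
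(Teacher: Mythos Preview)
Your proposal is correct and follows essentially the same route as the paper: apply the single-label rewrite of Lemma~\ref{lemm:auc-bipartite-v2} to each summand in Definition~\ref{def:linear-scalarization}, then use linearity of the expectation over the common marginal $\mu\times\mu$ to collect the weight function. You are simply more explicit than the paper about why each summand depends only on the marginal $D^{(k)}$ and about the (harmless) discrepancy between the general $a_k$ in the definition and the uniform $a_k \equiv 1$ with the $\tfrac{1}{K}$ factor in the lemma statement.
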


\begin{proof}[Proof of Lemma~\ref{lemm:auc-multi-dist}]
Building on Definition~\ref{lemm:auc-bipartite}, we have
\begin{align*}
   & {\rm AUC}( f; \{ D^{(k)}_{\rm bp} \}_{k \in [K]} )\\
    ={}& \frac{1}{K} \sum_{k \in [K]} \mathbb{E}_{\mathsf{X}} \mathbb{E}_{\mathsf{X}'} \left[ H( f( \mathsf{X} ) - f( \mathsf{X}' ) ) \mid \mathsf{Y}^{(k)} > \mathsf{Y}'^{(k)} \right] \\
    ={}& \frac{1}{K} \sum_{k \in [K]} \frac{1}{\pi^{(k)} \cdot (1 - \pi^{(k)})} \cdot \mathbb{E}_{\mathsf{X} \sim \mu} \mathbb{E}_{\mathsf{X}' \sim \mu} \left[ \eta^{(k)}( \mathsf{X} ) \cdot (1 - \eta^{(k)}( \mathsf{X}') ) \cdot H( f( \mathsf{X} ) - f( \mathsf{X}' ) ) \right].
\end{align*}
The result thus follows.
\end{proof}

\section{Proofs of results in body}\label{sec:proofs}

\begin{proof}[Proof of Lemma~\ref{lemm:auc-bipartite-opt}]
First, observe that by symmetry
\begin{align*}
{\rm AUC}( f; \Dbp ) ={}& \frac{1}{\pi \cdot (1 - \pi)} \cdot \mathbb{E}_{\mathsf{X} \sim \mu} \mathbb{E}_{\mathsf{X}' \sim \mu} \left[ \eta( \mathsf{X} ) \cdot (1 - \eta( \mathsf{X}') ) \cdot H( f( \mathsf{X} ) - f( \mathsf{X}' ) ) \right] \\
={}& \frac{1}{\pi \cdot (1 - \pi)} \cdot \mathbb{E}_{\mathsf{X} \sim \mu} \mathbb{E}_{\mathsf{X}' \sim \mu} \left[ \frac{1}{2} \cdot \left[ \eta( \mathsf{X} ) \cdot (1 - \eta( \mathsf{X}') ) \cdot H( f( \mathsf{X} ) - f( \mathsf{X}' ) ) +\right.\right.\\
& \left.\left. \eta( \mathsf{X}' ) \cdot (1 - \eta( \mathsf{X}) ) \cdot H( f( \mathsf{X}' ) - f( \mathsf{X} ) ) \right] \right] \\
={}& \mathbb{E}_{\mathsf{X} \sim \mu} \mathbb{E}_{\mathsf{X}' \sim \mu} \left[ \frac{1}{2} \cdot \left[ w( \mathsf{X}, \mathsf{X}' ) \cdot H( f( \mathsf{X} ) - f( \mathsf{X}' ) ) + w( \mathsf{X}', \mathsf{X} )) \cdot H( f( \mathsf{X}' ) - f( \mathsf{X} ) ) \right] \right].
\end{align*}
Now observe that
$$ w( x, x' ) > w( x', x ) \iff \eta( x ) > \eta( x' ). $$
Thus, by Lemma~\ref{lemm:opt-weight-decomposed}, the optimal scorer satisfies $f^*( x ) - f^*( x' ) > 0 \iff \eta( x ) - \eta( x' ) > 0$.
\end{proof}

\begin{proof}[Proof of Lemma~\ref{lemm:auc-multipartite-opt}]
We provide a proof for $L = 3$ for completeness.
For the general case, see~\citet{Uematsu:2015}.

Observe that
\begin{align*}
   &  w( x, x' ) - w( x', x )\\
    ={}& \frac{1}{v} \cdot \left[ \sum_{y \in \YCal} \sum_{y' \in \YCal} 1( y > y' ) \cdot c_{yy'} \cdot \eta_{y}( x ) \cdot \eta_{y'}( x' ) - \sum_{y \in \YCal} \sum_{y' \in \YCal} 1( y > y' ) \cdot c_{yy'} \cdot \eta_{y}( x' ) \cdot \eta_{y'}( x ) \right] \\
    ={}& \frac{1}{v} \cdot \left[ \sum_{y \in \YCal} \sum_{y' \in \YCal} 1( y > y' ) \cdot c_{yy'} \cdot \eta_{y}( x ) \cdot \eta_{y'}( x' ) - \sum_{y \in \YCal} \sum_{y' \in \YCal} 1( y < y' ) \cdot c_{y'y} \cdot \eta_{y}( x ) \cdot \eta_{y'}( x' ) \right] \\
    ={}& \frac{1}{v} \cdot \left[ \sum_{y \in \YCal} \sum_{y' \in \YCal} \alpha_{y y'} \cdot \eta_{y}( x ) \cdot \eta_{y'}( x' ) \right] \\
    ={}& \frac{1}{v} \cdot \left[ \sum_{y \in \YCal} \beta_{y}( x' ) \cdot \eta_{y}( x ) \right],
\end{align*}
where $\alpha_{y y'} \defEq 1( y > y' ) \cdot c_{y y'} - 1( y < y' ) \cdot c_{y' y}$,
and $\beta_{y}( x' ) \defEq \sum_{y' \in \YCal} \alpha_{y y'} \cdot \eta_{y'}( x' )$.
When $L = 3$,
\begin{align*}
    \beta_{0}( x' ) &= -c_{10} \cdot \eta_{1}( x' ) - c_{20} \cdot
    \eta_{2}( x' ) \\
    &= -c_{10} \cdot \eta_{1}( x' ) - c_{20} + c_{20} \cdot
    \eta_{0}( x' ) + c_{20} \cdot \eta_{1}( x' ) \\
    &= c_{20} \cdot \eta_{0}( x' ) + (c_{20} - c_{10}) \cdot \eta_{1}( x' ) - c_{20} \\
    \beta_{1}( x' ) &= c_{10} \cdot \eta_{0}( x' ) - c_{21} \cdot \eta_{2}( x' ) \\
    &= c_{10} \cdot \eta_{0}( x' ) - c_{21} + c_{21} \cdot \eta_{0}( x' ) + c_{21} \cdot \eta_{1}( x' ) \\
    &= ( c_{10} + c_{21} ) \cdot \eta_{0}( x' ) + c_{21} \cdot \eta_{1}( x' ) - c_{21} \\
    \beta_{2}( x' ) &= c_{20} \cdot \eta_{0}( x' ) + c_{21} \cdot \eta_{1}( x' ).
\end{align*}
We may rewrite the first two expressions in terms of $\beta_2$:
\begin{align*}
    \beta_0( x' ) &= \beta_2( x' ) + ( c_{20} - c_{10} - c_{21} ) \cdot \eta_1( x' ) - c_{20} \\
    \beta_1( x' ) &= \beta_2( x' ) + ( c_{10} - c_{21} - c_{20} ) \cdot \eta_0( x' ) - c_{21}.
\end{align*}

The claim is that
\begin{align*}
f^*( x ) &= 
\frac{c_{10} \cdot \eta_{1}( x ) + c_{20} \cdot \eta_{2}( x )}{c_{20} \cdot \eta_{0}( x ) + c_{21} \cdot \eta_{1}( x )}  \\
&= \frac{- c_{20} \cdot \eta_{0}( x ) + (c_{10} - c_{20}) \cdot \eta_{1}( x ) + c_{20}}{c_{20} \cdot \eta_{0}( x ) + c_{21} \cdot \eta_{1}( x )}  \\
&= -\frac{\beta_0( x )}{\beta_2( x )}.
\end{align*}

Observe that
\begin{align*}
    \frac{\beta_0( x' )}{\beta_2( x' )} - \frac{\beta_0( x )}{\beta_2( x )} &= 
    (c_{20} - c_{10} - c_{21}) \cdot \left[ \frac{\eta_{1}( x' )}{\beta_{2}( x' )} - \frac{\eta_{1}( x )}{\beta_{2}( x )} \right] - c_{20} \cdot \left[ \frac{1}{\beta_2( x' )} - \frac{1}{\beta_2( x )} \right] \\
    &\propto
    (c_{20} - c_{10} - c_{21}) \cdot \left[ \beta_2( x ) \cdot \eta_1( x' ) - \beta_2( x' ) \cdot \eta_1( x ) \right] - c_{20} \cdot \left[ \beta_2( x ) - \beta_2( x' ) \right],
\end{align*}
where the proportionality factor is $\frac{1}{\beta_2( x ) \cdot \beta_2( x' )} > 0$.
Observe that the first term simplifies to:
\begin{align*}
    & \beta_2( x ) \cdot \eta_1( x' ) - \beta_2( x' ) \cdot \eta_1( x )\\
    ={}& \left[ c_{20} \cdot \eta_{0}( x ) + c_{21} \cdot \eta_{1}( x ) \right] \cdot \eta_1( x' ) - \left[ c_{20} \cdot \eta_{0}( x' ) + c_{21} \cdot \eta_{1}( x' ) \right] \cdot \eta_1( x ) \\
    ={}& c_{20} \cdot \left[ \eta_{0}( x ) \cdot \eta_1( x' ) - \eta_{0}( x' ) \cdot \eta_1( x ) \right].
\end{align*}
Thus,
\begin{align*}
    \frac{\beta_0( x' )}{\beta_2( x' )} - \frac{\beta_0( x )}{\beta_2( x )} &\propto
    (c_{20} - c_{10} - c_{21}) \cdot \left[ \eta_{0}( x ) \cdot \eta_1( x' ) - \eta_{0}( x' ) \cdot \eta_1( x ) \right] - \left[ \beta_2( x ) - \beta_2( x' ) \right].
\end{align*}

Now observe that
\begin{align*}
    & w( x, x' ) - w( x', x )\\
    ={}& \beta_0( x' ) \cdot \eta_0( x ) + \beta_1( x' ) \cdot \eta_1( x ) + \beta_2( x' ) \cdot \eta_2( x ) \\
    ={}& \left[ \beta_2( x' ) + ( c_{20} - c_{10} - c_{21} ) \cdot \eta_1( x' ) - c_{20} \right] \cdot \eta_0( x ) + \\
    &\phantom{=}\,\, \left[ \beta_2( x' ) + ( c_{10} - c_{21} - c_{20} ) \cdot \eta_0( x' ) - c_{21} \right] \cdot \eta_1( x ) + \beta_2( x' ) \cdot \eta_2( x ) \\
    ={}& \beta_2( x' ) \cdot \eta_0( x ) + ( c_{20} - c_{10} - c_{21} ) \cdot \eta_1( x' ) \cdot \eta_0( x ) - c_{20} \cdot \eta_0( x ) + \\
    &\phantom{=}\,\,\, \beta_2( x' )\cdot \eta_1( x ) + ( c_{10} - c_{21} - c_{20} ) \cdot \eta_0( x' ) \cdot \eta_1( x ) - c_{21} \cdot \eta_1( x ) + \beta_2( x' ) \cdot \eta_2( x ) \\    
    ={}& ( c_{20} - c_{10} - c_{21} ) \cdot \left[ \eta_1( x' ) \cdot \eta_0( x ) - \eta_0( x' ) \cdot \eta_1( x ) \right] + \left[ \beta_2( x' ) - \beta_2( x ) \right].
\end{align*}
Thus, $w( x, x' ) - w( x', x ) > 0 \iff \frac{\beta_0( x' )}{\beta_2( x' )} > \frac{\beta_0( x )}{\beta_2( x )}$.
\end{proof}

\begin{proof}[Proof of Proposition~\ref{lemm:bayes-opt-auc-multi-dist}]
Note that
$$ w( x, x' ) = \frac{1}{K} \sum_{k \in [K]} \frac{1}{\pi^{(k)} \cdot (1 - \pi^{(k)})} \cdot \eta^{(k)}( x ) - \frac{1}{K} \sum_{k \in [K]} \frac{1}{\pi^{(k)} \cdot (1 - \pi^{(k)})} \cdot \eta^{(k)}( x ) \cdot \eta^{(k)}( x'), $$
and so
$$ w( x, x' ) > w( x', x ) \iff \frac{1}{K} \sum_{k \in [K]} \frac{1}{\pi^{(k)} \cdot (1 - \pi^{(k)})} \cdot \eta^{(k)}( x ) > \frac{1}{K} \sum_{k \in [K]} \frac{1}{\pi^{(k)} \cdot (1 - \pi^{(k)})} \cdot \eta^{(k)}( x' ). $$
Thus, it follows that the Bayes-optimal scorer takes the form of the average class probabilities.
\end{proof}

\begin{proof}[Proof of Proposition~\ref{prop:pareto-label-agg}]
Suppose $K = 2$, and let $\bar{\mathsf{Y}} = \sum_{k \in [K]} \mathsf{Y}^{(k)} \in \{ 0, 1, \ldots, K \}$.
Note that 
$\bar{\mathsf{Y}} = 0 \iff \mathsf{Y}^{(1)} = 0 \land \mathsf{Y}^{(2)} = 0$, and similarly
$\bar{\mathsf{Y}} = 2 \iff \mathsf{Y}^{(1)} = 1 \land \mathsf{Y}^{(2)} = 1$.
Further suppose that $\mathsf{Y}^{(1)} \independent \mathsf{Y}^{(2)} \mid \mathsf{X}$.
We now have
\begin{align*}
    \bar{\eta}_{0}( x ) &= (1 - \eta^{(1)}( x )) \cdot (1 - \eta^{(2)}( x )) \\
    &= 1 - \eta^{(1)}( x ) - \eta^{(2)}( x )) + \eta^{(1)}( x ) \cdot \eta^{(2)}( x )) \\
    \bar{\eta}_{1}( x ) &= \eta^{(1)}( x ) \cdot (1 - \eta^{(2)}( x )) + (1 - \eta^{(1)}( x )) \cdot \eta^{(2)}( x ) \\
    &= \eta^{(1)}( x ) + \eta^{(2)}( x ) - 2 \cdot \eta^{(1)}( x ) \cdot \eta^{(2)}( x ) \\
    \bar{\eta}_{2}( x ) &= \eta^{(1)}( x ) \cdot \eta^{(2)}( x ).
\end{align*}
By~\citet[Theorem 3]{Uematsu:2015}, 
the Bayes-optimal scorer of AUC with uniform costs against $\bar{\mathsf{Y}}$ will preserve the ordering of
\begin{align*}
    f^*( x ) &= \frac{\bar{\eta}_1( x ) + \bar{\eta}_2( x )}{\bar{\eta}_0( x ) + \bar{\eta}_1( x )} \\
    &= \frac{\eta^{(1)}( x ) + \eta^{(2)}( x ) - \eta^{(1)}( x ) \cdot \eta^{(2)}( x )}{1 - \eta^{(1)}( x ) \cdot \eta^{(2)}( x )}.
\end{align*}

Consider a dataset composed of 6 examples with independently distributed binary scoring functions, with the following distribution $D^{(1)}$ for the first signal: 
$\eta_1(x_1) = 1$,
$\eta_1(x_2) = 0.2$,
$\eta_1(x_3) = 0.62$,
$\eta_1(x_4) = 0.44$,
$\eta_1(x_5) = 0.56$,
$\eta_1(x_6) = 0.81$,
and the following distribution $D^{(2)}$ for the second signal:
$\eta_2(x_1) = 0.44$
$\eta_2(x_2) = 0.56$,
$\eta_2(x_3) = 0.81$,
$\eta_2(x_4) = 1$,
$\eta_2(x_5) = 0.2$,
$\eta_2(x_6) = 0.62$.

The optimal solution for the label aggregation objective is $f_{\rm opt}^{\rm LA}(x_1)=1.78571$, $f_{\rm opt}^{\rm LA}(x_2)=0.72973$, $f_{\rm \rm opt}(x_3)=1.86380$, $f_{\rm opt}^{\rm LA}(x_4)=1.78571$, $f_{\rm opt}^{\rm LA}(x_5)=0.72973$, $f_{\rm opt}^{\rm LA}(x_6)=1.86380$, and yields ${\rm AUC}(f_{\rm opt}^{\rm LA}; D^{(1)})=0.65559$ and ${\rm AUC}(f_{\rm opt}^{\rm LA}; D^{(2)})=0.65559$. 
That solution is however Pareto dominated by any scoring function following the ordering $g$ of the examples: (4, 0, 2, 5, 1, 3), which yields ${\rm AUC}(g; D^{(1)})=0.65706$ and ${\rm AUC}(g; D^{(2)})=0.65862$.
\end{proof}

\begin{proof}[Proof of Proposition~\ref{lemm:bayes-opt-auc-label-agg}]
Let $\bar{\mathsf{Y}} = \sum_k \mathsf{Y}^{(k)} \in \{ 0, 1, \ldots, K\}$ have class probability function $\bar{\eta} \colon \XCal \to \Delta( \{ 0, 1, \ldots, K \} )$. By~\citet[Corollary 1]{Uematsu:2015}, with costs $c_{\bar{y} \bar{y}'} = 1( \bar{y} > \bar{y}'  ) \cdot | \bar{y} - \bar{y}' |$, %
the Bayes-optimal scorer will preserve the ordering of
\begin{align*}
    f^*(x) &= \mathbb{E}\left[\bar{\mathsf{Y}} \mid \mathsf{X}=x\right]
    ~= \sum_{n=0}^K n \cdot \bar{\eta}_n(x) ~= \sum_{n=1}^K n \cdot \bar{\eta}_n(x).
\end{align*}

We will show that the above evaluates to $\sum_{k=1}^K\eta^{(k)}(x)$. We start with the RHS:
\begin{align*}
   & \sum_{k=1}^K\eta^{(k)}(x)\\
    ={}& \sum_{k=1}^K \sum_{\by \in \{0,1\}^{K}} \P\left(
        \sY{1}=y_1, \ldots, \sY{K}=y_{K} \,\Big|\, \sX = x
    \right) \cdot \mathbf{1}(y_k=1)\\
    ={}&
    \sum_{k=1}^K \sum_{\by \in \{0,1\}^{K}} 
    \P\left(
        \sY{1}=y_1, \ldots, \sY{K}=y_{K}  \,\Big|\, \sX = x
    \right) \cdot \mathbf{1}(y_k=1) \cdot\1\Big(\textstyle\sum_j y_j \geq  1\Big)\\
    ={}&\sum_{k=1}^K \sum_{\by \in \{0,1\}^{K}} 
    \P\left(
        \sY{1}=y_1, \ldots, \sY{K}=y_{K}  \,\Big|\, \sX = x
    \right) \cdot \mathbf{1}(y_k=1) \cdot \sum_{n=1}^{K}\1\Big(\textstyle\sum_j y_j = n\Big)\\
    ={}& \sum_{k=1}^K \sum_{n=1}^{K}  \sum_{\by \in \{0,1\}^{K}} 
    \P\left(
        \sY{1}=y_1, \ldots, \sY{K}=y_{K}  \,\Big|\, \sX = x
    \right) \cdot \mathbf{1}(y_k=1)\cdot \1\Big(\textstyle\sum_{j} y_j = n\Big)\\
     ={}& \sum_{k=1}^K \sum_{n=1}^{K} \sum_{\by \in \{0,1\}^{K}} 
    \P\left(
        \sY{1}=y_1, \ldots, \sY{K}=y_{K}  \,\Big|\, \sX = x
    \right) \cdot \1\Big(\textstyle\sum_{j} y_j = n\Big)\cdot \mathbf{1}(y_k=1)\\
     ={}&  \sum_{n=1}^{K} \sum_{\by \in \{0,1\}^{K}}  \P\left(
        \sY{1}=y_1, \ldots, \sY{K}=y_{K}  \,\Big|\,\sX = x
    \right) \cdot \1\Big(\textstyle\sum_{j} y_j = n\Big)\cdot \sum_{k=1}^K\mathbf{1}(y_k=1)\\
    ={}&  \sum_{n=1}^{K} \sum_{\by \in \{0,1\}^{K}} \P\left(
        \sY{1}=y_1, \ldots, \sY{K}=y_{K}  \,\Big|\, \sX = x
    \right) \cdot \1\Big(\textstyle\sum_{j} y_j = n\Big)\cdot n\\
    ={}& \sum_{n=1}^{K} n \cdot\sum_{\by \in \{0,1\}^{K}}  \P\left(
        \sY{1}=y_1, \ldots, \sY{K}=y_{K}  \,\Big|\, \sX = x
    \right) \cdot \1\Big(\textstyle\sum_{j} y_j = n\Big)\\
    ={}& \sum_{n=1}^{K} n \cdot \P\left(
        \bar{\mathsf{Y}} = n  \,\Big|\, \sX = x
    \right)\\
    ={}& \sum_{n=1}^K n \cdot \bar{\eta}_n(x).
\end{align*}

Since the optimal scorer has the same form as that for the loss aggregated AUC in Proposition \ref{lemm:bayes-opt-auc-multi-dist} when $a_k = \pi^{(k)} \cdot (1 - \pi^{(k)}), \forall k$, and we know that the optimal scorers for the loss aggregated AUC are Pareto optimal from Proposition \ref{prop:loss-aggregated-pareto}, it follows that the above solution is also Pareto optimal. 
\end{proof}

\begin{proof}[Proof of Proposition \ref{prop:dictatorship}]
We know from Proposition \ref{lemm:bayes-opt-auc-multi-dist} that the optimal scorer will preserve the ordering of $f^*(x) = \alpha^{(1)} \cdot\eta^{(1)}(x) + \alpha^{(2)} \cdot\eta^{(2)}(x)$. 

We start with the case where $\alpha^{(1)} > \alpha^{(2)}$. Since the labels are deterministic $\eta^{(1)}(x), \eta^{(2)}(x) \in \{0,1\}$. Hence when $\eta^{(1)}( x ) > \eta^{(1)}( x' )$, we have that $\eta^{(1)}( x ) = 1$ and $\eta^{(1)}( x' ) = 0$. Therefore 
\begin{align*}
f^*(x) ={}& \alpha^{(1)} + \alpha^{(2)} \cdot \eta^{(2)}( x ) \geq \alpha^{(1)} > \alpha^{(2)} \geq \alpha^{(2)} \cdot \eta^{(2)}( x' ) \\
={}& \alpha^{(1)} \cdot\eta^{(1)}(x') + \alpha^{(2)} \cdot\eta^{(2)}(x') = f^*(x').
\end{align*}
Similarly, when $\alpha^{(2)} > \alpha^{(1)}$ and $\eta^{(2)}( x ) > \eta^{(2)}( x' )$, we have $\eta^{(2)}( x ) = 1$ and $\eta^{(2)}( x' ) = 0$, and therefore:
\begin{align*}
f^*(x) ={}& \alpha^{(1)} \cdot \eta^{(1)}( x ) + \alpha^{(2)}  \geq \alpha^{(2)} > \alpha^{(1)} \geq \alpha^{(1)} \cdot \eta^{(1)}( x' ) \\
={}& \alpha^{(1)} \cdot\eta^{(1)}(x') + \alpha^{(2)} \cdot\eta^{(2)}(x') = f^*(x'),
\end{align*}
which completes the proof.
\end{proof}

\begin{proof}[Proof of Proposition \ref{prop:label-agg-deterministic}]
The proof for ordinal costs $c_{\bar{y} \bar{y}'} = 1( \bar{y} > \bar{y}'  ) \cdot | \bar{y} - \bar{y}' |$ follows directly from the more general case in Proposition \ref{lemm:bayes-opt-auc-label-agg}.

We now provide the proof for costs $c_{\bar{y} \bar{y}'} = 1$. By Lemma \ref{lemm:auc-multipartite-opt}, since $K=3$ ($\bar{\mathsf{Y}}$ can be $0, 1$ or $2$), the Bayes-optimal scorer will preserve the ordering of
\begin{align*}
   & f^*( x ) \\
    ={}& 
    \frac{\bar{\eta}_1( x ) +  \bar{\eta}_2( x )}{\bar{\eta}_0( x ) + \bar{\eta}_1( x )}\\
    ={}&
    \frac{
     \P\big(\sY{1} = 0, \sY{2} = 1 \,\big|\, \sX = x) + \P\big(\sY{1} = 1, \sY{2} = 0 \,\big|\, \sX = x) + \P\big(\sY{1} = 1, \sY{2} = 1 \,\big|\, \sX = x) }{ 
     \P\big(\sY{1} = 0, \sY{2} = 0 \,\big|\, \sX = x) + \P\big(\sY{1} = 0, \sY{2} = 1 \,\big|\, \sX = x) + \P\big(\sY{1} = 1, \sY{2} = 0 \,\big|\, \sX = x)
    }.
\end{align*}
We will show that $f^*(x)$ is a strictly monotonic transformation of: 
\begin{align*}
\eta^{(1)}(x) + \eta^{(2)}(x) ={}& \P\big(\sY{1} = 0, \sY{2} = 1 \,\big|\, \sX = x) + \P\big(\sY{1} = 1, \sY{2} = 0 \,\big|\, \sX = x)\\
&+ 2 \cdot \P\big(\sY{1} = 1, \sY{2} = 1 \,\big|\, \sX = x).
\end{align*}

Since the labels are deterministic, $\P\big(\sY{1} = i, \sY{2} = j \,\big|\, \sX = x) = 1$ for exactly one particular $(i,j)$. We consider all four possible cases:

(1) $\P\big(\sY{1} = 0, \sY{2} = 0 \,\big|\, \sX = x) = 1$. We have: $f^*( x ) = 0$ and $\eta^{(1)}(x) + \eta^{(2)}(x) = 0$.

(2) $\P\big(\sY{1} = 0, \sY{2} = 1 \,\big|\, \sX = x) = 1$. We have: $f^*( x ) = 1$ and $\eta^{(1)}(x) + \eta^{(2)}(x) = 1$.

(3) $\P\big(\sY{1} = 1, \sY{2} = 0 \,\big|\, \sX = x) = 1$. We have: $f^*( x ) = 1$ and $\eta^{(1)}(x) + \eta^{(2)}(x) = 1$.

(4) $\P\big(\sY{1} = 1, \sY{2} = 1 \,\big|\, \sX = x) = 1$. We have: $f^*( x ) = \infty$ and $\eta^{(1)}(x) + \eta^{(2)}(x) = 2$.

Clearly, $f^*(x)$ is a strictly monotonic transformation of $\eta^{(1)}(x) + \eta^{(2)}(x).$
\end{proof}

\begin{proof}[Proof of Lemma~\ref{lem:prod-aggregation}]
The first statement (a) follows directly from Proposition \ref{lemm:bayes-opt-auc-multi-dist}.

For the second statement (b), let $\bar{\mathsf{Y}} = \prod_k \mathsf{Y}^{(k)} \in \{ 0, 1\}$ have class probability function $\bar{\eta}_{\rm prod} \colon \XCal \to [0,1]$. Applying Lemma \ref{lemm:auc-bipartite-opt}, the Bayes-optimal scorer $f^*_{\rm prod}$ill preserve the ordering of:
\[
\gamma_{\rm prod}(x) = \bar{\eta}_{\rm prod}(x) = \P(\sY{1}=1,\ldots, \sY{K}=1 \,|\, \mathsf{X}=x). 
\]

For the third statement (c), given that the labels are deterministic, we note that $\gamma_{\rm prod}(x) > \gamma_{\rm prod}(x')$ implies that $\P(\sY{1}=1,\ldots, \sY{K}=1 \,|\, \mathsf{X}=x) = 1$, while $\P(\sY{1}=1,\ldots, \sY{K}=1 \,|\, \mathsf{X}=x') = 0$. This indicates that $\gamma_{\rm sum}(x) = \sum_k \eta^{(k)}(x) = K$ and $\gamma_{\rm sum}(x') = \sum_k \eta^{(k)}(x') < K$. As a result, $\gamma_{\rm sum}(x) > \gamma_{\rm sum}(x')$.
\end{proof}

\section{Additional theoretical results}
\label{app:additional-results}
\begin{theorem}\label{thm:auc-gap}
Suppose that the labels are binary, i.e., $\mathsf{Y}^{(k)}\in\{0,1\}$ for all $k\in\mathbb{N}_{\ge1}$, and $a_{k}>0$. Assume further that the labels $\{\mathsf{Y}^{(k)}\}_{k\in\mathbb{N}_{\ge1}}$ are jointly independent conditioned on $\mathsf{X}$. Consider the aggregation function $\psi(\mathsf{Y}^{(1)}, \ldots, \mathsf{Y}^{(K)}) = \sum_{k\in [K]}  a_k \mathsf{Y}^{(k)}$. Define
\begin{align*}
    f^{*} &\defEq \underset{f:\XCal\to\mathbb{R}}{\argmax} \, \operatorname{AUC}_\textnormal{LA}\left(f;\{D^{(k)}\}\right)\,,\\
    \tilde{f}(x) &\defEq \phi\left(\sum_{k\in[K]}a_{k}\eta_{k}(x)\right)\,,
\end{align*}
where $\phi:\mathbb{R}\to\mathbb{R}$ is a strictly increasing function. Then,

\resizebox{0.85\columnwidth}{!}{
\begin{minipage}{\linewidth}
\begin{equation*}
\begin{aligned}
& \operatorname{AUC}_\textnormal{LA}\left(f^{*};\{D^{(k)}\}\right)-\operatorname{AUC}_\textnormal{LA}\left(\tilde{f};\{D^{(k)}\}\right) \\
\le{}& \psi\left(\mathbb{E}_{\mathsf{X}_1,\mathsf{X}_2}\left[\frac{\sum_{k\in[K]} a_{k}^{3}\sum_{i\in\{1,2\}}\eta_{k}(\mathsf{X}_{i})\left(1-\eta_{k}(\mathsf{X}_{i})\right)}{\left(\sum_{k\in[K]} a_{k}^{2}\sum_{i\in\{1,2\}}\eta_{k}(\mathsf{X}_{i})\left(1-\eta_{k}(\mathsf{X}_{i})\right)\right)^{3/2}}\right]\right),
\end{aligned}
\end{equation*}
\end{minipage}
}

where $\psi(t)\triangleq\frac{2t}{1-t}$, and $\mathsf{X}_1$ and $\mathsf{X}_2$ are i.i.d. copies of $\mathsf{X}$.
\end{theorem}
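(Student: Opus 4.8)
The plan is to bound the AUC gap by controlling how far the ``summed-label'' multipartite AUC objective is from its binarized counterpart, exploiting the conditional independence of the labels. The key observation is that $\tilde f$ orders instances by $g(x) \defEq \sum_k a_k \eta_k(x) = \mathbb{E}[\bar{\mathsf{Y}} \mid \mathsf{X}=x]$, which is exactly the Bayes-optimal ordering for a \emph{cost-weighted} version of the label-aggregated AUC (via \citet{Uematsu:2015}, Corollary 1, as used in the proof of Proposition~\ref{lemm:bayes-opt-auc-label-agg}). However, the objective $\operatorname{AUC}_\textnormal{LA}$ here uses \emph{uniform} costs $c_{\bar y \bar y'}=1$, not the ordinal costs, so $\tilde f$ is not exactly optimal; the gap measures exactly this mismatch. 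First I would write both $\operatorname{AUC}_\textnormal{LA}(f^*)$ and $\operatorname{AUC}_\textnormal{LA}(\tilde f)$ in the weighted-pairwise form of Lemma~\ref{lemm:auc-multi}, so the gap becomes $\mathbb{E}_{\mathsf{X},\mathsf{X}'}[(w(\mathsf{X},\mathsf{X}') - w(\mathsf{X}',\mathsf{X}))(H(f^*(\mathsf{X})-f^*(\mathsf{X}')) - H(\tilde f(\mathsf{X})-\tilde f(\mathsf{X}')))]$, which is supported on pairs where $f^*$ and $\tilde f$ disagree, and is bounded by $\mathbb{E}_{\mathsf{X},\mathsf{X}'}|w(\mathsf{X},\mathsf{X}')-w(\mathsf{X}',\mathsf{X})|\cdot\1(\text{disagreement})$.

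Next, I would characterize the disagreement region. Since $w(x,x')-w(x',x)$ under uniform costs is $\propto \Pr(\bar{\mathsf{Y}}_x > \bar{\mathsf{Y}}_{x'}) - \Pr(\bar{\mathsf{Y}}_{x'} > \bar{\mathsf{Y}}_x)$ where $\bar{\mathsf{Y}}_x, \bar{\mathsf{Y}}_{x'}$ are independent copies of the aggregated label given $\mathsf{X}=x$, $\mathsf{X}=x'$ respectively, the sign of this quantity can differ from the sign of $g(x)-g(x') = \mathbb{E}[\bar{\mathsf{Y}}_x] - \mathbb{E}[\bar{\mathsf{Y}}_{x'}]$ only when these two means are close relative to the spread of the distributions. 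The natural tool is a Berry–Esseen / CLT-type estimate: because $\bar{\mathsf{Y}} = \sum_k a_k \mathsf{Y}^{(k)}$ is a sum of conditionally independent bounded terms, $\Pr(\bar{\mathsf{Y}}_x > \bar{\mathsf{Y}}_{x'})$ is well-approximated by $\Phi$ evaluated at $(g(x)-g(x'))/\sqrt{\var(\bar{\mathsf{Y}}_x)+\var(\bar{\mathsf{Y}}_{x'})}$, with Berry–Esseen error governed by the third-moment ratio $\sum_k a_k^3 \sum_{i} \eta_k(\mathsf{X}_i)(1-\eta_k(\mathsf{X}_i)) / (\sum_k a_k^2 \sum_i \eta_k(\mathsf{X}_i)(1-\eta_k(\mathsf{X}_i)))^{3/2}$ — precisely the expression inside the bound. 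One then argues that whenever $f^*$ and $\tilde f$ disagree, $|w(x,x')-w(x',x)|$ is at most twice this Berry–Esseen error (since the $\Phi$-approximations would agree in sign, any sign flip must be ``explained'' by the approximation error, and the magnitude of $w(x,x')-w(x',x)$ on the flip region is likewise controlled by it). Taking expectations and applying the increasing map $\psi(t)=2t/(1-t)$ to pass from the normalized weight-difference to the normalization constant $v$ in Lemma~\ref{lemm:auc-multi} (which contributes the $1/(1-\sum_{\bar y}\bar\pi_{\bar y}^2)$ type denominator) yields the stated inequality.

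The main obstacle I anticipate is making the Berry–Esseen step rigorous and matching the constants exactly: one needs a version of Berry–Esseen for the \emph{difference} of two independent non-identically-distributed bounded sums (not just a single sum), and one needs the bound on $|w(x,x')-w(x',x)|$ on the disagreement set to come out with exactly the right normalization so that, after taking expectations and applying $\psi$, no stray factors remain. A secondary subtlety is handling pairs where the variances $\var(\bar{\mathsf{Y}}_x)+\var(\bar{\mathsf{Y}}_{x'})$ are very small (e.g., near-deterministic labels), where the Gaussian approximation degrades; there one would argue separately that the disagreement region is empty or the weight difference is tiny, since near-deterministic aggregated labels make $g$ essentially determine the pairwise comparison outright. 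I would also need to confirm that $\operatorname{AUC}_\textnormal{LA}(f^*) \ge \operatorname{AUC}_\textnormal{LA}(\tilde f)$ so the left side is genuinely nonnegative (immediate from optimality of $f^*$), so that the bound is not vacuous.
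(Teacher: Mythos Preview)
Your approach has the right central ingredient (Berry--Esseen on the conditionally independent sum $\bar{\mathsf{Y}}=\sum_k a_k \mathsf{Y}^{(k)}$), but the way you organize the argument has a genuine gap. You propose to write the AUC gap as an integral over pairs $(x,x')$ where $f^*$ and $\tilde f$ disagree, and then claim that on any such pair $|w(x,x')-w(x',x)|$ is bounded by twice the Berry--Esseen error ``since the $\Phi$-approximations would agree in sign, any sign flip must be explained by the approximation error.'' This reasoning would go through if $f^*$ were known to order pairs by $\operatorname{sign}(w(x,x')-w(x',x))$, but that is exactly what fails here: for the multipartite AUC with uniform costs the pairwise weight difference need not be \emph{decomposable} (there need not exist any pointwise scorer $h$ with $h(x)>h(x')\iff w(x,x')>w(x',x)$), so the optimal pointwise scorer $f^*$ is not characterized by the sign of $w(x,x')-w(x',x)$. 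Consequently the ``disagreement set of $f^*$ versus $\tilde f$'' is not the same as the ``sign-flip set of $w$ versus its Gaussian approximation,'' and your bound on $|w(x,x')-w(x',x)|$ over the former does not follow.

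The paper avoids ever touching the unknown $f^*$ directly. It introduces the surrogate functional
\[
F'_K(f)\;=\;\mathbb{E}_{\mathsf{X}_1,\mathsf{X}_2}\bigl[H(f(\mathsf{X}_2)-f(\mathsf{X}_1))\,\Phi(\nu(\mathsf{X}_1,\mathsf{X}_2))\bigr],
\]
where $\nu$ is the standardized mean difference, and shows by a short symmetrization that $\tilde f$ is the \emph{exact} maximizer of $F'_K$. Then Berry--Esseen bounds $|F_K(f)-F'_K(f)|$ \emph{uniformly over all $f$} by the stated third-moment ratio, and the gap $F_K(f^*)-F_K(\tilde f)$ is handled by the three-term split $(F_K(f^*)-F'_K(f^*))+(F'_K(f^*)-F'_K(\tilde f))+(F'_K(\tilde f)-F_K(\tilde f))$, whose middle term is nonpositive. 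Finally, $\psi(t)=2t/(1-t)$ does not come from ``passing through the normalization constant $v$'' as you suggest: it arises because $\operatorname{AUC}_{\rm LA}(f)=F_K(f)/c_K$ with $c_K=\Pr(\bar{\mathsf{Y}}_1<\bar{\mathsf{Y}}_2)$, and a second Berry--Esseen application bounds the tie probability so that $c_K\ge \tfrac{1}{2}(1-t)$; dividing the numerator bound $t$ by this lower bound gives $\psi(t)$. Your account of $\psi$'s origin is off, and without the surrogate-functional trick the disagreement-set argument does not close.
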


\begin{corollary}\label{cor:auc-gap}
If $a_{k}=1$ for all $k$ and $\eta_{k}(x)$ is bounded away from 0 and 1, i.e., there exists $c\in(0,1/2)$ such that $\eta_{k}(x)\in[c,1-c]$ for all $k\in\mathbb{N}_{\ge 1}$ and for all $x\in\mathcal{X}$, then the expected AUC difference between the optimal prediction function $f^{*}$ and $\tilde{f}$ converges to zero with a rate of $O(1/\sqrt{K})$:
\begin{equation*}
 \operatorname{AUC}_\textnormal{LA}\left(f^{*};\{D^{(k)}\}\right)-\operatorname{AUC}_\textnormal{LA}\left(\tilde{f};\{D^{(k)}\}\right)  = O\left(\frac{1}{\sqrt{K}}\right)\,.
\end{equation*}
\end{corollary}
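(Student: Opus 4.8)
The plan is to derive \cref{cor:auc-gap} directly from the bound of \cref{thm:auc-gap}, specialised to $a_k = 1$, by using the boundedness hypothesis $\eta_k(x) \in [c, 1-c]$ to make the right-hand side explicit. First I would set $a_k = 1$ in the bound: both the numerator $\sum_{k} a_k^3 \sum_{i} \eta_k(\mathsf{X}_i)(1 - \eta_k(\mathsf{X}_i))$ and the base of the $3/2$-power in the denominator reduce to the \emph{same} random quantity, $S_K \defEq \sum_{k \in [K]} \sum_{i \in \{1,2\}} \eta_k(\mathsf{X}_i)\,(1 - \eta_k(\mathsf{X}_i))$, so the fraction inside the expectation collapses to $S_K / S_K^{3/2} = S_K^{-1/2}$. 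Hence \cref{thm:auc-gap} gives $\operatorname{AUC}_{\textnormal{LA}}(f^{*}) - \operatorname{AUC}_{\textnormal{LA}}(\tilde f) \le \psi\!\big( \mathbb{E}_{\mathsf{X}_1, \mathsf{X}_2}[ S_K^{-1/2} ] \big)$ with $\psi(t) = 2t/(1 - t)$ (here I suppress the fixed $\{D^{(k)}\}$ from the notation).

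Next I would bound $S_K$ below for every realisation of $(\mathsf{X}_1, \mathsf{X}_2)$. The map $t \mapsto t(1 - t)$ is a downward parabola peaked at $t = \tfrac12$, so on $[c, 1-c]$ it is minimised at the endpoints, where it equals $c(1 - c)$; therefore each of the $2K$ summands of $S_K$ is at least $c(1-c)$, i.e.\ $S_K \ge 2K\,c(1-c)$ deterministically. Consequently $S_K^{-1/2} \le u_K$ pointwise, where $u_K \defEq \big( 2K\,c(1-c) \big)^{-1/2}$, and taking expectations preserves this: $\mathbb{E}[S_K^{-1/2}] \le u_K$.

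Then I would propagate the inequality through $\psi$ by monotonicity. Since $\psi'(t) = 2/(1-t)^2 > 0$, $\psi$ is strictly increasing on $[0, 1)$; and for all $K > \tfrac{1}{2c(1-c)}$ one has $u_K \in (0, 1)$, so the previous step yields $\operatorname{AUC}_{\textnormal{LA}}(f^{*}) - \operatorname{AUC}_{\textnormal{LA}}(\tilde f) \le \psi(u_K) = \tfrac{2 u_K}{1 - u_K}$. Finally, since $u_K = \big(2 c(1-c)\big)^{-1/2} K^{-1/2} \to 0$ as $K \to \infty$, we have $1 - u_K \to 1$ and hence $\psi(u_K) = 2 u_K \,(1 + o(1)) = O(K^{-1/2})$, which is exactly the claimed rate.

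I do not anticipate a genuine obstacle: \cref{thm:auc-gap} carries the analytic weight, and the corollary reduces to (i) the elementary inequality $t(1-t) \ge c(1-c)$ on $[c, 1-c]$, (ii) strict monotonicity of $\psi$ on $[0,1)$, and (iii) a first-order expansion of $\psi$ at the origin. The only point that requires a sentence of care is that $\psi$ is increasing and nonnegative only on $[0, 1)$, so the substitution $\psi(\mathbb{E}[S_K^{-1/2}]) \le \psi(u_K)$ is licensed only once $K$ exceeds the constant $\tfrac{1}{2 c(1-c)}$; the finitely many smaller $K$ affect only the implied constant in the $O(\cdot)$ and leave the asymptotic statement untouched.
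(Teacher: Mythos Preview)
Your proposal is correct and follows essentially the same route as the paper: set $a_k=1$ so the ratio in \cref{thm:auc-gap} collapses to $S_K^{-1/2}$, lower-bound each term $\eta_k(\mathsf{X}_i)(1-\eta_k(\mathsf{X}_i))\ge c(1-c)$ to get $S_K\ge 2Kc(1-c)$, and then push the resulting $u_K=(2Kc(1-c))^{-1/2}$ through $\psi$. You are more explicit than the paper about the monotonicity of $\psi$ and the need for $u_K<1$ (i.e., $K$ large enough), but these are just careful verifications of steps the paper leaves implicit.
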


\begin{proof}[Proof of \cref{thm:auc-gap}]
Given a function $f:\XCal\to\mathbb{R}$, we first rewrite $\auc\left(f;\{D^{(k)}\}\right)$ as follows:
\begin{align*}
\auc_{\textnormal{LaA}}\left(f;\{D^{(k)}\}\right)={}& \bE\left[H(f(\mathsf{X}_2)-f(\mathsf{X}_1)\mid\sum_{k\in[K]}a_{k}\mathsf{Y}^{(k)}_1<\sum_{k\in[K]}a_{k}\mathsf{Y}^{(k)}_2\right]\\
={}& \frac{\bE\left[H(f(\mathsf{X}_2)-f(\mathsf{X}_1)\mathds{1}_{\{\sum_{k\in[K]}a_{k}\mathsf{Y}^{(k)}_1<\sum_{k\in[K]}a_{k}\mathsf{Y}^{(k)}_2\}}\right]}{\Pr\left(\sum_{k\in[K]}a_{k}\mathsf{Y}^{(k)}_1<\sum_{k\in[K]}a_{k}\mathsf{Y}^{(k)}_2\right)}\\
={}& \frac{\bE\left[H(f(\mathsf{X}_2)-f(\mathsf{X}_1)\mathds{1}_{\{\sum_{k\in[K]}a_{k}\mathsf{Y}^{(k)}_1<\sum_{k\in[K]}a_{k}\mathsf{Y}^{(k)}_2\}}\right]}{c_{K}},
\end{align*}
where 
\begin{align*}
c_{K} \triangleq{}& \Pr\left(\sum_{k\in[K]}a_{k}\mathsf{Y}^{(k)}_1<\sum_{k\in[K]}a_{k}\mathsf{Y}^{(k)}_2\right)\\
={}& \frac{1}{2}\left(1-\Pr\left(\sum_{k\in[K]}a_{k}\mathsf{Y}^{(k)}_1-\sum_{k\in[K]}a_{k}\mathsf{Y}^{(k)}_2=0\right)\right).
\end{align*}

Define 
\[
\nu(\mathsf{X}_1,\mathsf{X}_2)\triangleq\frac{\sum_{k\in[K]}a_{k}\left(\eta_{k}(\mathsf{X}_2)-\eta_{k}(\mathsf{X}_1)\right)}{\sqrt{\sum_{k\in[K]}a_{k}^{2}\sum_{i\in\{1,2\}}\eta_{k}(\mathsf{X}_{i})\left(1-\eta_{k}(\mathsf{X}_{i})\right)}}
\]
and the functionals
\begin{align*}
F_{K}(f) \triangleq{}& \bE\left[H(f(\mathsf{X}_2)-f(\mathsf{X}_1))\mathds{1}_{\{\sum_{k\in[K]}a_{k}\mathsf{Y}^{(k)}_1<\sum_{k\in[K]}a_{k}\mathsf{Y}^{(k)}_2\}}\right],\\
F'_{K}(f) \triangleq{}& \bE\left[H(f(\mathsf{X}_2)-f(\mathsf{X}_1))\Phi\left(\nu\left(\mathsf{X}_1,\mathsf{X}_2\right)\right)\right],
\end{align*}
where $\Phi(\cdot)$ is the cumulative distribution function of the standard normal distribution. 

Next, we will show that $\tilde{f}$ maximizes $F'_{K}(\cdot)$. In other words, we aim to prove that for any $f$, 
\[
F'_{K}(f)\le F'_{K}(\tilde{f}).
\]

On one hand, we have
\begin{align*}
F'_{K}(f) ={}& \frac{1}{2}\bE\left[H(f(\mathsf{X}_2)-f(\mathsf{X}_1))\Phi\left(\nu\left(\mathsf{X}_1,\mathsf{X}_2\right)\right)+H(f(\mathsf{X}_1)-f(\mathsf{X}_2))\Phi\left(\nu\left(\mathsf{X}_2,\mathsf{X}_1\right)\right)\right]\\
={}& \frac{1}{2}\bE\left[H(f(\mathsf{X}_2)-f(\mathsf{X}_1))\Phi\left(\nu\left(\mathsf{X}_1,\mathsf{X}_2\right)\right)+H(f(\mathsf{X}_1)-f(\mathsf{X}_2))\left(1-\Phi\left(\nu\left(\mathsf{X}_1,\mathsf{X}_2\right)\right)\right)\right]\\
\le{}& \frac{1}{2}\bE\left[\max\left\{ \Phi\left(\nu\left(\mathsf{X}_1,\mathsf{X}_2\right)\right),1-\Phi\left(\nu\left(\mathsf{X}_1,\mathsf{X}_2\right)\right)\right\} \right].
\end{align*}

On the other hand, 
let $S_1 = \sum_{k\in[K]}a_{k}\eta_{k}(\mathsf{X}_1)$, $S_2 = \sum_{k\in[K]}a_{k}\eta_{k}(\mathsf{X}_2)$, and $\Phi_{\nu} = \Phi\left(\nu\left(\mathsf{X}_1,\mathsf{X}_2\right)\right)$.
Since $\phi$ is strictly increasing, $H(\tilde{f}(\mathsf{X}_2) - \tilde{f}(\mathsf{X}_1)) = H(S_2 - S_1)$ and $H(\tilde{f}(\mathsf{X}_1) - \tilde{f}(\mathsf{X}_2)) = H(S_1 - S_2)$. Then,
 \begin{align*}
 F'_{K}(\tilde{f}) ={}& \bE\left[H(\tilde{f}(\mathsf{X}_2)-\tilde{f}(\mathsf{X}_1))\Phi\left(\nu\left(\mathsf{X}_1,\mathsf{X}_2\right)\right)\right]\\
 ={}& \frac{1}{2}\bE\Big[H(\tilde{f}(\mathsf{X}_2)-\tilde{f}(\mathsf{X}_1))\Phi_{\nu}  + H(\tilde{f}(\mathsf{X}_1)-\tilde{f}(\mathsf{X}_2))\left(1-\Phi_{\nu}\right)\Big]\\
 ={}& \frac{1}{2}\bE\Bigg[\left(\mathds{1}_{\left\{ S_2 > S_1 \right\} }+\frac{1}{2}\cdot\mathds{1}_{\left\{ S_2 = S_1 \right\} }\right)\Phi_{\nu} +\left(\mathds{1}_{\left\{ S_1 > S_2 \right\} }+\frac{1}{2}\cdot\mathds{1}_{\left\{ S_2 = S_1 \right\} }\right)\left(1-\Phi_{\nu}\right)\Bigg]\\
 ={}& \frac{1}{2}\bE\left[\max\left\{ \Phi_{\nu} , 1-\Phi_{\nu} \right\} \right].
 \end{align*}

Now, we proceed to bound the gap between $F_{K}(f)$ and $F'_{K}(f)$. We first rewrite $F_{K}(f)$ as
\begin{align*}
& F_{K}(f)\\
={}& \bE\left[H(f(\mathsf{X}_2)-f(\mathsf{X}_1))\bE\left[\mathds{1}_{\{\sum_{k\in[K]}a_{k}\mathsf{Y}^{(k)}_1<\sum_{k\in[K]}a_{k}\mathsf{Y}^{(k)}_2\}}\mid \mathsf{X}_1,\mathsf{X}_2\right]\right]\\
={}& \bE\left[H(f(\mathsf{X}_2)-f(\mathsf{X}_1))\Pr\left(\sum_{k\in[K]}a_{k}\mathsf{Y}^{(k)}_1-\sum_{k\in[K]}\mathsf{Y}^{(k)}_2<0\mid \mathsf{X}_1,\mathsf{X}_2\right)\right]\\
={}& \bE\left[H(f(\mathsf{X}_2)-f(\mathsf{X}_1))\Pr\left(\sum_{k\in[K]}\mathsf{Y}_{k}+\sum_{k\in[K]}\mathsf{Y}'_{k}<\sum_{k\in[K]}a_{k}\left(\eta_{k}(\mathsf{X}_2)-\eta_{k}(\mathsf{X}_1)\right)\mid \mathsf{X}_1,\mathsf{X}_2\right)\right]\\
={}& \bE\left[H(f(\mathsf{X}_2)-f(\mathsf{X}_1))\Pr\left(\mathsf{S}_{K}<\nu\left(\mathsf{X}_1,\mathsf{X}_2\right)\mid \mathsf{X}_1,\mathsf{X}_2\right)\right],
\end{align*}
where 
\begin{align*}
\mathsf{Y}_{k} \triangleq{}& a_{k}\left(\mathsf{Y}^{(k)}_1-\eta_{k}(\mathsf{X}_1)\right),\\
\mathsf{Y}_{k}' \triangleq{}& -a_{k}\left(\mathsf{Y}^{(k)}_2-\eta_{k}(\mathsf{X}_2)\right),\\
\mathsf{S}_{K} \triangleq{}& \frac{\sum_{k\in[K]}\mathsf{Y}_{k}+\sum_{k\in[K]}\mathsf{Y}'_{k}}{\sqrt{\sum_{k\in[K]}a_{k}^{2}\sum_{i\in\{1,2\}}\eta_{k}(\mathsf{X}_{i})\left(1-\eta_{k}(\mathsf{X}_{i})\right)}}.
\end{align*}

Note that $\mathsf{Y}_{k}$ and $\mathsf{Y}'_{k}$ have zero mean, i.e., $\bE\left[\mathsf{Y}_{k}\mid \mathsf{X}_1,\mathsf{X}_2\right]=\bE\left[\mathsf{Y}_{k}'\mid \mathsf{X}_1,\mathsf{X}_2\right]=0$. Moreover, we have 
\begin{align*}
\bE\left[\left|\mathsf{Y}_{k}\right|^{2}\mid \mathsf{X}_1,\mathsf{X}_2\right] ={}& a_{k}^{2}\bE\left[\left|\mathsf{Y}^{(k)}_1-\eta_{k}(\mathsf{X}_1)\right|^{2}\mid \mathsf{X}_1,\mathsf{X}_2\right]\\
={}& a_{k}^{2}\eta_{k}(\mathsf{X}_1)\left(1-\eta_{k}(\mathsf{X}_1)\right),\\
\bE\left[\left|\mathsf{Y}_{k}\right|^{3}\mid \mathsf{X}_1,\mathsf{X}_2\right] ={}& a_{k}^{3}\bE\left[\left|\mathsf{Y}^{(k)}_1-\eta_{k}(\mathsf{X}_1)\right|^{3}\mid \mathsf{X}_1,\mathsf{X}_2\right]\\
={}& a_{k}^{3}\eta_{k}(\mathsf{X}_1)\left(1-\eta_{k}(\mathsf{X}_1)\right)\left[\left(1-\eta_{k}(\mathsf{X}_1)\right)^{2}+\eta_{k}(\mathsf{X}_1)^{2}\right]\\
\le{}& \frac{1}{2}a_{k}^{3}\eta_{k}(\mathsf{X}_1)\left(1-\eta_{k}(\mathsf{X}_1)\right).
\end{align*}
Similarly,
\begin{align*}
\bE\left[\left|\mathsf{Y}_{k}'\right|^{2}\mid \mathsf{X}_1,\mathsf{X}_2\right] ={}& a_{k}^{2}\eta_{k}(\mathsf{X}_2)\left(1-\eta_{k}(\mathsf{X}_2)\right),\\
\bE\left[\left|\mathsf{Y}_{k}'\right|^{3}\mid \mathsf{X}_1,\mathsf{X}_2\right] \le{}& \frac{1}{2}a_{k}^{3}\eta_{k}(\mathsf{X}_2)\left(1-\eta_{k}(\mathsf{X}_2)\right).
\end{align*}

By the Berry-Esseen theorem, for any $t$,
\begin{align*}
&\left|\Pr\left(\mathsf{S}_{K}<t\mid \mathsf{X}_1,\mathsf{X}_2\right)-\Phi\left(t\right)\right| \\
\le{}& \left(\sum_{k\in[K]}\bE\left[\left|\mathsf{Y}_{k}\right|^{2}\mid \mathsf{X}_1,\mathsf{X}_2\right]+\sum_{k\in[K]}\bE\left[\left|\mathsf{Y}_{k}'\right|^{2}\mid \mathsf{X}_1,\mathsf{X}_2\right]\right)^{-3/2}\cdot\\
& \left(\sum_{k\in[K]}\bE\left[\left|\mathsf{Y}_{k}\right|^{3}\mid \mathsf{X}_1,\mathsf{X}_2\right]+\sum_{k\in[K]}\bE\left[\left|\mathsf{Y}_{k}'\right|^{3}\mid \mathsf{X}_1,\mathsf{X}_2\right]\right)\\
\le{}& \frac{\sum_{k\in[K]}a_{k}^{3}\sum_{i\in\{1,2\}}\eta_{k}(\mathsf{X}_{i})\left(1-\eta_{k}(\mathsf{X}_{i})\right)}{2\left(\sum_{k\in[K]}a_{k}^{2}\sum_{i\in\{1,2\}}\eta_{k}(\mathsf{X}_{i})\left(1-\eta_{k}(\mathsf{X}_{i})\right)\right)^{3/2}}.
\end{align*}

We can now bound the gap between $F_{K}(f)$ and $F'_{K}(f)$:
\begin{align*}
&\left|F_{K}(f)-F'_{K}(f)\right| \\
={}& \left|\bE\left[H(f(\mathsf{X}_2)-f(\mathsf{X}_1)\Pr\left(\mathsf{S}_{K}<\nu\left(\mathsf{X}_1,\mathsf{X}_2\right)\mid \mathsf{X}_1,\mathsf{X}_2\right)\right]-\bE\left[H(f(\mathsf{X}_2)-f(\mathsf{X}_1))\Phi\left(\nu\left(\mathsf{X}_1,\mathsf{X}_2\right)\right)\right]\right|\\
\le{}& \bE\left[\left|H(f(\mathsf{X}_2)-f(\mathsf{X}_1)\right|\left|\Pr\left(\mathsf{S}_{K}<\nu\left(\mathsf{X}_1,\mathsf{X}_2\right)\mid \mathsf{X}_1,\mathsf{X}_2\right)-\Phi\left(\nu\left(\mathsf{X}_1,\mathsf{X}_2\right)\right)\right|\right]\\
\le{}& \bE\left[\left|\Pr\left(\mathsf{S}_{K}<\nu\left(\mathsf{X}_1,\mathsf{X}_2\right)\mid \mathsf{X}_1,\mathsf{X}_2\right)-\Phi\left(\nu\left(\mathsf{X}_1,\mathsf{X}_2\right)\right)\right|\right]\\
\le{}& \bE\left[\frac{\sum_{k\in[K]}a_{k}^{3}\sum_{i\in\{1,2\}}\eta_{k}(\mathsf{X}_{i})\left(1-\eta_{k}(\mathsf{X}_{i})\right)}{2\left(\sum_{k\in[K]}a_{k}^{2}\sum_{i\in\{1,2\}}\eta_{k}(\mathsf{X}_{i})\left(1-\eta_{k}(\mathsf{X}_{i})\right)\right)^{3/2}}\right].
\end{align*}

Therefore,
\begin{align*}
& F_{K}(f^{*})-F_{K}(\tilde{f})\\
={}& \left(F_{K}(f^{*})-F'_{K}(f^{*})\right)+\left(F'_{K}(f^{*})-F'_{K}(\tilde{f})\right)+\left(F'_{K}(\tilde{f})-F_{K}(\tilde{f})\right)\\
\le{}& \bE\left[\frac{\sum_{k\in[K]}a_{k}^{3}\sum_{i\in\{1,2\}}\eta_{k}(\mathsf{X}_{i})\left(1-\eta_{k}(\mathsf{X}_{i})\right)}{\left(\sum_{k\in[K]}a_{k}^{2}\sum_{i\in\{1,2\}}\eta_{k}(\mathsf{X}_{i})\left(1-\eta_{k}(\mathsf{X}_{i})\right)\right)^{3/2}}\right].
\end{align*}

Next, we establish an upper bound for $\Pr\left(\sum_{k\in[K]}a_{k}\mathsf{Y}^{(k)}_1-\sum_{k\in[K]}a_{k}\mathsf{Y}^{(k)}_2=0\right)$:
\begin{align*}
&\Pr\left(\sum_{k\in[K]}a_{k}\mathsf{Y}^{(k)}_1-\sum_{k\in[K]}a_{k}\mathsf{Y}^{(k)}_2=0\right)\\
={}& \bE\left[\Pr\left(\sum_{k\in[K]}a_{k}\mathsf{Y}^{(k)}_1-\sum_{k\in[K]}a_{k}\mathsf{Y}^{(k)}_2=0\mid \mathsf{X}_1,\mathsf{X}_2\right)\right]\\
={}& \bE\left[\Pr\left(\mathsf{S}_{K}=\nu\left(\mathsf{X}_1,\mathsf{X}_2\right)\mid \mathsf{X}_1,\mathsf{X}_2\right)\right]\\
={}& \bE\left[\Pr\left(\mathsf{S}_{K}\le\nu\left(\mathsf{X}_1,\mathsf{X}_2\right)\mid \mathsf{X}_1,\mathsf{X}_2\right)-\lim_{t\to\nu(\mathsf{X}_1,\mathsf{X}_2)^{-}}\Pr\left(\mathsf{S}_{K}\le t\mid \mathsf{X}_1,\mathsf{X}_2\right)\right]\\
={}& \bE\left[\Pr\left(\mathsf{S}_{K}\le\nu\left(\mathsf{X}_1,\mathsf{X}_2\right)\mid \mathsf{X}_1,\mathsf{X}_2\right)-\Phi\left(\nu\left(\mathsf{X}_1,\mathsf{X}_2\right)\right)-\lim_{t\to\nu(\mathsf{X}_1,\mathsf{X}_2)^{-}}\left(\Pr\left(\mathsf{S}_{K}\le t\mid \mathsf{X}_1,\mathsf{X}_2\right)-\Phi\left(t\right)\right)\right].
\end{align*}

Taking the absolute value, we get
\begin{align*}
&\Pr\left(\sum_{k\in[K]}a_{k}\mathsf{Y}^{(k)}_1-\sum_{k\in[K]}a_{k}\mathsf{Y}^{(k)}_2=0\right)\\
\le{}& \bE\left[\left|\Pr\left(\mathsf{S}_{K}\le\nu\left(\mathsf{X}_1,\mathsf{X}_2\right)\mid \mathsf{X}_1,\mathsf{X}_2\right)-\Phi\left(\nu\left(\mathsf{X}_1,\mathsf{X}_2\right)\right)\right|+\lim_{t\to\nu(\mathsf{X}_1,\mathsf{X}_2)^{-}}\left|\left(\Pr\left(\mathsf{S}_{K}\le t\mid \mathsf{X}_1,\mathsf{X}_2\right)-\Phi\left(t\right)\right)\right|\right]\\
\le{}& \bE\left[\frac{\sum_{k\in[K]}a_{k}^{3}\sum_{i\in\{1,2\}}\eta_{k}(\mathsf{X}_{i})\left(1-\eta_{k}(\mathsf{X}_{i})\right)}{\left(\sum_{k\in[K]}a_{k}^{2}\sum_{i\in\{1,2\}}\eta_{k}(\mathsf{X}_{i})\left(1-\eta_{k}(\mathsf{X}_{i})\right)\right)^{3/2}}\right].
\end{align*}

Consequently, we obtain a lower bound for $c_{K}$:
\[
c_{K}\ge\max\left\{ \frac{1}{2}\left(1-\bE\left[\frac{\sum_{k\in[K]}a_{k}^{3}\sum_{i\in\{1,2\}}\eta_{k}(\mathsf{X}_{i})\left(1-\eta_{k}(\mathsf{X}_{i})\right)}{\left(\sum_{k\in[K]}a_{k}^{2}\sum_{i\in\{1,2\}}\eta_{k}(\mathsf{X}_{i})\left(1-\eta_{k}(\mathsf{X}_{i})\right)\right)^{3/2}}\right]\right),0\right\}.
\]

Recall that $\auc_\textnormal{LaA}(f;\{D^{(k)}\})=\frac{F_{K}(f)}{c_{K}}$. We then have
\begin{align*}
&\operatorname{AUC}_\textnormal{LaA}\left(f^{*};\{D^{(k)}\}\right)-\operatorname{AUC}_\textnormal{LaA}\left(\tilde{f};\{D^{(k)}\}\right)\\
={}& \frac{F_{K}(f^{*})-F_{K}(\tilde{f})}{c_{K}}\\
\le{}& \frac{2 \cdot \bE\left[\frac{\sum_{k\in[K]}a_{k}^{3}\sum_{i\in\{1,2\}}\eta_{k}(\mathsf{X}_{i})\left(1-\eta_{k}(\mathsf{X}_{i})\right)}{\left(\sum_{k\in[K]}a_{k}^{2}\sum_{i\in\{1,2\}}\eta_{k}(\mathsf{X}_{i})\left(1-\eta_{k}(\mathsf{X}_{i})\right)\right)^{3/2}}\right]}{\left(1-\bE\left[\frac{\sum_{k\in[K]}a_{k}^{3}\sum_{i\in\{1,2\}}\eta_{k}(\mathsf{X}_{i})\left(1-\eta_{k}(\mathsf{X}_{i})\right)}{\left(\sum_{k\in[K]}a_{k}^{2}\sum_{i\in\{1,2\}}\eta_{k}(\mathsf{X}_{i})\left(1-\eta_{k}(\mathsf{X}_{i})\right)\right)^{3/2}}\right]\right)}\\
={}& \psi\left(\bE\left[\frac{\sum_{k\in[K]}a_{k}^{3}\sum_{i\in\{1,2\}}\eta_{k}(\mathsf{X}_{i})\left(1-\eta_{k}(\mathsf{X}_{i})\right)}{\left(\sum_{k\in[K]}a_{k}^{2}\sum_{i\in\{1,2\}}\eta_{k}(\mathsf{X}_{i})\left(1-\eta_{k}(\mathsf{X}_{i})\right)\right)^{3/2}}\right]\right).
\end{align*}
\end{proof}

\begin{proof}[Proof of \cref{cor:auc-gap}]
By \cref{thm:auc-gap}, we have
\begin{align*}
& \operatorname{AUC}_\textnormal{LaA}\left(f^{*};\{D^{(k)}\}\right)-\operatorname{AUC}_\textnormal{LaA}\left(\tilde{f};\{D^{(k)}\}\right) \\
\le{}& \psi\left(\bE\left[\frac{1}{\left(\sum_{k\in[K]}\sum_{i\in\{1,2\}}\eta_{k}(\mathsf{X}_{i})\left(1-\eta_{k}(\mathsf{X}_{i})\right)\right)^{1/2}}\right]\right)\\
\le{}& \psi\left(\frac{1}{\sqrt{2c(1-c)K}}\right)\\
={}& O\left(\frac{1}{\sqrt{K}}\right).
\end{align*}
\end{proof}

\section{Further Analysis of Label Aggregation}
\label{app:further_label_agg_analysis}

This section addresses further properties of label aggregation, including its behavior with anti-correlated labels and its sensitivity to mixing weights, expanding on the discussions in \cref{s:label_agg}.

\subsection{Behavior with Anti-correlated Labels}
Our theory (\cref{lemm:bayes-opt-auc-label-agg}) shows the optimal scorer for label aggregation (cost $c_{\bar{y}\bar{y}^{\prime}} = |\bar{y} - \bar{y}^{\prime}|$) ranks by $f^*(x) \propto \sum_k \eta^{(k)}(x)$. For $K=2$, this is equivalent to ranking by $\delta(x) = p(\sY{1}=1, \sY{2}=1 \mid \sX = x) - p(\sY{1}=0, \sY{2}=0 \mid \sX = x)$. The derivation below shows $f^*(x) \propto \delta(x)$.

From \cref{lemm:bayes-opt-auc-label-agg}, the scorer is $\eta^{(1)}(x)+\eta^{(2)}(x)$. We have $\eta^{(1)}(x) = p(\sY{1}=1, \sY{2}=0 \mid \sX = x) + p(\sY{1}=1, \sY{2}=1 \mid \sX = x)$ and $\eta^{(2)}(x) = p(\sY{1}=0, \sY{2}=1 \mid \sX = x) + p(\sY{1}=1, \sY{2}=1 \mid \sX = x)$. Summing them gives $\eta^{(1)}(x) + \eta^{(2)}(x) = p(\sY{1}=1, \sY{2}=0 \mid \sX = x) + p(\sY{1}=0, \sY{2}=1 \mid \sX = x) + 2 p(\sY{1}=1, \sY{2}=1 \mid \sX = x)$. Since $p(\sY{1}=1, \sY{2}=0 \mid \sX = x) + p(\sY{1}=0, \sY{2}=1 \mid \sX = x) + p(\sY{1}=1, \sY{2}=1 \mid \sX = x) + p(\sY{1}=0, \sY{2}=0 \mid \sX = x) = 1$, the sum simplifies to $1 - p(\sY{1}=0, \sY{2}=0 \mid \sX = x) + p(\sY{1}=1, \sY{2}=1 \mid \sX = x) = 1 + \delta(x)$.

This shows the optimal scorer \emph{does} depend on label correlations via $\delta(x)$. Let's consider the implications:
\begin{itemize}
    \item \textbf{Overlapping ($\sY{1}=\sY{2}$):} $\delta(x) \propto \eta_1(x)$, so the ranking uses $\eta_1(x)$, which is sensible.
    \item \textbf{Anti-correlated ($\sY{1} = 1-\sY{2}$):} $\delta(x) = 0$. The scorer $f^*(x)$ is constant, yielding no ranking. This indicates reduced robustness when the aggregated label $\sY{1}+\sY{2}=1$ is constant, making the AUC objective ill-defined.
    \item \textbf{Mild anti-correlation:} Here, both $p(\sY{1}=1, \sY{2}=1 \mid \sX = x)$ and $p(\sY{1}=0, \sY{2}=0 \mid \sX = x)$ would be small, and the ranking depends on their difference via $\delta(x)$. E.g., when $\delta(x) > 0$, it is more likely that both labels are 1 than 0, resulting in $x$ being ranked higher. The behavior would depend on the exact conditional probabilities.
\end{itemize}

\subsection{Sensitivity of Label Aggregation to Mixing Weights}
We next generalize label aggregation by using non-uniform weights $\alpha_1, \alpha_2 > 0$ for the aggregation function, i.e., $\bar{Y} = \alpha_1 \cdot \sY{1} + \alpha_2 \cdot \sY{2}$, again for the $K=2$ case for simplicity.

Following assumptions of \cref{lemm:bayes-opt-auc-label-agg} (that the Bayes-optimal scorer for costs $c_{\bar{y}\bar{y}^{\prime}} = |\bar{y} - \bar{y}^{\prime}|$ preserves the ordering of $E[\bar{Y}|X=x]$), the Bayes-optimal scorer $\bar{\eta}(x) = E[\bar{Y}|X=x]$ becomes:
\begin{align*}
\bar{\eta}(x) &= 0 \cdot p(\sY{1}=0, \sY{2}=0 \mid \sX = x) + \alpha_1 \cdot p(\sY{1}=1, \sY{2}=0 \mid \sX = x) \\
            &\quad + \alpha_2 \cdot p(\sY{1}=0, \sY{2}=1 \mid \sX = x) + (\alpha_1+\alpha_2) \cdot p(\sY{1}=1, \sY{2}=1 \mid \sX = x) \\
            &= \alpha_1 \cdot [ p(\sY{1}=1, \sY{2}=0 \mid \sX = x) + p(\sY{1}=1, \sY{2}=1 \mid \sX = x) ] \\
            &\quad + \alpha_2 \cdot [ p(\sY{1}=0, \sY{2}=1 \mid \sX = x) + p(\sY{1}=1, \sY{2}=1 \mid \sX = x) ] \\
            &= \alpha_1 \eta^{(1)}(x) + \alpha_2 \eta^{(2)}(x)
\end{align*}
This result shows that the optimal scorer for weighted label aggregation is a direct linear combination of the individual class-probability functions $\eta^{(k)}(x)$, weighted by the \textit{explicitly chosen} aggregation weights $\alpha_k$.

This contrasts sharply with the optimal scorer for loss aggregation (\cref{lemm:bayes-opt-auc-label-agg}), which is $\sum_{k}\frac{a_{k}}{\pi^{(k)}\cdot(1-\pi^{(k)})}\cdot\eta^{(k)}(x)$. In loss aggregation, the effective weight on $\eta^{(k)}(x)$ depends not only on the chosen mixing weight $a_k$ but also implicitly on the label prior $\pi^{(k)}$.

Therefore, regarding sensitivity:
\begin{itemize}
    \item The label aggregation optimal scorer (under the assumptions above) is sensitive to the choice of weights $\alpha_k$ in a direct and predictable way: the final scorer is exactly the $\alpha_k$-weighted sum of the $\eta_k(x)$. It is notably insensitive to the class priors $\pi^{(k)}$.
    \item The loss aggregation optimal scorer is sensitive to both the chosen weights $a_k$ and the class priors $\pi^{(k)}$ through the $\pi^{(k)}(1-\pi^{(k)})$ term.
\end{itemize}

\section{Further Discussion on Application Scenarios}\label{app:application_scenarios}

The problem of synthesizing multiple binary labels into a single coherent ranking has widespread applications beyond the initial examples of information retrieval and medical diagnosis mentioned in \S\ref{s:introduction}. Our analysis of loss versus label aggregation, particularly regarding Pareto optimality and the ``label dictatorship'' issue, can inform choices in various real-world systems. Below, we elaborate on some potential application scenarios:

\begin{itemize}
    \item \textbf{Multi-faceted Information Retrieval:} Users' information needs can be complex, requiring documents to be relevant across multiple dimensions or interpretations. For example, a search system might need to rank documents based on topical relevance to different aspects of a query, alongside signals for document freshness, geographical relevance for location-sensitive queries, or authoritativeness of the source. Each of these aspects can be represented by a (potentially binary) label (e.g., ``\texttt{is\_fresh},'' ``\texttt{is\_geo\_relevant}''). Effectively combining these signals is crucial for user satisfaction~\citep{perkio2005multi}.

    \item \textbf{Recommendation Systems:} Modern recommender systems often optimize for multiple objectives simultaneously. For instance, beyond predicting user engagement (e.g., click-through rate, purchase probability), systems may also aim to promote relevance to users' long-term interests, ensure diversity in recommendations to avoid filter bubbles, or uphold fairness considerations across different item providers or content creators~\citep{zheng2022survey}.

    \item \textbf{Computational Advertising:} Ranking advertisements effectively requires balancing predictions from multiple models. Commonly, platforms consider both the predicted click-through rate (CTR) and the predicted conversion rate (CVR) --- the likelihood that a click leads to a desired action (e.g., a sale). The final ranking aims to optimize overall platform revenue or advertiser value, which often involves a synthesis of these distinct predictive labels~\citep{wang2023towards}. 
\end{itemize}
In each of these scenarios, understanding how different aggregation strategies (loss versus label) behave, as explored in our work, can lead to more principled system design. 
The choice of aggregation method can influence how trade-offs between objectives are handled and whether certain objectives inadvertently dominate others.

\section{Illustrative Example of Undesirable Dictatorship in Loss Aggregation}
\label{app:ir_dictatorship_example}

\S\ref{s:linear_scalarization} discusses how the Bayes-optimal scorer for loss aggregation (\cref{lemm:bayes-opt-auc-multi-dist}) weights individual class-probability functions $\eta^{(k)}(x)$ by $a_k / [\pi^{(k)}(1-\pi^{(k)})]$. This means that even with uniform explicit weights $a_k=1$, labels that are more marginally skewed (i.e., $\pi^{(k)}$ is far from 0.5) receive higher effective weighting. We clarify that $\pi^{(k)} = \P(\sY{k}=1)$ is the marginal prevalence of a label, which is distinct from its conditional probability $\eta^{(k)}(x) = \P(\sY{k}=1\mid \sX=x)$ for a specific instance $x$. A label can have a balanced marginal prior ($\pi^{(k)} \approx 0.5$) yet be perfectly deterministic (noise-free) conditionally (e.g., $\eta^{(k)}(x) \in \{0,1\}$).

\cref{prop:dictatorship} formalizes the problem whereby one label can become a ``dictator'', i.e., the ranking overly favors it. 
Below is an illustrative example from information retrieval where such dictatorship, driven by marginal skewness, would be undesirable.

Consider ranking documents based on two binary, conditionally deterministic (noise-free, $\eta^{(k)}(x) \in \{0,1\}$) labels:
\begin{itemize}
    \item $\mathsf{Y}^{(1)}$: ``\texttt{is\_relevant}'' (core topical relevance to the user's query).
    \item $\mathsf{Y}^{(2)}$: ``\texttt{is\_recent}'' (document published in the last 24 hours).
\end{itemize}
Assume the following marginal priors in the document collection for a typical query:
\begin{itemize}
    \item For ``\texttt{is\_relevant}'' ($\mathsf{Y}^{(1)}$): A good number of documents are relevant, and many are not. Let $\pi^{(1)} = 0.4$. Thus, $\pi^{(1)}(1-\pi^{(1)}) = 0.4 \times 0.6 = 0.24$.
    \item For ``\texttt{is\_recent}'' ($\mathsf{Y}^{(2)}$): Very few documents are extremely recent. Let $\pi^{(2)} = 0.01$. Thus, $\pi^{(2)}(1-\pi^{(2)}) = 0.01 \times 0.99 = 0.0099$.
\end{itemize}
If we use loss aggregation with uniform explicit weights $a_1=1$ and $a_2=1$, the $\alpha^{(k)}$ terms from \cref{prop:dictatorship} become:
\begin{itemize}
    \item $\alpha^{(1)} = a_1 / [\pi^{(1)}(1-\pi^{(1)})] = 1 / 0.24 \approx 4.17$.
    \item $\alpha^{(2)} = a_2 / [\pi^{(2)}(1-\pi^{(2)})] = 1 / 0.0099 \approx 101.01$.
\end{itemize}
Since $\alpha^{(2)} \gg \alpha^{(1)}$, according to \cref{prop:dictatorship}, the ``\texttt{is\_recent}'' label ($\mathsf{Y}^{(2)}$) will dictate the ranking. The system will primarily rank documents based on whether they are extremely recent. An irrelevant but very recent document would likely be ranked above a highly relevant document that is not from the last 24 hours.

This outcome is undesirable if the user's primary goal is to find relevant information, with recency being a secondary, tie-breaking, or less critical factor. The loss aggregation objective, in this case, inadvertently prioritizes the ``\texttt{is\_recent}'' signal heavily, not because its conditional signal is necessarily stronger or more important for the user's core task, but because its marginal distribution is highly skewed. The system optimizes for a property of the label's distribution in the dataset (its rarity) rather than strictly adhering to an equal consideration of the (equally clean) conditional signals for relevance and recency. This highlights why relying on marginal priors for weighting can be problematic. Label aggregation (\cref{lemm:bayes-opt-auc-label-agg}, with appropriate costs) would score by $\eta^{(1)}(x) + \eta^{(2)}(x)$, giving equal intrinsic weight to the conditional signals from $\mathsf{Y}^{(1)}$ and $\mathsf{Y}^{(2)}$ regardless of their marginal priors.

\section{Empirical details}

In this section we provide details to empirical experiments in the paper. 

In Table~\ref{table:prompts_msmarco} we summarize the prompts used to obtain relevance and clicks labels for the MS MARCO examples shown in Tables~\ref{table:example_msmarco1}.

\begin{table*}
  \centering
    \begin{tabular}{p{0.15\linewidth} p{0.80\linewidth}}
    \toprule
    {\textbf{Label}} & {\textbf{Input prompt}}\\
    \midrule
    Engagement &Imagine you are an information retrieval expert.
        Your aim is to judge whether the following document will be clicked if it is shown for the following query.

    Query: \{query\}\\
    & Document: \{document\}\\

    & Do you predict this document will be clicked for this query?
    Output only yes or no and nothing else.\\ 
    \midrule
    Relevance & Imagine you are an information retrieval expert.
        Your aim is to judge whether the following document will be considered relevant by a normal user if it is shown for the following query.

    Query: \{query\}\\
    & Document: \{document\}\\

    & Do you predict this document will be considered relevant by a normal user for this query?
    Output only yes or no and nothing else.\\
    \bottomrule
    \end{tabular}
    \caption{Prompts used for predicting engagement and relevance of different documents per query in MS MARCO dataset~\cite{bajaj2018msmarcohumangenerated}. We select documents according to the original order from the MS MARCO dataset.}
    \label{table:prompts_msmarco}
\end{table*}

\subsection{Synthetic data generation for the exhaustive enumeration of the optimal solutions}
\label{s:synth_details_brute_force}
We begin with 
a synthetic experiment with a goal of verifying our theory pertaining to maximizers of the loss aggregation objectives.
We generate a synthetic dataset, 
and 
compute the optimal solutions via brute force evaluation;

We consider a two dimensional Gaussian distribution with zero mean and a uniformly sampled covariance matrix $\Sigma$.
We sample $N$ two-dimensional samples and consider the first dimension to correspond to $y_1(x)$ (clicks) and the second dimension to $y_2(x)$ (relevance).
We make the two scores bi-level by thresholding the numbers at $0$.

We aim to consider an exhaustive set of hypotheses such that it is possible to conduct a brute force evaluation of different objectives. To this end, we consider the following hypothesis class $f$.
We consider the image of $f$ to be $[P] \defEq \{ 1, 2, \ldots, P \}$. 
For each sample $x$ and the corresponding relevance scores $y_1(x)$ and $y_2(x)$, if both relevance scores agree, we set $f(x)$ to correspond to $P \cdot y_1(x)$ (i.e., the maximum value if both scores equal $1$, and $0$ if both scores equal $0$).
For all remaining $x$ (where $y_1(x) \neq y_2(x)$), we consider all assignments of $f$ from $[P]$.
This way, if $y_1(x) \neq y_2(x)$ on $M$ samples, the hypothesis class $\hat{\mathscr{Y}}$ for $f$ consists of $P^M$ hypotheses.

We enumerate different values for the $\alpha_1$ and $\alpha_2$ weights in the loss aggregation objective from a cartesian product $[5] \times [5]$, and for each pair of values, we exhaustively enumerate all $f \in \hat{\mathscr{Y}}$ and calculate: the loss aggregation objective $\sum_{k\in [K]} a_k \auc(f; y_k)$ and both of the per scoring function objectives $\auc(f; y_1)$ and $\auc(f; y_2)$.
We then do the same for label aggregation and label product objectives.

In the end, we plot the maximizers of loss and label aggregation in Figure~\ref{tbl:pareto_fronts}, and confirm the following characterizations of the optimal solutions:
\begin{enumerate}[label=(\alph*),itemsep=0pt,topsep=0pt,leftmargin=16pt]
    \item $\mathscr{Y}^*_{\rm LP}$ consists of all assignments of scores from $\{0, 1, 2\}$ to the disagreeing examples (they can disagree in any way, and only need to be smaller than $3$).
    \item $\mathscr{Y}^*_{{\rm LoA}, <}$ consists of only a single assignment to $f$ to the disagreeing examples such that: $f$ for examples with $y_1(x)>y_2(x)$ is $1$ and otherwise it is $2$. 
    \item $\mathscr{Y}^*_{\rm LoA, >}$ consists of only a single assignment to $f$ to the disagreeing examples such that: $f$ for examples with $y_1(x)>y_2(x)$ is $2$ and otherwise it is $1$. 
    \item $\mathscr{Y}^*_{\rm LoA, =}$ consists of all assignments to $f$ to the disagreeing examples such that any relation is satisfied between the two sets of disagreeing examples. This case is equivalent to LA.
\end{enumerate}

\begin{figure}[!t]
\centering
        \resizebox{0.7\linewidth}{!}{
    \includegraphics[scale=0.3]{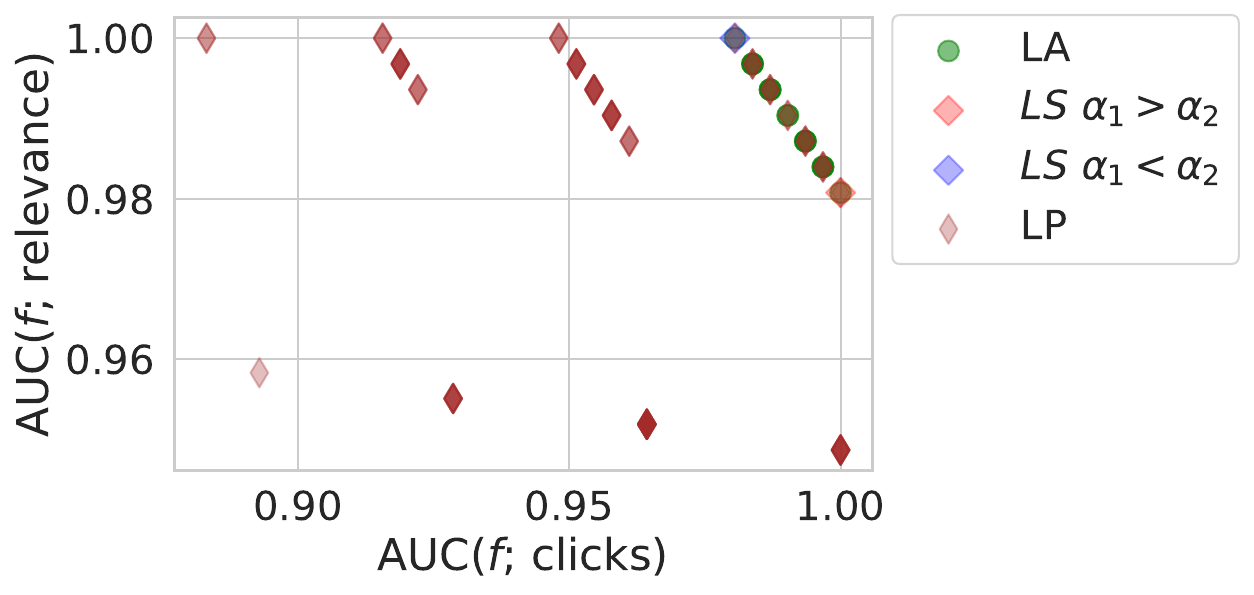}
    }
    \caption{AUC metrics for Bayes-optimal solutions according to different objectives.
    We confirm the following relations among the per-objective optimal solutions sets: 
$
\mathscr{Y}^*_{{\rm LoA}, >} \subset \mathscr{Y}^*_{{\rm LoA}, =} = \mathscr{Y}^*_{\rm LaA}  \subset \mathscr{Y}^*_{\rm LP}
$
and
$
\mathscr{Y}^*_{{\rm LoA}, <} \subset \mathscr{Y}^*_{{\rm LoA}, =} = \mathscr{Y}^*_{\rm LaA}  \subset \mathscr{Y}^*_{\rm LP}
$. Notice the Pareto front is given by a linear function spanned by the solutions corresponding to $
\mathscr{Y}^*_{{\rm LoA}, <}$ and $
\mathscr{Y}^*_{{\rm LoA}, >}$. 
    }
    \label{tbl:pareto_fronts}
\end{figure}

\subsection{Details on AUC optimization}
\label{s:auc_optimization}
The indicator function in the AUC definition~\eqref{eqn:auc-bp-conditional} makes it non-differentiable, and thus, direct AUC optimization is challenging.
To mitigate this, previous works propose relaxation of the indicator function with different surrogate functions, including the hinge loss function, the logistic loss function and other choices~\citep{sun2023enhancing, tang2022smooth}.
Thus, for a surrogate function $\phi$, we arrive at the following formulation:

$$\text{AUC}^{\phi}( f; D ) \,=\, \mathbb{E}_{\mathsf{X}^+ \mid \mathsf{Y} = 1}\mathbb{E}_{\mathsf{X}^- \mid \mathsf{Y} = 0}\Big[\phi( f(\mathsf{X}^+) - f(\mathsf{X}^-) ) \Big]$$

In our work, unless otherwise stated, we assume a logistic surrogate function $\phi$. %

\subsection{Details on synthetic training experiments}
\label{s:synth_training_experiments}
We  draw uniform two-dimensional random vectors $w_1$, $w_2$ over $[-1,1]^2$ and calculate label distributions
using a sigmoid linear model, with $\eta^{(1)}(x) = \sigma(\alpha \cdot w_1^\top x)$ and $\eta^{(2)}(x) = \sigma(\tau \cdot w_2^\top x)$, where $\sigma(z) = \frac{1}{1+\exp(-z)}$ is the sigmoid function.
We then sample labels $y_1(x)$ and $y_2(x)$ uniformly from the distributions $\eta^{(1)}(x)$ and $\eta^{(2)}(x)$.

We train a 3-layer MLP model with hidden dimension 256 and ReLU activation over 8K examples for 50 epochs. 
We evaluate on held out 2K examples.

\subsection{Additional results}
For completeness, in Figure~\ref{fig:pi2_sweep_appendix} we report metrics for per-signal AUC corresponding to results in Figure~\ref{fig:pi2_sweep}.

\begin{figure*}[!t]
    \centering
    
    {\includegraphics[width=0.8\textwidth]{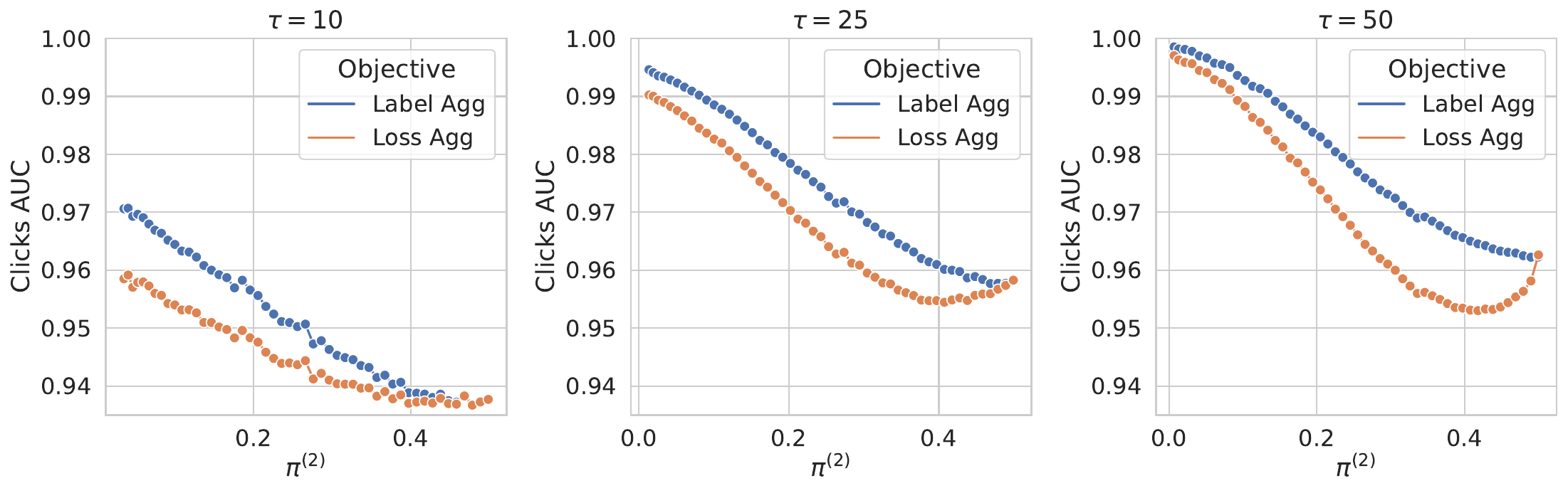}}
    {\includegraphics[width=0.8\textwidth]{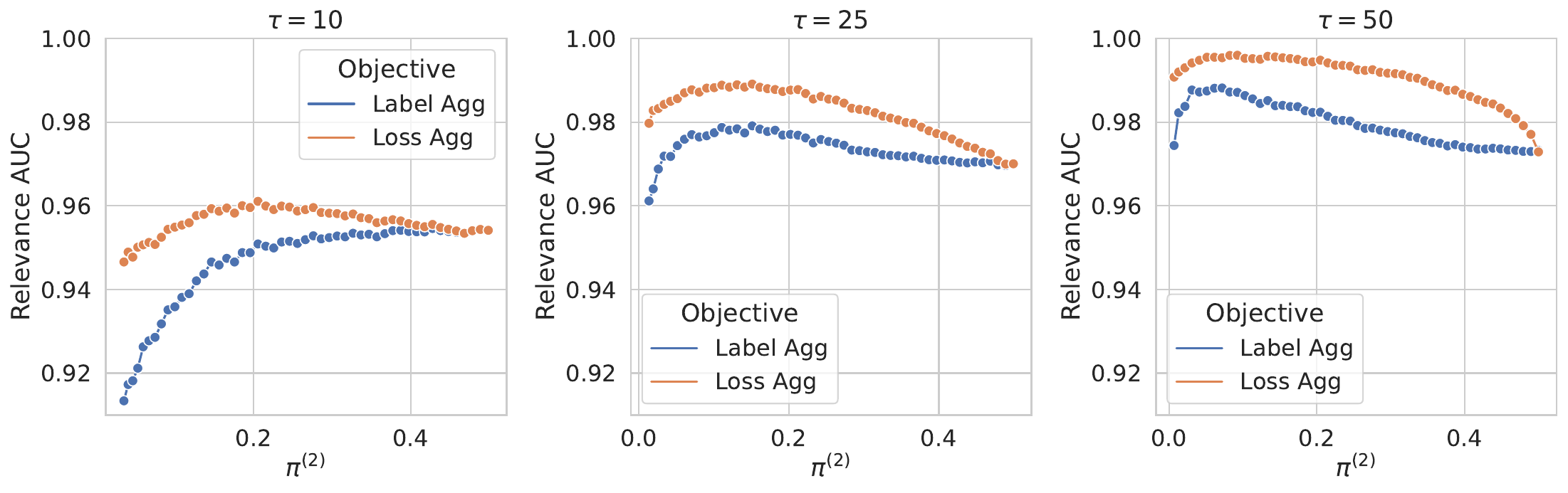}}
    \vspace{-0.3cm}
    \caption{Plots of $\text{AUC}(\cdot; D^{(1)})$ (i.e., clicks) and $\text{AUC}(\cdot; D^{(2)})$ (i.e., relevance) for optimal scorers  as a function of skewness in label $\sY{2}$, corresponding to to results in Figure~\ref{fig:pi2_sweep}. We compare label aggregation and loss aggregation on a synthetic dataset. We fix $\pi^{(1)} = \mathbf{P}(\mathsf{Y}^{(1)}=1) = 0.5$ and vary $\pi^{(2)} = \mathbf{P}(\mathsf{Y}^{(2)}=1)$. The sigmoid scaling parameter $\tau$ controls how close the label distribution is to a deterministic distribution (the larger $\tau$, the closer it is to a deterministic distribution). We find loss aggregation to lead to a higher difference between per-label AUC metrics.\label{fig:pi2_sweep_appendix}}
\end{figure*}

\subsection{Additional synthetic experiments}
\begin{table}[!t]
    \centering
    \ifarxivversion
        \begin{tabular}{@{}lccccc@{}}
            \toprule
            \toprule
            \textbf{Objective} & \textbf{$\pi^{(2)}=0.5$} & \textbf{$\pi^{(2)}=0.6$} & \textbf{$\pi^{(2)}=0.7$} & \textbf{$\pi^{(2)}=0.8$} & \textbf{$\pi^{(2)}=0.9$}\\ %
            \toprule
                AUC($y_1$) & 0.53	&0.44&	0.52&	0.56	&0.50\\
                \midrule
                
                 AUC($y_2$) &0.49	&0.46&	0.50&	0.54	&0.45\\
                \midrule
                
                AUC$_{\rm LaA}$  &0.06&	\best{0.04}&	\best{0.08}	&\best{0.07}&	\best{0.07} \\
                
                \midrule
                AUC$_{\rm LoA}^{(1, 1)}$  & \best{0.05}	&0.13&	0.09&	0.11&	0.10\\
                \bottomrule
        \end{tabular}%
    \else
        \renewcommand{\arraystretch}{1}
        \resizebox{0.5\linewidth}{!}{%
        \begin{tabular}{@{}lccccc@{}}
            \toprule
            \toprule
            \textbf{Objective} & \textbf{$\pi^{(2)}=0.5$} & \textbf{$\pi^{(2)}=0.6$} & \textbf{$\pi^{(2)}=0.7$} & \textbf{$\pi^{(2)}=0.8$} & \textbf{$\pi^{(2)}=0.9$}\\ %
            \toprule
                AUC($y_1$) & 0.53	&0.44&	0.52&	0.56	&0.50\\
                \midrule
                
                 AUC($y_2$) &0.49	&0.46&	0.50&	0.54	&0.45\\
                \midrule
                
                AUC$_{\rm LaA}$  &0.06&	\best{0.04}&	\best{0.08}	&\best{0.07}&	\best{0.07} \\
                
                \midrule
                AUC$_{\rm LoA}^{(1, 1)}$  & \best{0.05}	&0.13&	0.09&	0.11&	0.10\\
                \bottomrule
        \end{tabular}%
        }
    \fi
    \caption{$|\text{AUC}(y_1) - \text{AUC}(y_2)|$ on the synthetic dataset from MLP model training. Each column value $(0.5, \ldots, 0.9)$ corresponds to the marginal probability for the second label, $\mathbf{P}(y_2 = 1)$, while the marginal probability for the first label is fixed, $\mathbf{P}(y_1 = 1)=0.5$. We find label aggregation yields the lowest difference between per-label AUC metrics. Here, AUC$_{\rm LaA}$ = AUC($y_1$ + $y_2$) and AUC$_{\rm LoA}^{(1, 1)}$ = AUC($y_1$) + AUC($y_2$).  
    }
    \label{tbl:synth_train}
\end{table}
We generate instances $x \in \mathbb{R}^2$ from a uniform distribution over $[-1,1]^2$. 
We next draw uniform random vectors $\mu_1$, $\mu_2$, $\mu_3$, $\mu_4$ over $[0,1]^2$
controlling the skewness label distributions $\eta^{(1)}(x)$ and $\eta^{(2)}(x)$ (see Appendix~\ref{s:synth_training_experiments} for details of the data generation process).
Depending on the sampled values, we end up with different marginal skewness of the distributions. 
We fix the marginal $\pi^{(1)}=0.5$ and vary $\pi^{(2)}$ to control the effect label skewness on the performance of different techniques.
We train a multi-layer perceptron model 
with
varying level of skewness for the second label, while keeping the first label balanced.
We report the results in Table~\ref{tbl:synth_train}. 
We find that the difference between the per-label AUC is the smallest for label aggregation when the distribution for the second label is imbalanced.
Thus, label aggregation better balances between the two objectives, 
whereas loss aggregation tends to favor one over the other.

\end{document}